\DeclareMathOperator*{\argmin}{arg\,min}
\DeclareMathOperator*{\R}{\mathbb{R}}
\DeclareMathOperator{\E}{\mathbb{E}}
\DeclareMathOperator*{\cO}{\mathcal{O}}
\DeclareMathOperator*{\cS}{\mathcal{S}}
\DeclareMathOperator*{\cA}{\mathcal{A}}
\DeclareMathOperator{\cV}{\mathcal{V}}
\DeclareMathOperator{\cW}{\mathcal{W}}
\DeclareMathOperator{\bias}{\beta_g}
\newtheorem{assumption}{Assumption}
\newtheorem{definition}{Definition}
\newtheorem{theorem}{Theorem}
\newtheorem{lemma}{Lemma}
\newtheorem{remark}{Remark}
\newcommand{\norm}[1]{\left \lVert #1 \right\rVert }
\newcommand{\sqnorm}[1]{\left \lVert #1 \right\rVert^2 }
\newcommand{\rb}[1]{\left ( #1 \right ) }
\newcommand{\cb}[1]{\left \{ #1 \right \} }
\begin{document}

%
\runningtitle{Order-Optimal Regret with Novel Policy Gradient Approaches}

%

\twocolumn[

\aistatstitle{Order-Optimal Regret with Novel Policy Gradient Approaches in Infinite-Horizon Average Reward MDPs}

\aistatsauthor{ Swetha Ganesh \And Washim Uddin Mondal \And  Vaneet Aggarwal }

\aistatsaddress{ Indian Institute of Science \\ Purdue University  \And  Indian Institute of Technology,\\ Kanpur \And Purdue University} ]

\begin{abstract}
      We present two Policy Gradient-based algorithms with general parametrization in the context of infinite-horizon average reward Markov Decision Process (MDP). The first one employs Implicit Gradient Transport for variance reduction, ensuring an expected regret of the order $\tilde{\mathcal{O}}(T^{2/3})$. The second approach, rooted in Hessian-based techniques, ensures an expected regret of the order $\tilde{\mathcal{O}}(\sqrt{T})$. These results significantly improve the state-of-the-art $\tilde{\mathcal{O}}(T^{3/4})$ regret and achieve the theoretical lower bound. We also show that the average-reward function is approximately $L$-smooth, a result that was previously assumed in earlier works.
\end{abstract}

\section{INTRODUCTION}\label{sec:intro}


Reinforcement Learning (RL) encompasses a set of challenges where a learner interacts iteratively with an unknown Markovian environment, aiming to maximize the total rewards earned. This framework finds application in various domains, such as networking, transportation, queueing theory, and epidemic control \citep{geng2020multi,al2019deeppool,agarwal2022concave,ling2023cooperating}. RL problems are commonly analyzed under three major frameworks: episodic, infinite-horizon with discounted rewards, and infinite-horizon with average rewards. Among these, the infinite-horizon average reward setup holds particular importance due to its ability to capture long-term goals in practical situations. To address these challenges, both model-free and model-based approaches have been proposed in the literature.

While model-based approaches have been extensively investigated for the average reward setup \citep{auer2008near,agrawal2017optimistic, wei2021learning}, they, unfortunately, require a large memory to store model parameters, limiting their practical applicability in large state space scenarios. Additionally, although model-free algorithms have been studied primarily in the tabular setups \citep{wei2020model}, they require all the policies to be represented in a tabular format, which creates a barrier in efficiently handling large state spaces. One way to tackle the issue of large state space is via general policy parametrization, where policies are indexed by a $\mathrm{d}\ll SA$ dimensional parameter where $S$, and $A$ denote sizes of the state and action spaces of the underlying Markov Decision Process (MDP). This approach has been explored recently for the average reward setup \citep{bai2023regret}. However, their proposed algorithm yields an expected regret of $\Tilde{\mathcal{O}}(T^{3/4})$, which is far from the theoretical lower bound of $\Omega(\sqrt{T})$. This prompts the following question:

\fbox{\begin{minipage}{23em}
{{\em Is it possible to improve the state-of-the-art regret performance and attain the lower bound of $\Tilde{\mathcal{O}}(\sqrt{T})$ in the average-reward setup with general policy parametrization?}}\end{minipage}}

In this paper, we affirmatively address the above question by presenting two algorithms. The first one is a Policy Gradient-based approach that employs implicit gradient transport for variance reduction. Notably, this algorithm does not require any second-order information and achieves an expected regret of $\Tilde{\mathcal{O}}(T^{2/3})$. The second algorithm uses a Hessian-based technique within the policy gradient framework to achieve an expected regret bound of $\Tilde{\mathcal{O}}(\sqrt{T})$. This bridges the gap between the upper and lower bounds of regret for the average reward setup in parametrized scenarios in terms of $T$.


\subsection{Related Works}
\label{sec:related-works}

\begin{table}[ht]
	\centering
	\resizebox{0.5\textwidth}{!}{
		\begin{tabular}{|c|c|c|c|}
			\hline
			\textbf{Algorithm} & \textbf{Regret}  & \textbf{Policy Parametrization}\\
			\hline
			MDP-OOMD \citep{wei2020model} & $\Tilde{O}\bigg(\sqrt{T}\bigg)$  & Tabular\\
			\hline
           POLITEX \citep{pmlr-v97-lazic19a} & $\Tilde{O}\bigg(T^{3/4}\bigg)$  & Tabular \\
			\hline
			MDP-EXP2 \citep{wei2021learning} & $\Tilde{O}\bigg(\sqrt{T}\bigg)$  & Tabular \\
			\hline
   GBPA \citep{murthy2023performance} & $\Tilde{O}\bigg(T^{2/3}\bigg)$  &  Tabular \\ \hline
   Parametrized Policy Gradient \citep{bai2023regret} & $\Tilde{O}\bigg(T^{3/4}\bigg)$  &  General \\
			\hline
			\hline
			\rowcolor{green!25} Algorithm \ref{alg:PG_IGT_Avg} (This Paper) & $\Tilde{O}\bigg(T^{2/3}\bigg)$   & General \\
			\hline
			\rowcolor{green!25} Algorithm \ref{alg:PG_Hessian_Avg} (This Paper) & $\Tilde{O}\bigg(\sqrt{T}\bigg)$   & General \\
			\hline\hline
			\textbf{Lower bound} \citep{auer2008near} &       $\Omega\bigg(\sqrt{T}\bigg)$ & N/A\\
			\hline
		\end{tabular}
	}
	\caption{This table summarizes policy-based algorithms from the literature for infinite-horizon average reward MDPs. In the tabular setting, the function approximation error for the policy is zero.}
	\label{table1}
\end{table}

\textbf{Policy Gradient Algorithms in Discounted Reward MDPs:} Recent works on policy gradient-based algorithms have primarily focused on discounted reward MDPs. For instance, \citet{mondal2023improved} and \citet{fatkhullin2023stochastic} provide approaches achieving a sample complexity of $\Tilde{\mathcal{O}}(\epsilon^{-2})$ for general parameterized policies. More specifically, \citet{mondal2023improved} combines Accelerated Stochastic Gradient Descent with NPG to obtain the above-mentioned sample complexity bound. Whereas \citet{fatkhullin2023stochastic} uses (N)-HARPG, a recursive variance-reduction technique that uses Hessian estimates, to obtain their sample complexity bounds. Moreover, they also propose N-PG-IGT, which combines Policy Gradient (PG) with Implicit Gradient Transport (IGT), and show that this algorithm obtains a sample complexity of $\cO(\epsilon^{-2.5})$. Our approach is inspired by these two methods. However, our analysis deviates considerably for several reasons, including the notably different properties of the (biased) policy gradient and Hessian estimates, which emerge due to the lack of access to simulators to restart the MDP (more details in Section \ref{sec:technical-novelty}).

\textbf{Tabular Model-Based/Model-Free Average Reward MDPs:} 
One of the first bounds in this setting was established in \citet{auer2008near}, which introduced the model-based UCRL2 algorithm, achieving a regret of \(\tilde{\mathcal{O}}(\sqrt{T})\). However, model-based approaches become computationally challenging in large state spaces.  

In the realm of model-free algorithms, one approach is purely value-based, where the algorithm learns value functions and extracts the optimal policy only using the learned functions. This approach was adopted in \citet{zhang2023sharper}, achieving an optimal regret bound of \(\mathcal{O}(\sqrt{T})\) for weakly-communicating MDPs. However, such methods often struggle with scalability in large state-action spaces, as they require maximizing the \(Q\)-function over the entire action space at each iteration. In contrast, policy gradient methods are more suitable for large state-action spaces and are widely used in practice.  

Another class of methods updates policies using a learned value function, avoiding action-space maximization at each step \citep{wei2020model,wei2022provably,pmlr-v97-lazic19a,murthy2023performance}. However, these methods assume tabular policy parameterization. More specifically, the transferred function approximation error in \eqref{eq:transfer_error}, and consequently, \(\epsilon_{\mathrm{bias}}\) in Assumption \ref{assump:function_approx_error}, is zero. \citep{wei2020model} introduced an online mirror descent algorithm achieving \(\tilde{\mathcal{O}}(\sqrt{T})\) regret for ergodic MDPs. They also showed that optimistic-Q learning attains \(\tilde{\mathcal{O}}(T^{2/3})\) regret in weakly communicating MDPs, though their approach relies on tabular value function approximation. This assumption was later relaxed to linear value function approximation \citep{pmlr-v97-lazic19a,wei2021learning}. In \citet{murthy2023performance}, policy parameterizations beyond softmax were explored, achieving \(\mathcal{O}(T^{2/3})\) regret. However, their approach still requires tabular policies and value functions.  

In contrast, our paper develops efficient algorithms for MDPs with general parameterized \textit{policies}. Notably, analyzing the general parameterization case is not a straightforward extension of the tabular setting with an added function approximation error; instead, it introduces fundamental differences. One key distinction stems from the additional monotonicity property that ensures per-step improvements in the tabular case, a property that does not naturally extend to the function approximation setting \citep{agarwal2021theory}.

{\bf Parametrized Policies in Average Reward Setup:} It is worth stating that the policy gradient-based approaches with general parametrization are scalable and useful in multiple applications. Although most of the algorithms with general parametrization focus on the discounted reward setup, our work aims to solve the average reward MDPs. The first regret analysis within this framework was undertaken in \citep{bai2023regret}. Using a policy-gradient-based approach, they established a regret bound of $\Tilde{\mathcal{O}}(T^{3/4})$. In another recent study within a similar framework, a global convergence bound of $\Tilde{\mathcal{O}}(T^{-1/4})$ was achieved employing a Multi-level Monte Carlo approach \citep{patel2024global}. However, these works assume that the average reward function, $J$, is $L$-smooth. In this paper, we enhance these findings by presenting two algorithms: one utilizing second-order information and one without. Our results demonstrate improvements over prior work, with the second-order information algorithm achieving the optimal regret order. Table \ref{table1} summarizes the key results and comparisons.

\subsection{Technical Novelty and Contributions}
\label{sec:technical-novelty}


{\bf Difference from Discounted Setup Results: } Infinite-horizon discounted MDPs with general policy parametrization have been extensively studied in the literature \citep{mondal2023improved, fatkhullin2023stochastic, liu2020improved}. However, despite being more aligned toward real-world scenarios, the average-reward setting is less investigated due to several unique challenges. For example, in average-reward MDPs, crucial quantities such as the value function, advantage function, and policy gradient estimator can become unbounded. In contrast, in the discounted case, these quantities are bounded, with the bounds scaling with powers of \(\frac{1}{1-\gamma}\) (where \(\gamma\) is the discount factor, which equals 1 in the average reward setup). Additionally, most works in the discounted setup assume access to a generative model (simulator), which allows them to restart the state distribution at will, a practice we avoid due to its impracticality. Such avoidance also makes the biases of the policy gradient and Hessian estimators non-zero, further complicating the analysis.


{\bf Technical Novelty: } We note that this work improves the state of the art for average reward MDP, by proposing two algorithms and analyzing their regret with the general policy parametrization. The key technical novelties in this work are summarized as follows.

\begin{itemize}[left=5pt, labelwidth=5pt, labelsep=5pt, itemindent=2\parindent, itemsep=0pt, parsep=0pt, topsep=0pt, partopsep=0pt]

\item {\bf Showing the smoothness property of average-reward function: } It is well-known that the average reward function, \( J \), is \( L \)-smooth in the discounted MDP \citep{liu2020improved}. This implies that the Hessian of \( J \) is bounded by \( L \). The proof is straightforward in this context, as the Hessian can be calculated using the policy gradient estimator and its derivative, both of which are bounded. However, as stated before, this does not hold in the average reward case, and prior works have treated this as an unverifiable assumption \citep{bai2023regret}. We address this issue by proving an approximate \( L \)-smoothness result by identifying a function \( \Bar{J} \) that is \( L \)-smooth and works as a close approximation to \( J \) (see Section \ref{subsec:aux-funcs}). We note that while \citet{kumar2024global} provides a smoothness result for average reward MDPs, it considers the tabular setup which uses a substantially different analysis. They show smoothness of $J$ with respect to the policy itself, while we show smoothness with respect to the policy parameter $\theta$. As a result, their analysis crucially relies on the linearity of the policy, reward and transition kernel $P$ with respect to the parameter $\alpha$ (see (13) and (14) of \citet{kumar2024global}).

\item {\bf Efficient Bounds on Gradient Estimator: } In prior research on policy gradient methods in the average-reward parameterized framework, it was observed that even with the exact advantage function \( A^{\pi_{\theta}}(s, a) \), the trajectory sampling distribution introduced an error term proportional to \( \frac{1}{T^{1/4}} \). This error would persist despite employing variance-reduction techniques, thereby hindering the achievement of \( \cO(\sqrt{T}) \) regret. We address this issue by adopting a simple approach that estimates the advantage and the policy gradient at a slightly later point along the trajectory, effectively diminishing this error term without impairing the convergence rate and, consequently, the regret.

\item {\bf Efficient Variance Reduction Analysis in the Average Reward Setup: } In the analysis of variance-reduced methods for discounted settings, the gradient and Hessian estimators are typically unbiased. However, this does not hold in the average reward case. Note that bias accumulation can hinder convergence, so one ought to minimize it for faster convergence. Unfortunately, minimizing the bias of the gradient and Hessian estimators is a difficult endeavor. The approximate function, \(\Bar{J}\) (stated above), helps us in this regard by allowing us to construct its unbiased gradient and Hessian estimators (see Section \ref{subsec:aux-funcs}) which are close to their actual values.


\end{itemize}

\if 0
Understanding the Hessian is pivotal, even for algorithms that do not explicitly utilize Hessian information. Take, for instance, most analyses in discounted and average-reward scenarios, which necessitate that the average-reward function, denoted as $J$, is $L$-smooth. This ensures that $J$ does not suffer from abrupt changes and is equivalent to the Hessian of $J$ being bounded by $L$. While this condition is well-known to hold for the discounted setting with a general parametrization, there exists no such guarantee for the average-reward scenario. As a result, prior studies resort to assuming this condition. Recent research has only demonstrated this in the context of tabular parametrization, which greatly simplifies analysis due to the linearity of the policy, reward, and probability transition matrix.

Aside from the difficulties in establishing this fundamental property, various crucial aspects of the policy gradient and Hessian estimators fail to carry over from the discounted case to the average-reward setting. For instance, the policy gradient can take arbitrarily large values, instead of being absolutely bounded. Moreover, different sources of bias emerge in the policy gradient estimate, absent in the discounted scenario. The task of estimating and analyzing Hessian estimates presents an even greater challenge, as ensuring a small bias proves elusive. In the discounted framework, it is known that the bias of the Hessian estimate diminishes exponentially with $\gamma$ \citet{masiha}, yet no equivalent finding exists for the average-reward scenario. Additionally, beyond these disparities, analysis in discounted settings assume restarts at each iteration, a convention impractical in real-world applications.

To address these issues, we adopt a novel approach where we construct a function $\Bar{J}$ with certain desirable properties while closely approximating the true value function $J$. This construction allows us to seamlessly handle the above mentioned challenges, which would not be possible otherwise. {\bf \color{red} Where to see details?}
\fi 


{\bf Key Contributions}: We provide two Policy Gradient-based approaches with general parametrization and provide regret guarantees under the assumption of an ergodic MDP. More specifically,
\begin{itemize}[left=5pt, labelwidth=5pt, labelsep=5pt, itemindent=2\parindent, itemsep=0pt, parsep=0pt, topsep=0pt, partopsep=0pt]
\item We show that our first approach (Algorithm \ref{alg:PG_IGT_Avg}) has an expected regret of $\tilde{\mathcal{O}}(T^{2/3})$. This method utilizes implicit gradient transport for variance reduction without requiring importance sampling or curvature information (such as Hessian estimates). Moreover, this algorithm only samples a single trajectory per iteration.

\item Our second approach (Algorithm \ref{alg:PG_Hessian_Avg}) uses a Hessian-based approach to obtain an expected regret of $\tilde{\mathcal{O}}(\sqrt{T})$, which is optimal in $T$. Although this algorithm uses Hessian estimates, the required updates can be computed with memory and computational complexities that are similar to that of Hessian-free methods. 

\item Additionally, we provide the first approximate smoothness-type result for the average-reward function using our novel Hessian-based analysis. 
\end{itemize}

For both Algorithms \ref{alg:PG_IGT_Avg} and \ref{alg:PG_Hessian_Avg}, the obtained regret bounds significantly improve the existing state-of-the-art $\Tilde{\cO}(T^{3/4})$ regret of \citet{bai2023regret}.

\section{SETUP}
\label{sec:setup}

In this paper, we explore an infinite-horizon reinforcement learning problem with an average reward criterion, modeled by a Markov Decision Process (MDP) represented as a tuple $\mathcal{M}=(\mathcal{S},\mathcal{A},r, P,\rho)$. Here, $\mathcal{S}$ denotes the state space with size $S$, $\mathcal{A}$ is the action space with a size of $A$, $r:\mathcal{S}\times\mathcal{A}\rightarrow [0,1]$ represents the reward function, $ P:\mathcal{S}\times\mathcal{A}\rightarrow \Delta(\mathcal{S})$ defines the state transition function, where $\Delta(\mathcal{S})$ denotes the probability simplex over $\mathcal{S}$, and $\rho\in \Delta(\mathcal{S})$ signifies the initial distribution of states. A policy $\pi:\mathcal{S}\rightarrow \Delta(\mathcal{A})$ determines the distribution of the action to be taken given the current state. It gives rise to a transition function $P^{\pi}:\mathcal{S}\rightarrow \Delta(\mathcal{S})$ defined as $P^{\pi}(s, s') = \sum_{a\in\mathcal{A}}P(s'|s,a)\pi(a|s)$, for all $s, s'\in\mathcal{S}$. It can be seen that for any given policy $\pi$, the sequence of states produced by the MDP forms a Markov chain. We will assume the following throughout the paper.

\begin{assumption}
    \label{assump:ergodic_mdp}
    The MDP $\mathcal{M}$ is ergodic. In other words, the Markov chain induced by every policy $\pi$ is irreducible and aperiodic.
\end{assumption}

The assumption of ergodicity is frequently employed in the analysis of Markov Decision Processes (MDPs) \citep{pesquerel2022imed, gong2020duality, bai2023regret}. 

We consider a parameterized class of policies $\Pi$, which consists of all policies $\pi_{\theta}$ such that $\theta \in \Theta$, where $\Theta = \R^d$. It is a well-known fact that if $\mathcal{M}$ is ergodic, then for all $\theta\in\Theta$, there exists a unique stationary distribution denoted as $d^{\pi_{\theta}}\in \Delta(\mathcal{S})$, that is independent of the initial distribution $\rho$ and satisfies $(P^{\pi_{\theta}})^\top d^{\pi_{\theta}}=d^{\pi_{\theta}}$. We define the long-term average reward for a given policy $\pi_{\theta}$ as follows:
\begin{align*}
    J_{\rho}^{\pi_{\theta}}&\coloneqq \lim\limits_{T\rightarrow \infty}\frac{1}{T}\E\left[\sum_{t=0}^{T-1}r(s_t,a_t)\bigg|s_0\sim \rho\right] \\
    &= \sum_{s\in \mathcal{S}}\sum_{a\in\mathcal{A}} r(s, a)d^{\pi_\theta}(s)\pi_{\theta}(a|s),
\end{align*}
where the expectation is computed over all state-action trajectories generated by following the action execution process $a_t\sim\pi(\cdot|s_t)$ and the state transition rule $s_{t+1}\sim P(\cdot|s_t, a_t)$ for all $t\in \{0, 1, \cdots\}$. Similar to the stationary distribution, $J_{\rho}^{\pi_{\theta}}$ is also independent of $\rho$. We simplify the notation of $J^{\pi_\theta}$ by expressing it as $J(\theta)$. Our target is to maximize $J(\cdot)$ without any knowledge about the transition function $P$. The maximum value and the corresponding maximizing policy are denoted as $J^*$ and $\pi^*$ respectively.


%


We introduce a few notations before delving into our proposed algorithms to solve this optimization. The state value function $V$ is defined below.
\begin{align}
    \label{def_v_pi_theta_s}
    \begin{split}
        V^{\pi_{\theta}}(s) =\E_{\theta}\left[\sum_{t=0}^{\infty}(r(s_t,a_t)-J(\theta))\bigg\vert s_0=s\right],
    \end{split}
\end{align}
where $r^{\pi_\theta}(s) \coloneqq \sum_{a}r(s, a)\pi_{\theta}(a|s)$, $\forall s\in \mathcal{S}$, $\forall \theta\in \Theta$, and $\E_{\theta}[\cdot]$ is expectation over all trajectories induced by the policy $\pi_{\theta}$. Similarly, $\forall (s, a)\in\mathcal{S}\times \mathcal{A}$, the state-action value function $Q^{\pi_{\theta}}(s, a)$ can be written as shown below.
\begin{equation}
    Q^{\pi_{\theta}}(s,a)=\E_{\theta}\Bigg[\sum_{t=0}^{\infty}(r(s_t,a_t)-J(\theta))\bigg\vert s_0=s,a_0=a\Bigg].
\end{equation}
 We can now define the advantage function $A^{\pi_{\theta}}:\mathcal{S}\times \mathcal{A}\rightarrow \mathbb{R}$ such that $\forall (s, a)\in\mathcal{S}\times\mathcal{A}$, we have,
\begin{align}
    A^{\pi_{\theta}}(s, a) \coloneqq Q^{\pi_{\theta}}(s, a) - V^{\pi_{\theta}}(s).
\end{align}

\begin{algorithm}[ht]
    \caption{Parameterized Policy Gradient with Implicit Gradient Transport}
    \label{alg:PG_IGT_Avg}
    \begin{algorithmic}[1]
        \STATE \textbf{Input:} Initial parameters $\theta_0$ and $\theta_1$, stepsizes $\{\gamma_k\}_{k \geq 1}$, $\{\eta_k\}_{k \geq 1}$, $s_0 \sim \rho(\cdot)$, epoch length $H$, epoch number $K$, $N=7t_{\mathrm{mix}}\log_2T$ \vspace{0.1cm}
        
	\FOR{$k\in\{1, \cdots, K\}$}
            \STATE {$\Tilde{\theta}_k \gets \theta_k + \frac{1-\eta_k}{\eta_k}(\theta_k - \theta_{k-1})$}, ~ $\tilde{\tau}_k\gets \phi$
            
            \FOR{$t\in\{(k-1)H, \cdots, kH-1\}$}
                \STATE Execute $a_t\sim \pi_{\Tilde{\theta}_k}(\cdot|s_t)$, receive reward $r(s_t,a_t) $ and observe $s_{t+1}$
                \STATE $\tilde{\tau}_k\gets \tilde{\tau}_k\cup \{(s_t, a_t)\}$
            \ENDFOR	
            \FOR{$t\in\{(k-1)H+N, \cdots, kH-1\}$}
                \STATE Obtain $\hat{A}^{\pi_{\Tilde{\theta}_k}}(\tilde{\tau}_k, s_t, a_t)$ via Algorithm \ref{alg:estA}
            \ENDFOR
            \vspace{0.1cm}
      
            \STATE {Obtain $g(\tilde{\theta}_k,\tilde{\tau}_k)$ via \eqref{eq:grad_estimate} 
            and $d_k$ via \eqref{eq:dk-IGT}}
           
		\STATE Update policy parameter as
		\begin{equation}
                \label{algorithm-update-IGT}
		    \theta_{k+1}=\theta_k+\gamma_k \frac{d_k}{\|d_k\|}
		\end{equation}
        \ENDFOR
    \end{algorithmic}
\end{algorithm}
It is known that ergodicity also implies the existence of a finite mixing time. Specifically, if $\mathcal{M}$ is ergodic, then the mixing time can be defined as follows.
\begin{definition}
The mixing time of an MDP $\mathcal{M}$ with respect to a  parameter $\theta$ is given as, $t_{\mathrm{mix}}^{\theta}\coloneqq \min\left\lbrace t\geq 1\bigg| \|(P^{\pi_{\theta}})^t(s, \cdot) - d^{\pi_\theta}\|\leq \dfrac{1}{4}, \forall s\in\mathcal{S}\right\rbrace$. We also define $t_{\mathrm{mix}}\coloneqq \sup_{\theta\in\Theta} t^{\theta}_{\mathrm{mix}} $ as the overall mixing time which is finite due to ergodicity.
\end{definition}

The mixing time of an MDP measures how quickly the MDP approaches its stationary distribution when the same policy is executed repeatedly. Additionally, we define the hitting time below.
\begin{definition}
The hitting time of an MDP $\mathcal{M}$ with respect to a policy parameter, $\theta$ is defined as, $t_{\mathrm{hit}}^{\theta}\coloneqq  \max_{s\in\mathcal{S}} \frac{1}{d^{\pi_{\theta}}(s)}$. We also define $t_{\mathrm{hit}}\coloneqq \sup_{\theta\in\Theta} t^{\theta}_{\mathrm{hit}} $ as the overall hitting time which is finite due to ergodicity.
\end{definition}
For a given MDP $\mathcal{M}$ and a time horizon $T$, the regret of an algorithm $\mathbb{A}$ is defined as follows.
\begin{align}
    \mathrm{Reg}_T(\mathbb{A}, \mathcal{M}) \coloneqq \sum_{t=0}^{T-1} \left(J^*-r(s_t, a_t)\right),
\end{align}
where $J^*$ denotes the optimal long-term average reward and $\{a_t\}_{t \geq 0}$, are selected by the algorithm, $\mathbb{A}$, based on the history up to time, $t$. The state at the next time step is obtained by following the state transition function, $P$. We simplify the notation of regret to $\mathrm{Reg}_{T}$ wherever there is no ambiguity. We are now ready to describe our proposed algorithms. 


    \begin{algorithm}[ht]
    \caption{State and state-action value function estimation}
    \label{alg:estA}
    \begin{algorithmic}[1]
        \STATE \textbf{Input:} State $s$, action $a$, policy parameter $\theta$, and trajectory $\tau = \{(s_{t}, a_{t})\}_{t=t_s}^{t_e}$
        \STATE \textbf{Define:} $N=7t_{\mathrm{mix}}\log_2T$
        \STATE \textbf{Initialize:} $i \leftarrow 0$, $\xi\leftarrow t_s$
	\vspace{0.1cm}
	\WHILE{$\xi\leq t_e-N$}
		\IF{$s_{\xi}=s$}
			\STATE $i\leftarrow i+1$,  
            \STATE $\xi_i\gets \xi$
			\STATE $y_i\gets\sum_{t=\xi}^{\xi+N-1}r(s_t, a_t)$.
			\STATE $\xi\leftarrow\xi+2N$.	
		\ELSE
                \STATE {$\xi\leftarrow\xi+1$.}
            \ENDIF
	\ENDWHILE
        \vspace{0.1cm}
        \IF{$i>0$}
            \STATE Obtain $\hat{V}^{\pi_\theta}(\tau, s)$ and $\hat{Q}^{\pi_\theta}(\tau, s, a)$ using Eq. \eqref{eq:V_estimates} and \eqref{eq:Q_estimates}
        \ELSE
            \STATE $\hat{V}^{\pi_\theta}(\tau, s)\gets 0$, ~$\hat{Q}^{\pi_\theta}(\tau, s, a) \gets 0$
        \ENDIF
	\STATE \textbf{return}  $\hat{Q}^{\pi_\theta}(\tau, s, a)$ and $\hat{V}^{\pi_\theta}(\tau, s)$ 
    \end{algorithmic}
\end{algorithm}

\section{PROPOSED ALGORITHMS}
\label{sec:proposed-algo}

This paper proposes two algorithms. The first one (Algorithm \ref{alg:PG_IGT_Avg}) is a PG method that utilizes Implicit Gradient Transport (IGT) for variance reduction. The second one (Algorithm \ref{alg:PG_Hessian_Avg}) applies a Hessian-based approach for variance-reduction. Given a horizon of length $T$, both algorithms run $K=T/H$ number of epochs where $H$ is the duration of each epoch. At the $k$th epoch, both algorithms update the current estimate of the policy parameter, $\theta_k$, along some direction, $d_k$, which is computed as a function of the previous direction, $d_{k-1}$, and the gradient and Hessian estimates of the function $J(\cdot)$ at some appropriate parameter values. Note that the recursive definition of the update direction, $d_k$, is the typical characteristic of a momentum-based algorithm, which contrasts with that of a simple policy gradient algorithm. Below we describe the process to compute gradient and Hessian estimates of $J(\cdot)$ at an arbitrary parameter, $\theta$.

{\bf Gradient Estimation:} The idea of estimating $\nabla_{\theta} J(\theta)$ is mimicking the following result \citep{sutton1999policy}.
\begin{align}
\label{eq:grad-expression}
    \nabla_{\theta} J(\theta) = \E_{s \sim d^{\pi_{\theta}},a \sim \pi_{\theta}(\cdot|s)}[A^{\pi_{\theta}}(s,a) \nabla_{\theta} \log \pi_{\theta}(a|s)].
\end{align}
In particular, if $\tau = \{(s_t, a_t)\}_{t=t_s}^{t_e}$ is a $\pi_\theta$-induced trajectory, and $\hat{A}^{\pi_{\theta}}(\tau, s, a)$ is an estimate of the advantage value $A^{\pi_\theta}(s, a)$ corresponding to $\tau$, $\forall (s, a)$, then the estimate of $\nabla_{\theta}J(\theta)$ corresponding to $\tau$ is the following expression where $|\tau|=t_e-t_s+1$ is the length of $\tau$ and $N=\cO(t_{\mathrm{mix}}\log T)$.
\begin{align}
    \label{eq:grad_estimate}
    g(\theta,\tau) = \dfrac{1}{|\tau|-N}\sum_{t=t_s+N}^{t_e}\hat{A}^{\pi_{\theta}}(\tau, s_{t}, a_{t})\nabla_{\theta}\log \pi_{\theta}(a_{t}|s_{t}),
\end{align}
Note that the first $N$ elements of $\tau$ are discarded to compute the gradient estimate. This is to ensure that the remaining trajectory's initial state distribution is close to the stationary one. It also guarantees that the estimates obtained at different epochs are fairly independent of each other. Algorithm \ref{alg:estA} gives a procedure to calculate the estimate $\hat{A}^{\pi_\theta}(\tau, s, a)$. In particular, for a $\pi_\theta$-induced trajectory $\tau=\{(s_t, a_t)\}_{t=t_s}^{t_{e}}$, Algorithm \ref{alg:estA} first locates its disjoint sub-trajectories of length $N$ that start with a given state $s$ and are at least $N$ step apart from each other. This guarantees that the estimates from each sub-trajectory are fairly independent. If $i$ indicates the number of such sub-trajectories, and the sum of rewards and the initial time of the $j$th such sub-trajectory are $y_j$ and $\xi_j$, respectively, then the estimates $\hat{V}^{\pi_{\theta}}(\tau, s)$ and $\hat{Q}^{\pi_{\theta}}(\tau, s, a)$ are given as follows for $i>0$. 
\begin{align}
\label{eq:V_estimates}
    \hat{V}^{\pi_{\theta}}(\tau, s) = \frac{1}{i}\sum_{j=1}^i y_j  
\end{align}
and
\begin{align}
\label{eq:Q_estimates}
\hat{Q}^{\pi_{\theta}}(\tau, s, a) = \frac{1}{i\pi_{\theta}(a|s)}\sum_{j=1}^i y_j \mathbf{1}(a_{\xi_j}=a),
\end{align}
where $\mathbf{1}(\cdot)$ is an indicator function. For $i=0$, each estimate is taken to be zero. These expressions help obtain the advantage estimate as $\hat{A}^{\pi_{\theta}}(\tau, s, a)= \hat{Q}^{\pi_{\theta}}(\tau, s, a)-\hat{V}^{\pi_{\theta}}(\tau, s)$, $\forall (s, a)$.

{\bf Hessian Estimation:} Let $\tau=\{(s_t, a_t)\}_{t=t_s}^{t_e}$ be a $\pi_\theta$-induced trajectory. Define the following. 
\begin{align}
\label{eq:def_Phi}
    &\Phi(\theta,\tau)\nonumber\\ &\coloneqq \frac{1}{|\tau|-N}\sum_{t=t_s+N}^{t_e} \left\lbrace \Psi_t^{(1)}(\tau) \log \pi_{\theta}(a_t | s_t) + \frac{\Psi_t^{(2)}(\tau)}{\pi_{\theta}(a_t | s_t)}\right\rbrace,
\end{align}
where $|\tau| = t_e-t_s+1$ is the length of the trajectory, $\tau$,
\begin{align*}
   \Psi_t^{(1)}(\tau) \coloneqq -\hat{V}^{\pi_\theta}(\tau, s_t) 
\end{align*}
and
\begin{align*}
    \Psi_t^{(2)}(\tau) \coloneqq -\hat{Q}^{\pi_\theta}(\tau, s_t, a_t)\pi_{\theta}(a_t|s_t)
\end{align*}
where $\hat{V}^{\pi_\theta}(\tau, \cdot)$ and $\hat{Q}^{\pi_\theta}(\tau, \cdot, \cdot)$ are defined by \eqref{eq:V_estimates} and \eqref{eq:Q_estimates}. It is easy to verify that $\Psi_t^{k}(\tau)$, $k\in\{1, 2\}$ depend only on $(\tau, t)$ and not on $\pi_\theta$.

Let $p(\tau, \theta, \bar{\rho}) \coloneq \bar{\rho}(s_{t_s})\prod_{h=t_s+1}^{t_e} \pi_{\theta}(a_h|s_h) P(s_{h+1}|s_h,a_h)$. $p(\tau, \theta, \bar{\rho})$ is the probability of occurrence of $\tau$ that initiates with a $\theta$-independent state distribution $\bar{\rho}$.  We estimate the Hessian as follows:
\begin{align}
\label{eq:Hessian-estimate}
    B(\theta,\tau) \coloneqq \nabla_{\theta}  \Phi (\theta,\tau)\nabla_{\theta}\log p(\tau, \theta, \bar{\rho})^\top  + \nabla_{\theta}^2 \Phi (\theta,\tau).
\end{align}
It is to be emphasized that although $p(\tau, \theta, \bar{\rho})$ requires knowledge of $P$ and $\bar{\rho}$ to be calculated, $\nabla_{\theta} \log p(\tau, \theta, \bar{\rho})$ does not. One can establish (Appendix \ref{subsec:Hessian-details}) that $B(\theta,\tau)$ is an unbiased estimator of $\nabla_{\theta}^2 \Bar{J}(\theta)$ where $\Bar{J}$ is an auxiliary function defined in section \ref{subsec:aux-funcs} that closely approximates $J$. Appendix \ref{subsec:Hessian-details} also exhibits that $\nabla_{\theta} \Phi(\theta,\tau) = g(\theta,\tau)$ and derives an expression to compute $\nabla_\theta^2 \Phi(\theta, \tau)$. These results enable the computation of $B(\theta, \tau)$.

\textbf{Parameterized Policy Gradient with Implicit Gradient Transport (Algorithm \ref{alg:PG_IGT_Avg}):}
The idea of Implicit Gradient Transport was originally proposed in \citet{MomentumImprovesNSGD_Cutkosky_2020} in the context of Stochastic Optimization.  It was later used in discounted RL \citet{fatkhullin2023stochastic}.  Algorithm \ref{alg:PG_IGT_Avg} obtains an auxiliary parameter $\tilde{\theta}_k$, at each epoch $k$, using the current and previous policy parameters $\theta_k, \theta_{k-1}$ as follows.
\begin{align}
    \Tilde{\theta}_k = \theta_k + \frac{1-\eta_k}{\eta_k}(\theta_k - \theta_{k-1})
\end{align}
where $\eta_k$ is an appropriately chosen learning parameter. Next, a $\pi_{\tilde{\theta}_k}$-induced trajectory, $\tilde{\tau}_k$, of length $H$ is obtained. The update direction, $d_k$, is then recursively computed using the following relation.
\begin{align}
\label{eq:dk-IGT}
    d_k = (1-\eta_k) d_{k-1} + \eta_k g(\tilde{\theta}_k,\tilde{\tau}_k) 
\end{align}
where $g(\tilde{\theta}_k, \tilde{\tau}_k)$ is the estimate of the policy gradient given by \eqref{eq:grad_estimate}. Finally, $\theta_k$ is updated following \eqref{algorithm-update-IGT} where $\gamma_k$ is an appropriate learning rate.

The following remarks are worth mentioning. Firstly, $d_k$'s are computed recursively to capture the momentum. Secondly, unlike other momentum-based algorithms, the update equation of $d_k$ involves a gradient estimate computed at the auxiliary parameter $\tilde{\theta}_k$, rather than it being computed at ${\theta_k}$. Note that $\tilde{\theta}_k$ is defined such that it can be interpreted as taking a ``look-ahead" extrapolation from iterates $\theta_k$ and $\theta_{k-1}$ (further insights into this approach can be found in \citep[Sec.~3]{MomentumImprovesNSGD_Cutkosky_2020}). Finally, the update of $\theta_k$ is proportional to the normalized vector $d_k/\Vert d_k\Vert$ rather than it being proportional to $d_k$ itself.


\begin{algorithm}[h]
    \caption{Parameterized Hessian-aided Policy Gradient}
    \label{alg:PG_Hessian_Avg}
    \begin{algorithmic}[1]
        \STATE \textbf{Input:} Initial parameters $\theta_0$ and $\theta_1$, stepsizes $\{\gamma_k\}_{k \geq 1}$, momentum parameters $\{\eta_k\}_{k \geq 1}$,  initial state $s_0 \sim \rho(\cdot)$, epoch length $H$, number of epochs $K$, $N=7t_{\mathrm{mix}}\log_2T$, $d_0=\mathbf{0}$ \vspace{0.1cm}
        
	\FOR{$k\in\{1, \cdots, K\}$}
            \STATE {$q_k \sim \mathcal{U}([0,1])$},~ {$\Hat{\theta}_k = q_k \theta_k + (1-q_k)\theta_{k-1}$}
            \STATE $\tau_k\gets \phi$, $\Hat{\tau}_k\gets \phi$ 
            
            \FOR{$t\in\{(k-1)H, \cdots, (k-1)H+\frac{H}{2}-1\}$}
                \STATE Execute $a_t\sim \pi_{\theta_k}(\cdot|s_t)$, receive reward $r(s_t,a_t) $ and observe $s_{t+1}$
                \STATE $\tau_k\gets \tau_k\cup \{(s_t, a_t)\}$
            \ENDFOR	

            \FOR{$t\in\{(k-1)H+\frac{H}{2}, \cdots, kH-1\}$}
                \STATE Execute $a_t\sim \pi_{\Hat{\theta}_k}(\cdot|s_t)$, receive reward $r(s_t,a_t) $ and observe $s_{t+1}$
                \STATE $\Hat{\tau}_k\gets \Hat{\tau}_k\cup \{(s_t, a_t)\}$
            \ENDFOR	
            
            \FOR{$t\in\{(k-1)H+N, \cdots, (k-1)H+\frac{H}{2}-1\}$}
                \STATE Using Algorithm \ref{alg:estA}, and $\tau_k$, compute $\hat{A}^{\pi_{\theta_k}}(\tau_k, s_t, a_t)$
            \ENDFOR
            \FOR{$t\in\{(k-1)H+\frac{H}{2}+N, \cdots, kH-1\}$}
                \STATE Using Algorithm \ref{alg:estA}, and $\Hat{\tau}_k$, compute $\hat{Q}^{\pi_{\Hat{\theta}_k}}(\hat{\tau}_k, s_t, a_t)$ and $\hat{V}^{\pi_{\Hat{\theta}_k}}(\hat{\tau}_k, s_t)$
            \ENDFOR
            \vspace{0.1cm}
      
            \STATE Compute $g(\theta_k,\tau_k)$ as in \eqref{eq:grad_estimate}, $v(\hat{\theta}_k,\hat{\tau}_k)$ as in \eqref{eq:vk} and $d_k$ using \eqref{eq:algo_update_d_k_Hessian}
		\STATE Update policy parameter as
		\begin{equation}
                \label{algorithm-update-Hessian}
		    \theta_{k+1}=\theta_k+\gamma_k \frac{d_k}{\|d_k\|}
		\end{equation}
        \ENDFOR
    \end{algorithmic}
\end{algorithm}

\textbf{Parameterized Hessian-aided Policy Gradient (Algorithm \ref{alg:PG_Hessian_Avg}):}
This approach has been explored in the context of Reinforcement Learning (RL) with an infinite-horizon discounted rewards setup in \citep{salehkaleybar-et-al22,fatkhullin2023stochastic}. The algorithm shares similarities with the variance reduction method proposed in \citep{cutkosky-orabona19} but incorporates second-order information as opposed to the difference between consecutive stochastic gradients \citep{BetterSGDUsingSOM_Tran_2021}.

At the $k$th epoch, Algorithm \ref{alg:PG_Hessian_Avg} generates two trajectories, each of length $H/2$. The first one is $\pi_{\theta_k}$ induced where $\theta_k$ is the current estimate of the policy parameter while the second one is $\pi_{\hat{\theta}_k}$ induced where $\hat{\theta}_k = q_k\theta_k + (1-q_k)\theta_{k-1}$, $q_k$ being a uniform random number in $[0, 1]$. The trajectories are named $\tau_k$ and $\hat{\tau}_k$ respectively. The policy parameter $\theta_k$ is updated by following \eqref{algorithm-update-Hessian} where $\gamma_k$ is a learning parameter and the direction $d_k$ is recursively obtained as follows.
\begin{align}
\label{eq:algo_update_d_k_Hessian}
    d_k = (1-\eta_k) (d_{k-1}+v(\hat{\theta}_k,\hat{\tau}_k)) + \eta_k g(\theta_k,\tau_k) 
\end{align}
where $\eta_k$ is a learning rate and the second order correction $v(\hat{\theta}_k, \hat{\tau}_k)$ is computed as shown below.
\begin{align}
\label{eq:vk}
    v(\hat{\theta}_k,\hat{\tau}_k) \coloneqq B(\hat{\theta}_k,\hat{\tau}_k)(\theta_k - \theta_{k-1}). 
\end{align}
where $B(\hat{\theta}_k,\hat{\tau}_k)$ is the Hessian estimate defined by \eqref{eq:Hessian-estimate}. The term $\hat{\theta}_k$ is defined such that it ensures $v(\hat{\theta}_k, \hat{\tau}_k)$ to be an unbiased estimate of $\nabla_{\theta} J(\theta_k) - \nabla_{\theta} J(\theta_{k-1})$. Algorithm \ref{alg:PG_Hessian_Avg} does not need to store the Hessian estimate $B(\hat{\theta}_k, \hat{\tau}_k)$ since it only uses the matrix-vector product $v(\hat{\theta}_k, \hat{\tau}_k)$.

\textbf{Remark:} It is to be mentioned that Algorithm \ref{alg:estA} draws inspiration from Algorithm 2 in \citet{bai2023regret}. A major distinction lies in the length of the input trajectory $|\tau|=H$. While \citet{bai2023regret} utilizes $H=\tilde{\cO}(\sqrt{T})$-length trajectories to estimate the advantage function, we use $H=\cO(T^{1/6}\log^2 T)$ for Algorithm \ref{alg:PG_IGT_Avg} and $H=\cO(\log^2 T)$ for Algorithm \ref{alg:PG_Hessian_Avg} (see Theorem \ref{thm:alg1-regret}, \ref{thm:alg2-regret}). The variance reduction techniques allow us to use a shorter epoch length, which plays a pivotal role in improving the regret bound.


\section{MAIN RESULTS}
\label{sec:main}

In this section, we establish the global convergence of Algorithms \ref{alg:PG_IGT_Avg} and \ref{alg:PG_Hessian_Avg}. Before delving into the details, we would like to highlight a few assumptions necessary for proving these results.


\begin{assumption}[Policy parametrization regularity]
    \label{assump:score_func_bounds}
    For all $\theta, \theta_1,\theta_2 \in\Theta$ and $(s,a)\in\mathcal{S}\times\mathcal{A}$, the following statements hold:
    \begin{align}
    (a) \text{ }\Vert\nabla_{\theta}\log\pi_\theta(a\vert s)\Vert\leq G \quad (b) \text{ }
        \Vert \nabla_{\theta}^2\log\pi_{\theta_1}(a\vert s)\Vert\leq B.
    \end{align}
\end{assumption}

\begin{remark}
    The Lipschitz and smoothness properties for the log-likelihood are quite common in the field of policy gradient algorithm \citep{Alekh2020, Mengdi2021, liu2020improved}. These properties were shown to hold for various examples recently including Gaussian policies with linearly parameterized means and certain neural parametrizations \citep{liu2020improved, fatkhullin2023stochastic}.
\end{remark}

Define the transferred function approximation error
\begin{align}
    \label{eq:transfer_error}
    \begin{split}
            &L_{d^{\pi^*},\pi^*}(\omega^*_{\theta},\theta ) \\
            &=\E_{s\sim d^{\pi^*} a\sim\pi^*(\cdot\vert s)}\bigg[\bigg(\nabla_\theta\log\pi_{\theta}(a\vert s)\cdot\omega^*_{\theta}-A^{\pi_\theta}(s,a)\bigg)^2\bigg],
\end{split}
\end{align}
 where $\pi^*$ is the optimal policy and $\omega^*_{\theta}$ is given as
    \begin{align}
        \label{eq:NPG_direction}
	\begin{split}
            &\omega^*_{\theta}=\argmin_{\omega\in\mathbb{R}^{d}}\\
            &~\E_{s\sim d^{\pi_{\theta}}a\sim\pi^*(\cdot\vert s)}\bigg[\bigg(\nabla_\theta\log\pi_{\theta}(a\vert s)\cdot\omega-A^{\pi_{\theta}}(s,a)\bigg)^2\bigg].
	\end{split}
    \end{align}

It is worth mentioning that $\omega^*_\theta$ defined in \eqref{eq:NPG_direction} can be alternatively written as,
    \begin{align*}
        \omega^*_{\theta} = F(\theta)^{\dagger} \E_{s\sim d^{\pi_{\theta}}}\E_{a\sim\pi_{\theta}(\cdot\vert s)}\left[\nabla_{\theta}\log\pi_{\theta}(a|s)A^{\pi_{\theta}}(s, a)\right],
    \end{align*}
    where $\dagger$ symbolizes the Moore-Penrose pseudoinverse operation and $F(\theta)$ is the Fisher information matrix as defined below:
    \begin{align}
            F(\theta) = \E_{s\sim d^{\pi_{\theta}} a\sim\pi^*(\cdot\vert s)}\left[
              \nabla_{\theta}\log\pi_{\theta}(a|s)(\nabla_{\theta}\log\pi_{\theta}(a|s))^\top \right].
    \end{align}

\begin{assumption}
    \label{assump:function_approx_error}

    We assume that the error satisfies $L_{d^{\pi^*},\pi^*}(\omega^*_{\theta},\theta)\leq \epsilon_{\mathrm{bias}}$ for any $\theta\in\Theta$ where $\epsilon_{\mathrm{bias}}$ is a positive constant.
\end{assumption}
 
\begin{remark}
    The transferred function approximation error, defined by  \eqref{eq:transfer_error} and \eqref{eq:NPG_direction}, quantifies the expressivity of the policy class in consideration. It has been shown that the softmax parametrization \citet{agarwal2021theory} or linear MDP  \citet{Chi2019} admits $\epsilon_{\mathrm{bias}}=0$. When parameterized by a restricted policy class that does not contain all the policies, $\epsilon_{\mathrm{bias}}$ turns out to be strictly positive. However, for a rich neural network parametrization, the $\epsilon_{\mathrm{bias}}$ is small \citep{Lingxiao2019}. A similar assumption has been adopted in \citet{liu2020improved} and \citet{agarwal2021theory}. 
\end{remark}

\begin{assumption}[Fisher non-degenerate policy]
    \label{assump:FND_policy}
    There exists a constant $\mu>0$ such that $F(\theta)-\mu I_{d}$ is positive semidefinite where $I_{d}$ denotes an identity matrix.
\end{assumption}

Assumption \ref{assump:FND_policy}  requires that the eigenvalues of the Fisher information matrix can be bounded from below. This is commonly used in obtaining global complexity bounds for PG based methods \citep{liu2020improved,Mengdi2021,bai2023achieving,fatkhullin2023stochastic}. Assumptions \ref{assump:score_func_bounds}-\ref{assump:FND_policy} are widely use in PG literature and are shown to hold simultaneously for parametrization including Gaussian policies with linearly parameterized means and certain neural parametrizations  \citet{liu2020improved,fatkhullin2023stochastic}.




Next, we provide the regret bounds of Algorithm \ref{alg:PG_IGT_Avg} and \ref{alg:PG_Hessian_Avg} in the following theorems, with their proof outlined in Appendix \ref{sec:proof-outline}.

\begin{theorem}[Regret bound for Algorithm \ref{alg:PG_IGT_Avg}]
    \label{thm:alg1-regret}
    Let $\{\theta_k\}_{k=1}^{K}$ be the outputs of Algorithm \ref{alg:PG_IGT_Avg}. If Assumptions \ref{assump:ergodic_mdp}, \ref{assump:score_func_bounds}, \ref{assump:function_approx_error} and  \ref{assump:FND_policy} hold, $\nabla_{\theta} J(\theta)$ is $L_h$-smooth, $\gamma_k=\frac{6G}{\mu(k+2)}$ and $\eta_k = \left(\frac{2}{k+2}\right)^{4/5}$ then the following inequality holds for $K=T/H$ where $H=63t_{\mathrm{mix}}t_{\mathrm{hit}}(\log_2 T)^2T^{1/6}$ 
    \begin{align*}
   &\E[\mathrm{Reg}_T] \leq T \sqrt{\epsilon_{\mathrm{bias}}} + \cO\Bigg(\bigg(G^2 A t_{\mathrm{mix}}^{7/5}t_{\mathrm{hit}}^{2/5} (\log T)^{9/5} \\
   &+G^3 \mu^{-2}t_{\mathrm{mix}}^{2/5}t_{\mathrm{hit}}^{2/5} L_h (\log T)^{4/5} +\sqrt{A}BGt_{\mathrm{mix}}^{7/5}t_{\mathrm{hit}}^{2/5}(\log T)^{9/5}\\
   &+ \sqrt{A}G^2 t_{\mathrm{mix}}^{12/5}t_{\mathrm{hit}}^{7/5}(\log T)^{14/5}\bigg)\cdot \mu^{-1}T^{2/3}\Bigg).
    \end{align*}
\end{theorem}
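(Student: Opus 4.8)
The plan is to convert the regret into a sum of per-epoch optimality gaps and then control that sum by a global-convergence analysis of the momentum iteration. Concretely, I would first write $\E[\mathrm{Reg}_T] = \sum_{t=0}^{T-1}(J^* - r(s_t,a_t))$ and, using the standard performance-difference-plus-mixing argument (the state distribution during epoch $k$ reaches the stationary distribution $d^{\pi_{\tilde\theta_k}}$ up to $\cO(t_{\mathrm{mix}})$ steps, and $|r|\le 1$), reduce to $\E[\mathrm{Reg}_T] \le \sum_{k=1}^K H\,\E[J^* - J(\tilde\theta_k)] + \cO(K\cdot N + H)$, where the $K N$ term absorbs the burn-in of length $N=\cO(t_{\mathrm{mix}}\log T)$ per epoch and the gap between $\tilde\theta_k$ and $\theta_k$ contributes a lower-order term since $\|\theta_{k+1}-\theta_k\|=\gamma_k$ and $\eta_k\to 0$ slowly. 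So the heart of the matter is bounding $\sum_k \E[J^*-J(\tilde\theta_k)]$.

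For that, I would invoke the general-parametrization NPG-style inequality (as in \citet{liu2020improved,bai2023regret}): using Assumption~\ref{assump:function_approx_error} and \ref{assump:FND_policy}, for each epoch one gets $J^* - J(\tilde\theta_k) \le \sqrt{\epsilon_{\mathrm{bias}}} + \frac{1}{\mu}\big(\text{KL-drift terms}\big) + \frac{G}{\mu}\|\nabla J(\tilde\theta_k) - d_k\| + (\text{smoothness error from }\bar J\text{ vs }J)$. Summing the KL-telescoping over $k$ with $\gamma_k = \frac{6G}{\mu(k+2)}$ yields the familiar $\cO(\log K)$-type contribution, and the approximate-smoothness result (Section~\ref{subsec:aux-funcs}, the auxiliary $\bar J$) controls the bias from working with $\bar J$; the $T\sqrt{\epsilon_{\mathrm{bias}}}$ term in the statement comes directly from the $K\cdot H\cdot\sqrt{\epsilon_{\mathrm{bias}}} = T\sqrt{\epsilon_{\mathrm{bias}}}$ aggregation. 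The remaining and dominant job is to bound $\sum_{k=1}^K \E\|\nabla J(\tilde\theta_k) - d_k\|$, i.e. the accumulated gradient-tracking error of the IGT recursion $d_k = (1-\eta_k)d_{k-1} + \eta_k g(\tilde\theta_k,\tilde\tau_k)$.

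The gradient-tracking step is where I would spend the real effort, and I expect it to be the main obstacle. Following the IGT/momentum analysis (e.g. \citet{MomentumImprovesNSGD_Cutkosky_2020,fatkhullin2023stochastic}) one defines the error $\epsilon_k = d_k - \nabla J(\tilde\theta_k)$ and unrolls the recursion into $\epsilon_k = (1-\eta_k)\epsilon_{k-1} + \eta_k(g(\tilde\theta_k,\tilde\tau_k) - \nabla\bar J(\tilde\theta_k)) + (1-\eta_k)(\nabla J(\tilde\theta_{k-1}) - \nabla J(\tilde\theta_k)) + (\text{bias of }g\text{ and }\bar J\text{-vs-}J\text{ terms})$. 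The look-ahead extrapolation $\tilde\theta_k = \theta_k + \frac{1-\eta_k}{\eta_k}(\theta_k-\theta_{k-1})$ is precisely what makes the third term a \emph{second-order} increment in $\theta_k - \theta_{k-1}$ rather than first order, so it is controlled by $L_h \|\theta_k - \theta_{k-1}\|^2 / \eta_k = L_h \gamma_{k-1}^2/\eta_k$; this is why $L_h$-smoothness of $\nabla J$ enters. The zero-mean term is handled by a martingale/variance bound: conditionally on the past, $g(\tilde\theta_k,\tilde\tau_k)$ estimates $\nabla\bar J(\tilde\theta_k)$ with bias $\cO(\mathrm{poly}(t_{\mathrm{mix}},t_{\mathrm{hit}})/\sqrt H)$ and variance $\cO(G^2 A t_{\mathrm{mix}} t_{\mathrm{hit}} \cdot (\text{something})/(H-N))$ — this is exactly the point where $A$, $t_{\mathrm{mix}}$, $t_{\mathrm{hit}}$, and $B$ (through $\|\nabla^2\log\pi\|$ in the smoothness/variance constants) show up, and where one must carefully use the burn-in of length $N$ and the $2N$-spacing in Algorithm~\ref{alg:estA} to get near-independence of the sub-sample estimates of $\hat V, \hat Q$. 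Taking expectations, using $\|\epsilon_k\|^2 \le (1-\eta_k)\|\epsilon_{k-1}\|^2 + \cdots$ (or the $\ell_2$ version without squares via Jensen), and summing the resulting ODE-type recursion with $\eta_k \asymp k^{-4/5}$ and $\gamma_k \asymp k^{-1}$ gives $\sum_k \E\|\epsilon_k\| = \cO(K^{?})$; plugging the choice $H = \Theta(t_{\mathrm{mix}} t_{\mathrm{hit}}(\log T)^2 T^{1/6})$, $K = T/H$, and balancing the burn-in cost $KN$, the variance cost, the bias cost, and the smoothness cost is what produces the stated $T^{2/3}$ rate together with the displayed dependence on $G,A,B,\mu,t_{\mathrm{mix}},t_{\mathrm{hit}},L_h$. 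The delicate bookkeeping — getting each error source to scale as a power of $H$ that, after multiplying by $H$ (regret per epoch) and summing over $K=T/H$ epochs, balances to $T^{2/3}$ — is the crux; the exponents $4/5$ on $\eta_k$ and $1/6$ on $H$ are chosen exactly to equalize these competing terms, and verifying that balance is the step most likely to hide subtleties.
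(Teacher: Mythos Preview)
Your overall architecture---decompose regret into per-epoch optimality gaps, control those via a global-convergence analysis driven by the IGT gradient-tracking error, and balance with $H=\tilde\Theta(T^{1/6})$---matches the paper. But two specific mechanisms you invoke would not go through as written.

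The main gap is how you obtain the per-epoch bound. You appeal to an NPG-style inequality with ``KL-drift terms'' that telescope, but the update here is $\theta_{k+1}=\theta_k+\gamma_k\, d_k/\|d_k\|$, a \emph{normalized vanilla} policy-gradient step with no Fisher preconditioning, so there is no $\mathrm{KL}(\pi^*\|\pi_{\theta_k})-\mathrm{KL}(\pi^*\|\pi_{\theta_{k+1}})$ telescope available. The paper instead combines (i) a normalized-SGD descent lemma on the $L$-smooth surrogate $\bar J$---a case analysis on $\|d_k-\nabla\bar J(\theta_k)\|$ versus $\|\nabla\bar J(\theta_k)\|$ gives $-\bar J(\theta_{k+1})\le -\bar J(\theta_k)-\tfrac{\gamma_k}{3}\|\nabla\bar J(\theta_k)\|+\tfrac{8\gamma_k}{3}\|\hat e_k\|+\tfrac{L\gamma_k^2}{2}$---with (ii) the gradient-domination lemma $J^*-J(\theta)\le\sqrt{\epsilon_{\mathrm{bias}}}+\tfrac{G}{\mu}\|\nabla J(\theta)\|$ (this is where Assumptions~\ref{assump:function_approx_error} and~\ref{assump:FND_policy} enter, not via KL). Together these produce the contraction $J^*-\E[J(\theta_{k+1})]\le(1-\tfrac{2}{k+2})(J^*-\E[J(\theta_k)])+\nu_k$, unrolled into a last-iterate bound for every $k$ (Lemma~\ref{lem:last_iterate_general}) and then summed.

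Second, in the IGT error recursion you must track $\hat e_k=d_k-\nabla J(\theta_k)$, not $d_k-\nabla J(\tilde\theta_k)$. The second-order cancellation you (correctly) describe only materializes at $\theta_k$: the paper's expansion $\hat e_k=(1-\eta_k)\hat e_{k-1}+\eta_k e_k+(1-\eta_k)S_k+\eta_k Z_k$ works because $\eta_k\nabla^2 J(\theta_k)(\tilde\theta_k-\theta_k)+(1-\eta_k)\nabla^2 J(\theta_k)(\theta_{k-1}-\theta_k)=0$ by the very definition of $\tilde\theta_k$, leaving only the Taylor remainders $\|S_k\|,\|Z_k\|=\cO(L_h\gamma_{k-1}^2/\eta_k^2)$. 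If you instead track at $\tilde\theta_k$, the drift term $(1-\eta_k)(\nabla J(\tilde\theta_{k-1})-\nabla J(\tilde\theta_k))$ remains first-order in $\|\tilde\theta_k-\tilde\theta_{k-1}\|\asymp\gamma_{k-1}/\eta_k$ and the argument fails. Since the regret decomposition needs $J^*-J(\tilde\theta_k)$ (trajectories are sampled under $\pi_{\tilde\theta_k}$), the paper closes the loop separately by bounding $J^*-J(\tilde\theta_k)$ in terms of $J^*-J(\theta_k)$ via the approximate-smoothness Theorem~\ref{thm:smoothness}.
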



Note that the above result requires $\nabla_{\theta} J$ to be $L_h$ smooth. Unlike the discounted case where smoothness follows from the standard assumptions on the score function, this is not the case for the average-reward setting and we assume it to hold instead. However, our Hessian-based algorithm (Algorithm \ref{alg:PG_Hessian_Avg}) does not require any such assumption and obtains an order-optimal regret bound.

\begin{theorem}[Regret bound for Algorithm \ref{alg:PG_Hessian_Avg}]
    \label{thm:alg2-regret}
    Let $\{\theta_k\}_{k=1}^{K}$ be generated from Algorithm \ref{alg:PG_Hessian_Avg}. If Assumptions \ref{assump:ergodic_mdp}, \ref{assump:score_func_bounds}, \ref{assump:function_approx_error} and  \ref{assump:FND_policy} hold, $\gamma_k=\frac{6G}{\mu(k+2)}$ and $\eta_k = \frac{2}{k+2}$ then the following inequality holds for $K=T/H$ where $H=63t_{\mathrm{mix}}t_{\mathrm{hit}}(\log_2 T)^2$ 
    \begin{align*}
   \E[\mathrm{Reg}_T] \leq T \sqrt{\epsilon_{\mathrm{bias}}}& + \cO\bigg(\frac{\sqrt{A} G^2t_{\mathrm{mix}}\log T }{\mu}\cdot \sqrt{T} \\
   &+ \frac{\sqrt{A}G^4 t_{\mathrm{hit}}t^2_{\mathrm{mix}} (\log T)^{3/2}}{\mu^2}\cdot \sqrt{T} \\
   &+\frac{\sqrt{A}(BG+G^3) t_{\mathrm{mix}}\log T}{\mu^2}\cdot\sqrt{T}\bigg).
    \end{align*}
\end{theorem}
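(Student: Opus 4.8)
\textbf{Proof proposal for Theorem \ref{thm:alg2-regret}.}

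The plan is to convert a global convergence (optimality gap) bound for the running policy parameters $\{\theta_k\}$ into a regret bound, exploiting the momentum-based variance reduction of the update direction $d_k$. First I would invoke the standard general-parametrization performance-difference / NPG-style inequality (à la \citet{agarwal2021theory, liu2020improved}), adapted to the average-reward ergodic setting as in \citet{bai2023regret}: for each epoch $k$ one obtains $J^* - J(\theta_k) \le \sqrt{\epsilon_{\mathrm{bias}}} + \frac{G}{\mu}\E\|\nabla J(\theta_k) - d_k\| + (\text{telescoping KL terms})/\gamma_k$, where the KL terms telescope across epochs because $\theta_{k+1} = \theta_k + \gamma_k d_k/\|d_k\|$ and $\gamma_k = \Theta(1/k)$. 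Summing over $k=1,\dots,K$ and multiplying by the epoch length $H$ then gives $\E[\mathrm{Reg}_T] \le T\sqrt{\epsilon_{\mathrm{bias}}} + H\sum_k \big(\tfrac{G}{\mu}\E\|\nabla J(\theta_k)-d_k\| + O(1/k)\big) + (\text{error from the first } N \text{ burn-in steps and the } \hat\theta_k\text{-rollout in each epoch})$. The per-epoch $H/2$-length $\hat\theta_k$-trajectory and the $N=\tilde O(t_{\mathrm{mix}})$ discarded steps contribute $O(H K \cdot 1) = O(T)$ only through the suboptimality of $\hat\theta_k$ relative to $\theta_k$, which is controlled by $\gamma_k\|d_k\|/\|d_k\| = \gamma_k = O(1/k)$ per step, so this is a lower-order $\tilde O(H\log K)$ term.

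The heart of the argument is bounding the accumulated gradient-tracking error $\sum_{k=1}^K \E\|\nabla J(\theta_k) - d_k\|$. I would first replace $J$ by the auxiliary function $\bar J$ (Section \ref{subsec:aux-funcs}): since $\|\nabla J(\theta) - \nabla \bar J(\theta)\|$ is uniformly $O(1/T^c)$-small (by the approximation guarantee that makes $\bar J$ useful), it suffices to track $\nabla \bar J(\theta_k) - d_k$, for which $\bar J$ is genuinely $L$-smooth and $B(\hat\theta_k,\hat\tau_k)$ is an \emph{unbiased} estimator of $\nabla^2 \bar J(\hat\theta_k)$ and $v(\hat\theta_k,\hat\tau_k)$ is an unbiased estimator of $\nabla\bar J(\theta_k) - \nabla\bar J(\theta_{k-1})$ (by the $q_k\sim\mathcal U[0,1]$ randomization and the fundamental theorem of calculus). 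Writing $\epsilon_k := d_k - \nabla\bar J(\theta_k)$, the recursion \eqref{eq:algo_update_d_k_Hessian} yields the STORM/Hessian-aided-momentum recursion $\epsilon_k = (1-\eta_k)\epsilon_{k-1} + (1-\eta_k)\big(v_k - (\nabla\bar J(\theta_k)-\nabla\bar J(\theta_{k-1}))\big) + \eta_k\big(g_k - \nabla\bar J(\theta_k)\big) + (\text{bias of } g_k)$. Taking norms, squaring, and using $\E[\langle \epsilon_{k-1}, \text{martingale term}\rangle]=0$ gives $\E\|\epsilon_k\|^2 \le (1-\eta_k)\E\|\epsilon_{k-1}\|^2 + O(\eta_k^2)\sigma_g^2 + O(1)\E\|v_k - (\ldots)\|^2 + (\text{bias}^2)$; crucially the Hessian term's variance is $O(\|\theta_k-\theta_{k-1}\|^2 \cdot \text{Var}(B)) = O(\gamma_{k}^2 \cdot \text{Var}(B))$ since $\|\theta_k - \theta_{k-1}\| = \gamma_{k-1}$ by the normalized update. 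With $\eta_k = 2/(k+2)$, $\gamma_k = \Theta(1/k)$, the variances $\sigma_g^2, \text{Var}(B) = \tilde O(A t_{\mathrm{mix}}^2 G^4 / \text{something})$ from the estimator analysis, and the bias of $g_k$ being $O(1/T)$-small thanks to the ``estimate at a slightly later point'' trick (discarding the first $N$ steps and using the mixing time — this is the second bullet in Section \ref{sec:technical-novelty}), a standard induction gives $\E\|\epsilon_k\|^2 = \tilde O\big((A t_{\mathrm{mix}}^2 G^4 + \ldots)/k\big)$, hence $\E\|\epsilon_k\| = \tilde O(\sqrt{\cdot}/\sqrt{k})$ and $\sum_{k=1}^K \E\|\epsilon_k\| = \tilde O(\sqrt{\cdot}\,\sqrt{K})$. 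Multiplying by $H$ and using $HK = T$, $\sqrt{K} = \sqrt{T/H}$ gives the $\sqrt{HT} = \tilde O(\sqrt{t_{\mathrm{mix}}t_{\mathrm{hit}}}\sqrt{T})$ scaling, and feeding in the explicit estimator-variance constants produces the three terms displayed in the theorem.

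The main obstacle, as flagged in the paper's technical-novelty section, is the \emph{bias} handling: unlike the discounted case with a simulator, $g(\theta,\tau)$ and $B(\theta,\tau)$ are not unbiased for $\nabla J, \nabla^2 J$, and naively the trajectory-sampling bias is $\Theta(T^{-1/4})$, which would swamp $\sqrt{T}$ regret. The resolution — which I would need to carry through carefully — is twofold: (i) work with $\bar J$, for which the Hessian-based estimators \emph{are} unbiased by construction, and separately bound $\|\nabla J - \nabla\bar J\|$ and $\|\nabla^2 J - \nabla^2\bar J\|$ uniformly; and (ii) for the remaining bias of $g(\theta,\tau)$ as an estimator of $\nabla\bar J(\theta)$, use the fact that the advantage estimator in Algorithm \ref{alg:estA} is computed only on the post-burn-in portion of the trajectory starting $N = 7t_{\mathrm{mix}}\log_2 T$ steps in, so the state distribution is within $T^{-\Omega(1)}$ of stationary in total variation, making the bias $O(\mathrm{poly}(t_{\mathrm{mix}},G)/T)$. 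A secondary technical point is verifying that $\|B(\hat\theta_k,\hat\tau_k)\|$ has a bounded second moment despite the $1/\pi_\theta(a|s)$ factors in \eqref{eq:Q_estimates} and \eqref{eq:def_Phi} — this is where the hitting time $t_{\mathrm{hit}}$ and a careful counting of sub-trajectory occurrences enter, and it is what produces the $t_{\mathrm{hit}} t_{\mathrm{mix}}^2$ factor in the second displayed term. Once these estimator moment/bias lemmas are in place (I would state them as lemmas and defer proofs to the appendix), the regret bound follows by the telescoping-plus-induction scheme above.
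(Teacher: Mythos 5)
Your proposal follows essentially the same route as the paper: the regret is decomposed as $H\sum_{k}(J^*-\E[J(\theta_k)])$ plus lower-order telescoping terms (Lemma \ref{lem:exp-regret}), the optimality gap is controlled through $\sqrt{\epsilon_{\mathrm{bias}}}$ plus the tracking error $\E\|d_k-\nabla_\theta J(\theta_k)\|$, and that error is bounded by the STORM-type squared recursion on $d_k-\nabla_\theta\bar J(\theta_k)$, exploiting exactly the three ingredients the paper uses: unbiasedness of $g$ and $B$ for $\nabla\bar J$ and $\nabla^2\bar J$, the exact step length $\|\theta_k-\theta_{k-1}\|=\gamma_{k-1}$ from the normalized update, and the second-moment bounds of Lemma \ref{lem:grad+hess-est-prop}, with the $O(T^{-4})$ gap between $\nabla J$ and $\nabla\bar J$ added back at the end. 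The handling of the $\hat\theta_k$ half-epochs and of the correlated trajectory initializations is also aligned with the paper's treatment.

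The one step that, as written, would not go through is your per-iterate inequality $J^*-J(\theta_k)\le\sqrt{\epsilon_{\mathrm{bias}}}+\tfrac{G}{\mu}\E\|\nabla J(\theta_k)-d_k\|+(\text{telescoping KL})/\gamma_k$: the KL-telescoping argument of the NPG literature is tied to updates along the (approximate) natural gradient direction $\omega^*_{\theta_k}$, whereas here the update is the normalized first-order step $\theta_{k+1}=\theta_k+\gamma_k d_k/\|d_k\|$. The paper instead combines the $L$-smoothness descent inequality for $\bar J$ with the gradient domination lemma (Lemma \ref{lem:gradient_domination}), which converts $\|\nabla_\theta J(\theta_k)\|$ into $\tfrac{\mu}{G}(J^*-J(\theta_k)-\sqrt{\epsilon_{\mathrm{bias}}})$, and then unrolls the resulting contraction $J^*-\E[J(\theta_{k+1})]\le(1-\tfrac{\mu\gamma_k}{3G})(J^*-\E[J(\theta_k)])+\nu_k$ via Lemma \ref{le:aux_rec0} (Lemma \ref{lem:last_iterate_general}). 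This repair uses only assumptions you already invoke and leaves the rest of your argument, and the final $\tilde{\cO}(\sqrt{HT})=\tilde{\cO}(\sqrt{T})$ accounting, unchanged.
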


Theorem \ref{thm:alg1-regret} and \ref{thm:alg2-regret} demonstrate that the regret accrued by Algorithm \ref{alg:PG_IGT_Avg} and \ref{alg:PG_Hessian_Avg} are $\tilde{\mathcal{O}}(T^{2/3})$ and $\tilde{\mathcal{O}}(\sqrt{T})$ respectively. Note that both of these results are improvements over the current state-of-the-art bound of $\tilde{\mathcal{O}}(T^{3/4})$ \citep{bai2023regret}. Moreover, the regret of Algorithm \ref{alg:PG_Hessian_Avg} is optimal in terms of $T$. Observe the presence of the term $\sqrt{\epsilon_{\mathrm{bias}}}$ in both of these results, which appears due to the incompleteness of parameterized policy class. The proofs of both of these theorems rely on the construction of an auxiliary function, $\bar{J}$, which, intuitively, is a close approximation of $J$. However, unlike $J$, $\bar{J}$ satisfies some desirable analytical properties such as $L$-smoothness for some $L>0$. 


\subsection{Construction and Properties of the Auxiliary Function}
\label{subsec:aux-funcs}

We construct the auxiliary function $\Bar{J}(\theta)$ such that our gradient and Hessian estimators are unbiased estimates of $\nabla_{\theta} \Bar{J}$ and $\nabla_{\theta} ^2 \Bar{J}$, respectively. The function $\Bar{J}$ must also be close to $J$. Working with this function is essential, especially for Hessian-based methods, since the expected value of the Hessian estimates can potentially be far from $\nabla_{\theta}^2 J$, unlike the discounted case where the bias of the Hessian estimates is known to decay exponentially in $\gamma$ \citep{masiha}. Moreover, $\Bar{J}$ is also $L$-smooth, for some $L>0$, which helps us prove the approximate smoothness result for $J$. We define $\Bar{J}(\theta)$ as follows.
\begin{align}
\label{eq:bar-J-defn}
   \Bar{J}(\theta) = 
    J(\theta_0) + \int_{0}^{1} f((1-q)\theta_0+q\theta)\cdot (\theta - \theta_0) dq,
\end{align}
where $f(\theta) = \E_{\tau \sim p(\cdot, \theta, \bar{\rho})} g(\theta, \tau)$, $g(\theta, \tau)$ is defined in \eqref{eq:grad_estimate} and $p(\tau, \theta, \bar{\rho})$, as discussed in section \ref{sec:proposed-algo}, is the probability of occurrence of a $\pi_\theta$-induced trajectory, $\tau$, that initiates with state distribution, $\bar{\rho}$ and $\theta_0\in\Theta$ is the initial parameter. Recall that $g(\theta, \tau) = \nabla_{\theta}\Phi(\theta, \tau)$ where $\Phi(\theta, \tau)$ is defined in \eqref{eq:def_Phi}. It follows that (see Appendix \ref{subsec:Hessian-details})
\begin{align}
   \nabla_{\theta} \Bar{J}(\theta) = \E_{\tau \sim p(\cdot, \theta, \bar{\rho})} \left[\nabla_{\theta} \Phi(\theta,\tau)\right] 
\end{align}
 If we sample $\tau$ according to $p(\cdot, \theta, \bar{\rho})$, $g(\theta,\tau)$ will serve as an unbiased estimator for $\nabla_{\theta} \Bar{J}(\theta)$. Since the functions $J$ and $\bar{J}$ are not identical, the above estimate generates a small bias term that diminishes with increasing $T$. We can derive the Hessian of $\Bar{J}$ in the following manner (see Appendix \ref{subsec:Hessian-details}).
\begin{align}
    \nabla^2_{\theta} \Bar{J}(\theta) = \E_{\tau \sim p(\tau, \theta, \bar{\rho})}  \left[ B(\theta, \tau) \right].
\end{align}
where $B(\theta, \tau)$ is defined in \eqref{eq:Hessian-estimate}. This establishes $B(\theta, \tau)$ to be an unbiased estimator of $\nabla_{\theta}^2 \Bar{J}$. From the above expression, we can bound $\|\nabla_{\theta}^2\Bar{J}(\theta)\|$ as (see Appendix \ref{sec:proof-outline} for details)
\begin{align}
\label{eq:norm_hessian_J_bar}
    \|\nabla^2_{\theta} \Bar{J}(\theta)\| &\leq L,
\end{align}
 where $L$ depends on $H$ and $N$. Lemma \ref{lem:grad+hess-est-prop}, \ref{lem:bar-J-J-dist} (Appendix \ref{sec:proof-outline}) show that the following holds for any $\theta$.
 \begin{align}
 \label{eq:dist_J_J_bar}
     &\Vert\nabla_{\theta}J(\theta)-\nabla_{\theta} \bar{J}(\theta)\Vert \leq \tilde{\mathcal{O}}\left(\dfrac{A Gt_{\mathrm{mix}}}{T^4}\right) \nonumber\\
    &|J(\theta)-\Bar{J}(\theta)| 
     \leq  \tilde{\mathcal{O}}\left(\dfrac{A Gt_{\mathrm{mix}}}{T^4}\cdot\|\theta-\theta_0\|\right)
\end{align}
The above relations indicate that $J$, $\bar{J}$, in some sense, are close to each other. Using \eqref{eq:norm_hessian_J_bar}, \eqref{eq:dist_J_J_bar}, and the Taylor expansion, we show the following approximate smoothness result of $J$.
\begin{theorem}
\label{thm:smoothness}
If Assumptions \ref{assump:ergodic_mdp} and \ref{assump:score_func_bounds} hold, then the following bound holds for all $\theta, \bar{\theta} \in \Theta$.     
\begin{align*}
    &(J^* - J(\bar{\theta})) \\
    &\leq (J^* - J(\theta))- \nabla_{\theta} J(\theta)^\top  (\bar{\theta} - \theta) + \frac{L}{2} \|\bar{\theta} -\theta\|^2 + \Delta_L, 
\end{align*}
where $L = \cO(\sqrt{A}G^2t_{\mathrm{mix}}^2t_{\mathrm{hit}}(\log T)^3+Bt_{\mathrm{mix}}\log T)$ and $\Delta_L = \cO \left(\frac{AG t_{\mathrm{mix}}\{\|\theta-\theta_0\| + \|\bar{\theta}-\theta_0\| \}}{T^4}\right)$.
\end{theorem}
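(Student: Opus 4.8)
The plan is to transfer the exact $L$-smoothness of the auxiliary function $\bar{J}$ to an approximate smoothness statement for $J$, paying for the substitution with the closeness estimates in \eqref{eq:dist_J_J_bar}. First, since \eqref{eq:norm_hessian_J_bar} gives $\|\nabla^2_\theta \bar{J}(\theta)\|\le L$ for every $\theta$, a second-order Taylor expansion of $\bar{J}$ along the segment $[\theta,\bar{\theta}]$ yields $\big|\bar{J}(\bar{\theta}) - \bar{J}(\theta) - \nabla_\theta \bar{J}(\theta)^\top(\bar{\theta}-\theta)\big| \le \tfrac{L}{2}\|\bar{\theta}-\theta\|^2$, and in particular the one-sided bound $\bar{J}(\bar{\theta}) \ge \bar{J}(\theta) + \nabla_\theta \bar{J}(\theta)^\top(\bar{\theta}-\theta) - \tfrac{L}{2}\|\bar{\theta}-\theta\|^2$.

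Next I would replace every occurrence of $\bar{J}$ by $J$ at a cost controlled by \eqref{eq:dist_J_J_bar}. Writing $\delta(\theta):=|J(\theta)-\bar{J}(\theta)|$ and $\epsilon(\theta):=\|\nabla_\theta J(\theta)-\nabla_\theta\bar{J}(\theta)\|$, the estimates in \eqref{eq:dist_J_J_bar} give $\delta(\theta)\le\tilde{\cO}(AGt_{\mathrm{mix}}T^{-4}\|\theta-\theta_0\|)$ and $\epsilon(\theta)\le\tilde{\cO}(AGt_{\mathrm{mix}}T^{-4})$; Cauchy--Schwarz then bounds the perturbation of the linear term by $\epsilon(\theta)\|\bar{\theta}-\theta\|$. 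Chaining $J(\bar{\theta})\ge\bar{J}(\bar{\theta})-\delta(\bar{\theta})$, the Taylor inequality, $\bar{J}(\theta)\ge J(\theta)-\delta(\theta)$, and the linear-term bound gives $J(\bar{\theta}) \ge J(\theta) + \nabla_\theta J(\theta)^\top(\bar{\theta}-\theta) - \tfrac{L}{2}\|\bar{\theta}-\theta\|^2 - \Delta_L$ with $\Delta_L := \delta(\theta)+\delta(\bar{\theta})+\epsilon(\theta)\|\bar{\theta}-\theta\|$. Using the triangle inequality $\|\bar{\theta}-\theta\|\le\|\bar{\theta}-\theta_0\|+\|\theta-\theta_0\|$ collapses this into $\Delta_L = \cO\!\big(AGt_{\mathrm{mix}}\{\|\theta-\theta_0\|+\|\bar{\theta}-\theta_0\|\}/T^4\big)$, exactly as stated. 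Negating and adding $J^*$ to both sides turns the lower bound on $J(\bar{\theta})$ into the claimed upper bound on $J^* - J(\bar{\theta})$.

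Finally, it remains to substitute the explicit value of $L$. This means unfolding the dependence of $L$ on $H$ and $N$ from the derivation of \eqref{eq:norm_hessian_J_bar}: there $\|\nabla^2_\theta\bar{J}(\theta)\|\le \E_{\tau}\|B(\theta,\tau)\|$ with $B$ as in \eqref{eq:Hessian-estimate}, whose size is governed by the magnitudes of $\hat{V},\hat{Q}$ (each $\cO(N)$, being an empirical average of $N$ rewards in $[0,1]$), by the score-function bounds $G,B$ of Assumption \ref{assump:score_func_bounds}, and by the inverse-probability weight $1/\pi_\theta$, which inside the expectation over $\tau$ is absorbed into a hitting-time factor $t_{\mathrm{hit}}$; plugging in $N=\Theta(t_{\mathrm{mix}}\log T)$ and $H=\Theta(t_{\mathrm{mix}}t_{\mathrm{hit}}(\log T)^2)$ yields $L=\cO(\sqrt{A}G^2 t_{\mathrm{mix}}^2 t_{\mathrm{hit}}(\log T)^3 + Bt_{\mathrm{mix}}\log T)$.

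The main obstacle is not the Taylor/perturbation argument, which is routine, but the bookkeeping behind \eqref{eq:norm_hessian_J_bar}: showing that $\E_\tau B(\theta,\tau)$ is bounded uniformly in $\theta$ despite the inverse-probability weights in $\Phi$ and the seemingly unbounded product $\nabla_\theta\Phi(\theta,\tau)\,\nabla_\theta\log p(\tau,\theta,\bar{\rho})^\top$, and carefully tracking how the resulting constant scales with $N$ and $H$. This is precisely where the detour through $\bar{J}$ rather than $J$ pays off, since $B(\theta,\tau)$ is an \emph{exact} unbiased estimator of $\nabla^2_\theta\bar{J}(\theta)$ and the offending factors cancel in expectation. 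Granting that bound together with the distance estimates \eqref{eq:dist_J_J_bar} (themselves obtained in the cited lemmas by comparing the finite-horizon trajectory averages defining $g$ and $p$ against their stationary limits, with error geometric in $H/t_{\mathrm{mix}}$), the theorem follows in a few lines.
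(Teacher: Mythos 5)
Your proposal is correct and follows essentially the same route as the paper: invoke the $L$-smoothness of $\bar{J}$ (via the bound $\|\nabla^2_\theta\bar{J}\|\le L$ coming from Lemma \ref{lem:grad+hess-est-prop}(c) with $N=\Theta(t_{\mathrm{mix}}\log T)$ and $H=\Theta(t_{\mathrm{mix}}t_{\mathrm{hit}}(\log T)^2)$), then swap $\bar{J}$ for $J$ and $\nabla\bar{J}$ for $\nabla J$ using the closeness estimates \eqref{eq:dist_J_J_bar}, collecting the errors into $\Delta_L$ via the triangle inequality $\|\bar{\theta}-\theta\|\le\|\theta-\theta_0\|+\|\bar{\theta}-\theta_0\|$. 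Your accounting of where $L$ comes from (the second-moment bound on $B(\theta,\tau)$ and its dependence on $N$ and $H$) also matches the paper's Appendix \ref{sec:proof-outline}.
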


The above result is crucial for proving Theorems \ref{thm:alg1-regret} and \ref{thm:alg2-regret}. The proof of Theorem \ref{thm:smoothness} follows by observing that, from \eqref{eq:dist_J_J_bar}, we have
\begin{align*}
    &(a)\quad \bar{J}(\theta) - \epsilon_1 \|\theta-\theta_0\| \leq J(\theta) \leq \bar{J}(\theta) + \epsilon_1 \|\theta-\theta_0\| \\
    &(b) \quad \nabla \bar{J}(\theta)^{\top}(\theta-\bar{\theta})-\epsilon_2 \|\theta-\bar{\theta}\|\leq \nabla J(\theta)^{\top}(\theta-\bar{\theta}) \\
    &\quad \quad \leq \nabla \bar{J}(\theta)^{\top}(\theta-\bar{\theta}) + \epsilon_2 \|\theta-\bar{\theta}\|
\end{align*}

where $\epsilon_1,\epsilon_2$ are $O(AGt_{\mathrm{mix}}T^{-4})$. Separately, since $\bar{J}$ is $L$-smooth,
\begin{align}
\label{eq:intermediate}
    (J^* - \bar{J}(\bar{\theta}))& \leq (J^* - \bar{J}(\theta)) - \nabla \bar{J}(\theta)^{\top} (\bar{\theta}-\theta)\nonumber\\
    &\quad +(L/2)\|\bar{\theta}-\theta\|^2
\end{align}

Substituting $(a)$ and $(b)$ in \eqref{eq:intermediate}, we obtain 
\begin{align*}
    J^* - J(\bar{\theta})& \leq (J^* - J(\theta)) - \nabla J(\theta)^{\top} (\bar{\theta}-\theta)\\
    &+(L/2)\|\bar{\theta}-\theta\|^2 +\epsilon_1(\|\theta-\theta_0\| +\|\bar{\theta}-\theta_0\|)\\
    &+\epsilon_2\|\theta-\bar{\theta}\|.
\end{align*}
Since $\|\bar{\theta}-\theta_0\| \leq \|\theta-\theta_0\|+\|\theta - \bar{\theta}\|$, the result follows.

\section{CONCLUSION}
This paper considers the problem of learning an infinite-horizon average reward MDP with general parametrization. We propose two algorithms. The first, based on the idea of implicit gradient transport, achieves a regret bound of $\tilde{\mathcal{O}}(T^{2/3})$, while the second, a Hessian-based algorithm, achieves a regret of $\tilde{\mathcal{O}}(\sqrt{T})$. Both improve upon the state of the art, with the second achieving the optimal bound. In the process, we also prove an approximate smoothness property of the average-reward function, which may be of independent interest.

Following this work, the authors of \citep{ganesh2024order} show an $\Tilde{\mathcal{O}}(T^{-1/2})$ global convergence rate without requiring knowledge of the mixing time. However, their approach introduces an additional critic error due to the linear approximation of the critic. Extending these algorithms to design parameter-free algorithms is an important future direction.

\section*{ACKNOWLEDGMENT}
This work was supported in part by the Overseas Visiting Doctoral Fellowship from Anusandhan National Research Foundation (ANRF) and U.S. National Science Foundation under grant CCF-2149588.

\bibliography{references}

\newpage
\clearpage

 \appendix

 \onecolumn

\section{Details of the Hessian estimator}
\label{subsec:Hessian-details}
Observe the following.
\begin{align}
\label{eq:appndx_27}
\begin{split}
    \nabla_{\theta} &\Phi(\theta,\tau) = \frac{1}{|\tau|-N}\sum_{t=t_s+N}^{t_e} \left\lbrace\Psi_t^{(1)}(\tau) \nabla_{\theta} \log \pi_{\theta}(a_t | s_t) + \Psi_t^{(2)}(\tau)\nabla_{\theta} \left(\frac{1}{\pi_{\theta}(a_t | s_t)}\right)\right\rbrace\\
    &= \frac{1}{|\tau|-N}\sum_{t=t_s+N}^{t_e} \left\lbrace \Psi_t^{(1)}(\tau) \nabla_{\theta} \log \pi_{\theta}(a_t | s_t) + \Psi_t^{(2)}(\tau) \left(\frac{-\nabla_{\theta}\log \pi_{\theta}(a_t|s_t)}{\pi_{\theta}(a_t | s_t)}\right)\right\rbrace \\
    &= \frac{1}{|\tau|-N}\sum_{t=t_s+N}^{t_e} (\hat{Q}^{\pi_\theta}(\tau, s_t, a_t) - \hat{V}^{\pi_\theta}(\tau, s_t))\nabla_{\theta}\log \pi_{\theta}(a_t|s_t) \\
    &= \frac{1}{|\tau|-N}\sum_{t=t_s+N}^{t_e} \hat{A}^{\pi_\theta}(\tau, s_t, a_t)\nabla_{\theta}\log \pi_{\theta}(a_t|s_t)= g(\theta, \tau)
\end{split}
\end{align}

Moreover, note that,
\begin{align*}
\begin{split}
&\nabla_{\theta}^2 \Phi(\theta, \tau) 
=  \frac{1}{|\tau|-N}\sum_{t=t_s+N}^{t_e} \nabla_{\theta} \left( \Psi_t^{(1)}(\tau) \nabla_{\theta} \log \pi_{\theta} (a_t | s_t) - \Psi_t^{(2)}(\tau) \left(\frac{\nabla_{\theta}\log \pi_{\theta}(a_t|s_t)}{\pi_{\theta}(a_t | s_t)}\right)\right) \\ 
& = \frac{1}{|\tau|-N}\sum_{t=t_s+N}^{t_e} \left\lbrace \Psi_t^{(1)}(\tau) \nabla_{\theta}^2 \log \pi_{\theta}(a_t | s_t) - \Psi_t^{(2)}(\tau) \nabla_{\theta} \left(\frac{\nabla_{\theta} \log \pi_{\theta}(a_t|s_t)}{\pi_{\theta}(a_t | s_t)}\right)\right\rbrace \\ 
& = \frac{1}{|\tau|-N}\sum_{t=t_s+N}^{t_e} \left\lbrace \Psi_t^{(1)}(\tau) \nabla_{\theta}^2 \log \pi_{\theta}(a_t | s_t) \right.\\
&\left. \hspace{2cm}- \Psi_t^{(2)}(\tau)\left(\frac{\nabla_{\theta}^2\log \pi_{\theta}(a_t|s_t)}{\pi_{\theta}(a_t | s_t)}-\frac{\nabla_\theta \log \pi_{\theta} (a_t|s_t) \nabla_\theta \log \pi_{\theta} (a_t|s_t)^\top }{\pi_{\theta}(a_t | s_t)}\right)\right\rbrace  
\end{split}
\end{align*}
The above expression and \eqref{eq:appndx_27} together outline the process to compute $B(\theta, \tau)$. Recall the definition of $\Bar{J}(\theta)$ as stated below.
\begin{align}
     \Bar{J}(\theta) = 
    J(\theta_0) + \int_{0}^{1} f((1-q)\theta_0+q\theta)\cdot (\theta - \theta_0) dq,
\end{align}
where $f(\theta) = \E_{\tau \sim p(\cdot, \theta, \bar{\rho})}[\nabla \Phi(\theta, \tau)] = \E_{\tau \sim p(\cdot, \theta, \bar{\rho})} [g(\tau,\theta)]$. It follows that 
\begin{align}
   \nabla_{\theta} \Bar{J}(\theta) = f(\theta)=\E_{\tau \sim p(\cdot, \theta, \bar{\rho})} \left[\nabla \Phi(\theta, \tau)\right] = \sum_{\tau} p(\tau, \theta, \bar{\rho}) \nabla_{\theta}\Phi (\theta,\tau)
\end{align}
The above expression establishes that $\nabla_\theta \Phi(\theta, \tau)=g(\theta, \tau)$ is an unbiased estimate of $\nabla_\theta \Bar{J}(\theta)$. The Hessian of $\Bar{J}(\theta)$ can be computed as shown below.
\begin{align}
\begin{split}
    \nabla_{\theta}^2 \Bar{J}(\theta) &= \nabla_{\theta} (\nabla_{\theta} \Bar{J}(\theta)) 
    =\sum_{\tau} \nabla _{\theta} (p(\tau, \theta, \bar{\rho}) \nabla_{\theta} \Phi (\theta, \tau)) \\
     &= \sum_{\tau}  \nabla_{\theta} \Phi (\theta, \tau) \nabla_{\theta} p(\tau, \theta, \bar{\rho})^\top  + p(\tau, \theta, \bar{\rho}) \nabla_{\theta}^2 \Phi (\theta, \tau) \\
     &= \sum_{\tau} p(\tau, \theta, \bar{\rho}) \left[\nabla_{\theta} \Phi (\theta, \tau) \nabla_{\theta} \log p(\tau, \theta, \bar{\rho})^\top  + \nabla_{\theta}^2 \Phi (\theta, \tau)\right] \\
     &= \E_{\tau \sim p(\cdot, \theta, \bar{\rho})}  \left[\nabla_{\theta}  \Phi (\theta, \tau) \nabla_{\theta} \log p(\tau, \theta, \bar{\rho})^\top  + \nabla_{\theta}^2 \Phi (\theta, \tau) \right] =  \E_{\tau \sim p(\cdot, \theta, \bar{\rho})}[B(\theta, \tau)]
\end{split}
\end{align}
The above exercise shows that $B(\theta, \tau)$ is an unbiased estimate of $\nabla_\theta^2 \bar{J}(\theta)$.

\textbf{Computational complexity of Algorithms \ref{alg:PG_IGT_Avg}, \ref{alg:estA} and \ref{alg:PG_Hessian_Avg}:} We note that the computational complexity of Algorithm \ref{alg:estA}, which estimates the value functions has computational complexity of order $O(NH)$. This is because this method uses $H$ iterations and the operations at each iteration are $O(N)$.

We first consider Algorithm \ref{alg:estA}. Line 3 is $O(d)$, lines 4-7 are $O(H)$, lines 8-10 are $O(NH^2)$. Computing the gradient estimate at line 11 is $O(Hd)$ and line 12 is $O(d)$. Since all these steps are done for $K$ iterations, the final complexity of Algorithm \ref{alg:PG_IGT_Avg} is $O(KH^2N+ KHd)$.

For Algorithm \ref{alg:PG_Hessian_Avg}, the computations are similar, except we compute the Hessian estimate as well. The complexity of computing the Hessian estimator in \eqref{eq:Hessian-estimate} is $O(Hd^2)$. To see this, note that the expression for $\nabla_{\theta}^2\Phi(\theta,\tau)$ consists of an average of $H$ matrices of order $d \times d$. The computational complexity of computing these matrices is $O(d^2)$. It follows that the complexity to compute $\nabla_{\theta}^2\Phi(\theta,\tau)$ is $O(Hd^2)$. Separately, the complexity of computing $\nabla_{\theta} \log p(\tau,\theta,\bar{\rho})$ is $O(Hd)$ since it is the sum of $\nabla_{\theta} \log \pi_{\theta}(a_i|s_i)$ for all $(s_i,a_i)\in \tau$. $\nabla_{\theta} \Phi(\theta,\tau)$ is the gradient estimator and taking the outer product with $\nabla_{\theta} \log p(\tau,\theta,\bar{\rho})$ is $O(d^2)$. This gives a final complexity of $O(KH^2N+KHd^2)$ for Algorithm 3.

\section{Proof Outline}
\label{sec:proof-outline}
Before proceeding, we present some useful results related to different estimators used in the paper. The first result is on the first and second-order errors of advantage estimation.

\begin{lemma}
    \label{lemma:advatge_estimate} 
    Consider the advantage estimator $\hat{A}^{\pi}(\tau, s, a)$ (described in section \ref{sec:proposed-algo}) corresponding to a policy $\pi$, a state-action pair $(s, a)$, and a $\pi$-induced trajectory $\tau$. If the length of the trajectory is $|\tau|=H\geq 63t_{\mathrm{mix}}t_{\mathrm{hit}}(\log_2 T)^2$ and $N=7t_{\mathrm{mix}}\log  T$, then the following relations hold.
    \begin{align*}
        \label{eq:appndx_var_advanatge_Estimate}
        &(a)~{\E}\left[\left(\hat{A}^{\pi}(\tau, s, a) - A^{\pi}(s, a)\right)^2\right]\leq \mathcal{O}\left(\dfrac{t_{\mathrm{mix}}^3t_{\mathrm{hit}}(\log T)^4}{H\pi(a|s)}\right)\\
        &(b)\left|{\E}\left[\hat{A}^{\pi}(\tau, s, a) - A^{\pi}(s, a)\right]\right|~\leq \mathcal{O}\left(\dfrac{t_{\mathrm{mix}}\log T}{\sqrt{\pi(a|s)}T^4}\right)
    \end{align*}
    where the expectation is over all such $\pi$-induced $\tau$ with arbitrary initial distribution $\bar{\rho}$.
\end{lemma}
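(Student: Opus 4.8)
\textbf{Proof proposal for Lemma \ref{lemma:advatge_estimate}.}

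The plan is to analyze the estimators $\hat V^{\pi}(\tau,s)$ and $\hat Q^{\pi}(\tau,s,a)$ separately, establish bias and variance (second-moment) bounds for each, and then combine them via $\hat A = \hat Q - \hat V$. The core structural fact is that Algorithm \ref{alg:estA} extracts, from the input trajectory $\tau$ of length $H$, a collection of $i$ disjoint sub-trajectories each of length $N$, spaced at least $N$ apart, each beginning in state $s$; the reward-sum $y_j = \sum_{t=\xi_j}^{\xi_j+N-1} r(s_t,a_t)$ over the $j$th block is then used. I would first record that $N = 7 t_{\mathrm{mix}}\log T$ is chosen so that after $N$ steps the chain is within $T^{-c}$ (in total variation, for a suitable constant $c$, roughly $c=7/\ln 2$ or so) of stationarity — this is the standard $\|(P^\pi)^N(s,\cdot) - d^\pi\| \le (1/4)^{N/t_{\mathrm{mix}}} = T^{-\Theta(1)}$ computation using submultiplicativity of the mixing-time contraction.

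\textbf{Step 1 (number of blocks is large with high probability).} Since $t_{\mathrm{hit}} = \max_s 1/d^\pi(s)$, in stationarity state $s$ is visited with frequency $\ge 1/t_{\mathrm{hit}}$. Over a window of length $H \ge 63 t_{\mathrm{mix}} t_{\mathrm{hit}}(\log_2 T)^2$, after discarding burn-in and accounting for the $2N$-spacing, the expected number of admissible blocks $i$ is $\Omega(H/(N t_{\mathrm{hit}}))$; a concentration argument (e.g., via a Chernoff bound for Markov chains, or by splitting the trajectory into $\Theta(H/t_{\mathrm{mix}})$ near-independent chunks) gives $i = \Omega(H/(N t_{\mathrm{hit}}))$ with probability at least $1 - T^{-\Theta(1)}$. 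On the complementary event ($i=0$, or $i$ atypically small) the estimator is $0$ or crudely bounded; since each $y_j \in [0,N]$ and $A^\pi$ is bounded by $\mathcal O(t_{\mathrm{mix}}\log T)$ in the relevant averaged sense, the contribution of this rare event to the bias is $\le T^{-\Theta(1)}$ and to the second moment is $\le N^2 T^{-\Theta(1)}$, both absorbed into the stated bounds once the constant in $N$ is taken large enough.

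\textbf{Step 2 (bias).} On the good event, condition on the block start times $\{\xi_j\}$. By the Markov property, the distribution of the $N$-step reward-sum $y_j$ started from $s$ has expectation $\sum_{t=0}^{N-1}\E_s[r(s_t,a_t)]$, which equals $N J(\pi) + V^\pi(s) - \E[V^\pi(s_N)\mid s_0=s]$ by the Bellman/Poisson-equation identity $V^\pi(s) = r^\pi(s) - J(\pi) + \E[V^\pi(s')\,|\,s]$ unrolled $N$ times. The residual term $\E[V^\pi(s_N)\mid s_0=s] - \E_{d^\pi}[V^\pi]$ is bounded by $\|(P^\pi)^N(s,\cdot)-d^\pi\|\cdot 2\|V^\pi\|_\infty = \mathcal O(t_{\mathrm{mix}}\log T)\cdot T^{-\Theta(1)}$; this is where the $T^{-4}$ in (b) comes from (choose the constant in $N$ so the exponent is at least $4$ after folding in the other polynomial factors, and the $\log T$ factor survives). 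For $\hat Q$, the indicator $\mathbf 1(a_{\xi_j}=a)$ together with the $1/\pi(a|s)$ normalization gives, in expectation over $a_{\xi_j}\sim\pi(\cdot|s)$, the $N$-step reward-sum started from $(s,a)$, and the same unrolling yields $Q^\pi(s,a)$ up to the same $T^{-\Theta(1)}$ stationarity residual. The extra $1/\sqrt{\pi(a|s)}$ factor in (b) (rather than $1/\pi(a|s)$) is handled by a Cauchy–Schwarz / Jensen step trading off against the event that $a$ is rarely sampled; subtracting, $|\E[\hat A - A]| \le \mathcal O(t_{\mathrm{mix}}\log T/(\sqrt{\pi(a|s)}T^4))$.

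\textbf{Step 3 (variance / second moment).} Conditioned on the good event and on $\{\xi_j\}_{j=1}^i$, the blocks are $N$-separated, so each pair $(s_{\xi_j},\dots)$ and $(s_{\xi_{j'}},\dots)$ is coupled to independent stationary copies up to total-variation error $T^{-\Theta(1)}$; hence the $y_j$ are ``almost i.i.d.'' Each $y_j \in [0,N]$ so $\mathrm{Var}(y_j) \le N^2$ (in fact $\mathcal O(N t_{\mathrm{mix}})$ by a Markov-chain variance bound, which is what produces the $t_{\mathrm{mix}}^3$ rather than $t_{\mathrm{mix}}^2 N = t_{\mathrm{mix}}^3\log T$ — either way the bookkeeping lands at $t_{\mathrm{mix}}^3(\log T)^4$). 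Averaging over $i = \Omega(H/(N t_{\mathrm{hit}}))$ near-independent blocks gives $\mathrm{Var}(\hat V^\pi(\tau,s)) \le \mathcal O(N t_{\mathrm{mix}}\cdot N t_{\mathrm{hit}}/H) = \mathcal O(t_{\mathrm{mix}}^3 t_{\mathrm{hit}}(\log T)^2/H)$, and combined with the squared bias this yields the $\mathcal O(t_{\mathrm{mix}}^3 t_{\mathrm{hit}}(\log T)^4/H)$ bound. For $\hat Q$, the $1/\pi(a|s)$ normalization inflates each summand's second moment by $1/\pi(a|s)$ (the indicator has mean $\pi(a|s)$, and $\mathbf 1^2 = \mathbf 1$), giving the extra $1/\pi(a|s)$ factor in part (a); $\hat A = \hat Q - \hat V$ then satisfies the same bound via $(x-y)^2 \le 2x^2 + 2y^2$, with the $\hat Q$ term dominating because of the $\pi(a|s)^{-1}$ factor.

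\textbf{Main obstacle.} The delicate part is Step 1 combined with the near-independence needed in Steps 2–3: one must rigorously control the random, data-dependent block count $i$ and block positions $\xi_j$, and argue that conditioning on them does not destroy the conditional near-independence and near-stationarity of the $y_j$'s. This requires care because $i$ and $\{\xi_j\}$ are measurable functions of the same trajectory that generates the $y_j$'s. The clean way around it is a coupling/blocking argument: partition $\tau$ into $\Theta(H/t_{\mathrm{mix}})$ consecutive windows, couple the chain at the start of each window to an independent draw from $d^\pi$ (incurring $T^{-\Theta(1)}$ error per window, summable to $T^{-\Theta(1)}$ total), and then treat block selection and reward-sums within this coupled process, where the relevant quantities genuinely are independent across sufficiently separated windows. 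All the $\log T$ bookkeeping and the choice of the constant $7$ in $N=7t_{\mathrm{mix}}\log T$ is driven by making the coupling error smaller than the target $T^{-4}$ after multiplying by the $\mathcal O(\mathrm{poly}(t_{\mathrm{mix}},\log T))$ prefactors.
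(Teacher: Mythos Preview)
Your decomposition $\hat A=\hat Q-\hat V$, the Bellman-unrolling identity $\E[y_j\mid s_{\xi_j}=s,a_{\xi_j}=a]=Q^\pi(s,a)+NJ^\pi-(\text{residual of size }O(T^{-6}))$, and the variance heuristic ``$O(N^2/\pi(a|s))$ per block averaged over $\Omega(H/(Nt_{\mathrm{hit}}))$ blocks'' are all correct and match the paper. The gaps are in how you make the blocks independent and in where the $1/\sqrt{\pi(a|s)}$ in part~(b) comes from.

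\textbf{Decoupling $i$ from $\{y_j\}$.} The paper does not use a fixed-window coupling. It follows the ``imaginary MDP'' device of \citet{wei2020model}: analyze a modified process in which, at the \emph{data-dependent} time $\xi_j+2N$ (i.e., $N$ steps after each detected block ends), the state is redrawn from $d^\pi$. In this system the inter-block waiting times $w_j=\xi_j-(\xi_{j-1}+2N)$ depend only on $d^\pi$ and $P^\pi$, so the block count $i$ (a function of $\{w_j\}$) is \emph{exactly} independent of the reward sums $\{y_j\}$, and the conditional computations you want in Steps~2--3 become rigorous. One then bounds $\E'[1/i\mid i>0]$ via the elementary hitting estimate $\Pr'(w_j>\ell)\le(1-\tfrac34 d^\pi(s))^{\lfloor\ell/N\rfloor}$ (no Markov-chain Chernoff bound is needed), and translates back to the true process by controlling the likelihood ratio on the summary statistic $X=(i,\xi_1,\mathcal T_1,\dots,\xi_i,\mathcal T_i)$: each reset contributes a factor $1\pm O(t_{\mathrm{hit}}/T^6)$, and over $i\le T$ resets this is $1+O(1/T^4)$. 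Your proposal---partition into $\Theta(H/t_{\mathrm{mix}})$ fixed windows and couple at window boundaries---does not achieve this decoupling: blocks have length $N=\Theta(t_{\mathrm{mix}}\log T)\gg t_{\mathrm{mix}}$ and the start times $\xi_j$ are trajectory-dependent, so a block straddles many of your windows and independence across windows does not propagate to independence of $i$ from $\{y_j\}$. The crucial move is to reset at the data-dependent times $\xi_j+2N$, not on a fixed grid.

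\textbf{The $1/\sqrt{\pi(a|s)}$ in (b).} This does not arise from an internal Cauchy--Schwarz in the bias calculation, as you suggest. In the imaginary system the bias is $O(t_{\mathrm{mix}}/T^6)$ with \emph{no} dependence on $\pi(a|s)$: the centered increments $\Delta_j=\tfrac{y_j\mathbf 1(a_{\xi_j}=a)}{\pi(a|s)}-y_j-A^\pi(s,a)+\Delta_T^\pi(s,a)$ are exactly mean-zero given $\{w_j\}$, leaving only the truncation residual $|\Delta_T^\pi|\le 2/T^6$ and the event $\Pr'(i=0)\le 1/T^6$. The $1/\sqrt{\pi(a|s)}$ appears only in the imaginary-to-real translation: one writes $|\E f-\E' f|\le\tfrac{4}{T^4}\,\E'|f|\le\tfrac{4}{T^4}\sqrt{\E'[f^2]}$, and $\E'[f^2]$ is precisely the second-moment bound from part~(a), which carries the $1/\pi(a|s)$.
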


\begin{lemma}
\label{lem:grad+hess-est-prop}
    Consider the gradient and the Hessian estimators given in \eqref{eq:grad_estimate} and \eqref{eq:Hessian-estimate} respectively. Let the $\pi_\theta$-induced trajectory $\tau=\{s_t, a_t\}_{t=t_s}^{t_e}$ used in estimation is such that $|\tau|=H\geq 63t_{\mathrm{mix}}t_{\mathrm{hit}}(\log_2 T)^2$. If assumptions \ref{assump:ergodic_mdp} and \ref{assump:score_func_bounds} hold, the following statements are true $\forall\theta \in \Theta$.
    \begin{enumerate}[label=(\alph*)]
        \item $\E\|g(\theta,\tau) - \nabla_{\theta} J(\theta)\|^2 \leq \sigma^2_g$, where $ \sigma_g^2= \cO\left(\dfrac{A G^2 t_{\mathrm{hit}}t_{\mathrm{mix}}^3(\log T)^4}{H}\right)$
        \item  $\|\E[g(\theta,\tau)] - \nabla_{\theta} J(\theta)\| \leq \beta_g$, where $\beta_g = \mathcal{O}\left(\dfrac{\sqrt{A} Gt_{\mathrm{mix}}\log T}{T^4}\right)$
        \item $\E\|B(\theta,\tau)\|^2 \leq  \mathcal{O}\left(A G^4 H^2  t_{\mathrm{mix}}^2(\log T)^2 + B^2 t_{\mathrm{mix}}^2(\log T)^2  +  A   (B^2+G^4) t_{\mathrm{mix}}^2(\log T)^2\right)$
    \end{enumerate}
    where the expectations are over $\tau\sim p(\cdot, \theta, \bar{\rho})$.
\end{lemma}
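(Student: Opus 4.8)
The plan is to handle all three bounds through a common template. Each estimator in question is an empirical average $\frac{1}{H-N}\sum_{t=t_s+N}^{t_e}(\cdots)$ of the form appearing in \eqref{eq:grad_estimate} and \eqref{eq:def_Phi}; I would push any squared norm inside the average by convexity of $\norm{\cdot}^2$ (so that correlations between terms sharing the trajectory $\tau$ never have to be estimated in the ``error'' part), then split each summand into an \emph{advantage-estimation error} piece controlled by Lemma~\ref{lemma:advatge_estimate}, plus a \emph{stationary} piece whose population mean is pinned down by the policy-gradient identity \eqref{eq:grad-expression}. The $1/\pi_\theta(a_t\mid s_t)$ factors produced by Lemma~\ref{lemma:advatge_estimate} are disposed of using $\E_{a\sim\pi_\theta(\cdot\mid s)}[1/\pi_\theta(a\mid s)]\le A$ and $\E_{a\sim\pi_\theta(\cdot\mid s)}[1/\sqrt{\pi_\theta(a\mid s)}]=\sum_a\sqrt{\pi_\theta(a\mid s)}\le\sqrt A$. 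The burn-in length $N=7t_{\mathrm{mix}}\log_2 T$ is used so that the law of $s_t$ for $t\ge t_s+N$ is within $\mathrm{poly}(t_{\mathrm{mix}})\,T^{-c}$ in total variation of $d^{\pi_\theta}$ for a large constant $c$, and so that the length-$N$ windows inside Algorithm~\ref{alg:estA} are nearly independent. I also use the standard fact that ergodicity with bounded rewards gives $\norm{V^{\pi_\theta}}_\infty=\cO(t_{\mathrm{mix}})$, hence $|A^{\pi_\theta}(s,a)|=\cO(t_{\mathrm{mix}})$ and $\norm{\nabla_\theta J(\theta)}\le G\cdot\cO(t_{\mathrm{mix}})$.

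For part (a), write $g(\theta,\tau)-\nabla_\theta J(\theta)=\frac{1}{H-N}\sum_t U_t+\frac{1}{H-N}\sum_t W_t$, where $U_t=(\hat A^{\pi_\theta}(\tau,s_t,a_t)-A^{\pi_\theta}(s_t,a_t))\nabla_\theta\log\pi_\theta(a_t\mid s_t)$ and $W_t=A^{\pi_\theta}(s_t,a_t)\nabla_\theta\log\pi_\theta(a_t\mid s_t)-\nabla_\theta J(\theta)$. For the $U$-term, convexity gives $\E\norm{\frac{1}{H-N}\sum_t U_t}^2\le G^2\,\E[(\hat A^{\pi_\theta}(\tau,s_t,a_t)-A^{\pi_\theta}(s_t,a_t))^2]$, and conditioning on $(s_t,a_t)$, Lemma~\ref{lemma:advatge_estimate}(a) together with $\E[1/\pi_\theta(a_t\mid s_t)]\le A$ yields $\cO(A G^2 t_{\mathrm{mix}}^3 t_{\mathrm{hit}}(\log T)^4/H)=\sigma_g^2$. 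For the $W$-term, by \eqref{eq:grad-expression} the map $(s,a)\mapsto A^{\pi_\theta}(s,a)\nabla_\theta\log\pi_\theta(a\mid s)-\nabla_\theta J(\theta)$ has zero mean under the stationary state-action distribution and norm $\cO(G t_{\mathrm{mix}})$, so a standard variance bound for an average of $H$ geometrically mixing samples (plus a $T^{-c}$ correction for the fact that $s_t$ is only near-stationary) gives $\cO(G^2 t_{\mathrm{mix}}^3/H)$, which is subsumed. I expect this $W$-term — controlling the variance of an average of \emph{dependent} Markov-chain samples rather than i.i.d.\ ones — to be the main obstacle of the whole lemma.

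For part (b), the same split gives $\E[g(\theta,\tau)]-\nabla_\theta J(\theta)=\frac{1}{H-N}\sum_t\E[U_t]+\frac{1}{H-N}\sum_t\E[W_t]$. For the first piece, conditioning on $(s_t,a_t)$ and invoking Lemma~\ref{lemma:advatge_estimate}(b) gives $\norm{\E[U_t]}\le\cO(G t_{\mathrm{mix}}\log T/(\sqrt{\pi_\theta(a_t\mid s_t)}\,T^4))$, and averaging over $a_t\sim\pi_\theta(\cdot\mid s_t)$ with $\sum_a\sqrt{\pi_\theta(a\mid s)}\le\sqrt A$ produces exactly $\beta_g=\cO(\sqrt A\,G t_{\mathrm{mix}}\log T/T^4)$. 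For the second piece, $\E[W_t]$ is the difference between the expectation of $A^{\pi_\theta}\nabla_\theta\log\pi_\theta$ under the time-$t$ trajectory law and under the stationary distribution, hence is $\cO(G t_{\mathrm{mix}})$ times the total-variation distance of $s_t$ from $d^{\pi_\theta}$, which is $T^{-c}$-small by the choice of $N$ and therefore absorbed into $\beta_g$. The delicate point here (and in part (a)) is that $(s_t,a_t)$ is itself part of $\tau$, so the conditioning on $(s_t,a_t)=(s,a)$ must be set up so that Lemma~\ref{lemma:advatge_estimate} still applies — this is exactly where the near-independence of the length-$N$ windows and the $t\ge t_s+N$ offset are used.

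For part (c), start from $\norm{B(\theta,\tau)}^2\le 2\norm{\nabla_\theta\Phi(\theta,\tau)}^2\norm{\nabla_\theta\log p(\tau,\theta,\bar\rho)}^2+2\norm{\nabla_\theta^2\Phi(\theta,\tau)}^2$. For the first term, Appendix~\ref{subsec:Hessian-details} gives $\nabla_\theta\Phi(\theta,\tau)=g(\theta,\tau)$, and $\nabla_\theta\log p(\tau,\theta,\bar\rho)=\sum_h\nabla_\theta\log\pi_\theta(a_h\mid s_h)$ has norm $\le HG$ by Assumption~\ref{assump:score_func_bounds}(a); combined with $\E\norm{g(\theta,\tau)}^2\le 2\sigma_g^2+2\norm{\nabla_\theta J(\theta)}^2$ and $\norm{\nabla_\theta J(\theta)}=\cO(G t_{\mathrm{mix}})$, the hypothesis $H\ge 63\,t_{\mathrm{mix}}t_{\mathrm{hit}}(\log_2T)^2$ lets $H^2G^2\sigma_g^2=\cO(AG^4 H t_{\mathrm{mix}}^3 t_{\mathrm{hit}}(\log T)^4)$ be absorbed into $\cO(AG^4 H^2 t_{\mathrm{mix}}^2(\log T)^2)$, the first claimed term. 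For the second term, substitute the explicit formula for $\nabla_\theta^2\Phi$ from Appendix~\ref{subsec:Hessian-details}; inserting $\Psi_t^{(1)}(\tau)=-\hat V^{\pi_\theta}(\tau,s_t)$ and $\Psi_t^{(2)}(\tau)=-\hat Q^{\pi_\theta}(\tau,s_t,a_t)\pi_\theta(a_t\mid s_t)$ makes the hanging $1/\pi_\theta(a_t\mid s_t)$ cancel, leaving an average over $t$ of $-\hat V^{\pi_\theta}(\tau,s_t)\nabla_\theta^2\log\pi_\theta(a_t\mid s_t)+\hat Q^{\pi_\theta}(\tau,s_t,a_t)\big(\nabla_\theta^2\log\pi_\theta(a_t\mid s_t)-\nabla_\theta\log\pi_\theta(a_t\mid s_t)\nabla_\theta\log\pi_\theta(a_t\mid s_t)^\top\big)$. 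Passing $\norm{\cdot}^2$ through the average by convexity and using $\norm{\nabla_\theta^2\log\pi_\theta}\le B$, $\norm{\nabla_\theta\log\pi_\theta}\le G$, $|\hat V^{\pi_\theta}(\tau,s_t)|\le N$, and $\E[(\hat Q^{\pi_\theta}(\tau,s_t,a_t))^2]=\cO(A t_{\mathrm{mix}}^2(\log T)^2)$ — the last obtained by bootstrapping Lemma~\ref{lemma:advatge_estimate}(a) with $|\hat V|\le N$ and $\E[1/\pi_\theta(a_t\mid s_t)]\le A$, exactly as in part (a) — together with $N^2=\cO(t_{\mathrm{mix}}^2(\log T)^2)$, gives $\cO\big((B^2+A(B^2+G^4))t_{\mathrm{mix}}^2(\log T)^2\big)$, which are the two remaining claimed terms.
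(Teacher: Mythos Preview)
Your proposal is correct and follows essentially the same decomposition as the paper: the same $U_t/W_t$ split in parts~(a) and~(b), the same invocation of Lemma~\ref{lemma:advatge_estimate} with the $\E[1/\pi_\theta]\le A$ and $\sum_a\sqrt{\pi_\theta}\le\sqrt A$ tricks, and the same $\norm{\nabla_\theta\Phi}^2\norm{\nabla_\theta\log p}^2+\norm{\nabla_\theta^2\Phi}^2$ split in part~(c). The only cosmetic differences are that in part~(c) the paper bounds $\E\norm{g(\theta,\tau)}^2\le AN^2G^2$ directly via $|\hat A|^2\le N^2/\pi_\theta$ (rather than going through $\sigma_g^2$ and then reabsorbing via $H\ge 63t_{\mathrm{mix}}t_{\mathrm{hit}}(\log_2T)^2$), and keeps the $\Psi_t^{(2)}/\pi_\theta$ form with the conditional bound $\E[|\Psi_t^{(2)}|^2\mid s_t,a_t]\le N^2\pi_\theta(a_t\mid s_t)$ instead of cancelling $1/\pi_\theta$ explicitly as you do; both routes land on the same final constants.
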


Notice that Lemma \ref{lem:grad+hess-est-prop}$(b)$ bounds the bias of the gradient estimate. For defining $\bar{J}(\cdot)$ with $\nabla_\theta \bar{J}(\theta) = \E [g(\theta, \tau)]$, we choose $|\tau|=H= 63t_{\mathrm{mix}}t_{\mathrm{hit}}(\log_2 T)^2$. Then, we can interpret Lemma \ref{lem:grad+hess-est-prop}$(b)$ as an upper bound on $\Vert\nabla_\theta \bar{J}(\theta) - \nabla_\theta J(\theta)\Vert$. Lemma \ref{lem:grad+hess-est-prop}$(c)$ implies that $\Bar{J}$ is $L$-smooth, where $L^2 = \mathcal{O}\left(AB^2t_{\mathrm{mix}}^2(\log T)^2+AG^4t_{\mathrm{mix}}^4t_{\mathrm{hit}}^2(\log T)^4\right)$. To see this, note that,
 \begin{align}
     \|\nabla^2_{\theta} \Bar{J}(\theta) \| = \|\E [B(\theta,\tau)]\| \leq (\E \|B(\theta,\tau)\|^2)^{1/2}. 
 \end{align}

Substituting $H= 63t_{\mathrm{mix}}t_{\mathrm{hit}}(\log_2 T)^2$ in Lemma \ref{lem:grad+hess-est-prop}(c), we obtain $\|\nabla^2_{\theta} \Bar{J}(\theta) \|  \leq L$. Observe that we do not attempt to bound the bias of the Hessian estimate, unlike that for the gradient estimate. This is because, unlike the discounted reward MDP, the expectation of the Hessian estimate might be far from the true Hessian. We instead use the fact that it is an unbiased estimate of $\nabla^2_{\theta} \Bar{J}(\theta)$, where $\nabla_{\theta} \Bar{J}(\theta)$ is close to $\nabla_{\theta} J(\theta)$. Using Lemma \ref{lem:grad+hess-est-prop}, we can bound $J$ and $\Bar{J}$ as follows:

\begin{lemma}
\label{lem:bar-J-J-dist}
Let Assumptions \ref{assump:ergodic_mdp} and \ref{assump:score_func_bounds} hold. Let $\Bar{J}$ be defined by \eqref{eq:bar-J-defn} and $N$ and $|\tau|$ are the same as given in Lemma \ref{lem:grad+hess-est-prop}. Then the following statement holds for all $\theta \in \Theta$:
\begin{align}
    |J(\theta)-\Bar{J}(\theta)| \leq  \mathcal{O}\left(\dfrac{A Gt_{\mathrm{mix}}\log T}{T^4}\|\theta-\theta_0\|\right)
\end{align}
\end{lemma}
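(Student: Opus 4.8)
\textbf{Proof proposal for Lemma \ref{lem:bar-J-J-dist}.}

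The plan is to exploit the integral representation \eqref{eq:bar-J-defn} of $\bar J$ together with the fundamental theorem of calculus applied to $J$ along the same segment, and then to control the resulting difference of gradients using the bias bound from Lemma \ref{lem:grad+hess-est-prop}$(b)$. Concretely, first I would write $J(\theta) - J(\theta_0) = \int_0^1 \nabla_\theta J((1-q)\theta_0 + q\theta)^\top (\theta-\theta_0)\, dq$, which holds by the chain rule since $J$ is differentiable (the gradient exists by \eqref{eq:grad-expression} and ergodicity). Subtracting the definition $\bar J(\theta) = J(\theta_0) + \int_0^1 f((1-q)\theta_0 + q\theta)^\top(\theta-\theta_0)\, dq$, the $J(\theta_0)$ terms cancel and I obtain
\[
    J(\theta) - \bar J(\theta) = \int_0^1 \big(\nabla_\theta J(\theta_q) - f(\theta_q)\big)^\top (\theta-\theta_0)\, dq,
\]
where $\theta_q \coloneqq (1-q)\theta_0 + q\theta$.

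Next I would apply Cauchy–Schwarz inside the integral to get $|J(\theta)-\bar J(\theta)| \leq \|\theta-\theta_0\| \int_0^1 \|\nabla_\theta J(\theta_q) - f(\theta_q)\|\, dq$. Now recall that $f(\theta_q) = \E_{\tau\sim p(\cdot,\theta_q,\bar\rho)}[g(\theta_q,\tau)] = \nabla_\theta \bar J(\theta_q)$, so $\nabla_\theta J(\theta_q) - f(\theta_q)$ is exactly the gradient bias bounded in Lemma \ref{lem:grad+hess-est-prop}$(b)$: for every $\theta_q\in\Theta$ we have $\|\nabla_\theta J(\theta_q) - f(\theta_q)\| \leq \beta_g = \mathcal{O}\!\left(\frac{\sqrt{A}\,G t_{\mathrm{mix}}\log T}{T^4}\right)$ (using the chosen $|\tau| = H = 63 t_{\mathrm{mix}} t_{\mathrm{hit}}(\log_2 T)^2$, for which Lemma \ref{lem:grad+hess-est-prop} applies). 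Since this bound is uniform in $q$, the integral over $q\in[0,1]$ is bounded by the same quantity, giving
\[
    |J(\theta) - \bar J(\theta)| \leq \beta_g \,\|\theta-\theta_0\| = \mathcal{O}\!\left(\frac{A G t_{\mathrm{mix}}\log T}{T^4}\,\|\theta-\theta_0\|\right),
\]
which is the claimed bound (absorbing $\sqrt{A}\leq A$).

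I expect the only genuinely delicate point to be justifying that Lemma \ref{lem:grad+hess-est-prop}$(b)$ may be invoked at the intermediate points $\theta_q$ — this is immediate because that lemma holds for all $\theta\in\Theta = \R^d$ and the segment from $\theta_0$ to $\theta$ lies in $\Theta$, and because $p(\cdot,\theta_q,\bar\rho)$ uses the same $\theta$-independent initial distribution $\bar\rho$ throughout. A secondary technical point is the interchange of differentiation and integration implicit in the identity $f(\theta_q) = \nabla_\theta \bar J(\theta_q)$; this has already been established in Appendix \ref{subsec:Hessian-details}, so it may be cited rather than re-derived. Everything else is routine: the fundamental theorem of calculus, Cauchy–Schwarz, and the fact that $\sqrt{A}\le A$ for the bookkeeping of constants in the $\tilde{\mathcal{O}}$ notation.
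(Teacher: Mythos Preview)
Your proposal is correct and is essentially identical to the paper's own proof: both write $J(\theta)-\bar J(\theta)$ as $\int_0^1 \big(\nabla_\theta J(\theta_q)-f(\theta_q)\big)^\top(\theta-\theta_0)\,dq$, bound the integrand via Cauchy--Schwarz, and invoke Lemma~\ref{lem:grad+hess-est-prop}(b) uniformly over $q\in[0,1]$. The only difference is that you spell out the fundamental-theorem-of-calculus step for $J$ explicitly, whereas the paper leaves it implicit.
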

\begin{proof}
    The proof of the above result follows from the observation that
\begin{align}
\begin{split}
    |\Bar{J}(\theta)-J(\theta)| &= \bigg| \int_{0}^{1} \big[f((1-q)\theta_0+q\theta)-\nabla_{\theta} J((1-q)\theta_0+q\theta)\big]\cdot (\theta - \theta_0) \mathrm{d}q\bigg|\\
    &\leq \int_0^1\|f((1-q)\theta_0+q\theta)-\nabla_{\theta} J((1-q)\theta_0+q\theta)\| \|\theta - \theta_0\| \mathrm{d}q\\
    &\overset{(a)}{\leq} \mathcal{O}\left(\dfrac{A Gt_{\mathrm{mix}}\log T}{T^4}\|\theta-\theta_0\|\right)
\end{split}
\end{align}
where $f(\theta) = \E_{\tau\sim p(\cdot, \theta, \rho)}[g(\theta,\tau)]$ and $(a)$ follows from Lemma \ref{lem:grad+hess-est-prop}$(b)$.
\end{proof}

\begin{lemma}
\label{lem:exp-regret}
Consider Algorithms \ref{alg:PG_IGT_Avg} and \ref{alg:PG_Hessian_Avg} with $N = 7t_{\mathrm{mix}} \log_2 T$ and let Assumptions \ref{assump:ergodic_mdp} and \ref{assump:score_func_bounds} hold. Then the following statement holds for Algorithm \ref{alg:PG_IGT_Avg} when number of iterations $K = T/H$:
\begin{align}
    \begin{split}
        \E[\mathrm{Reg}_T] \leq \cO\left(H\sum_{k=1}^{K}\left(J^*-\E[J({\tilde{\theta}_k})]\right)+ G^2\mu^{-1}t_{\mathrm{mix}}^2(\log^2 T) K^{4/5}\right).
    \end{split}
\end{align}
Similarly, the following holds for Algorithm \ref{alg:PG_Hessian_Avg} when number of iterations $K = T/H$:
\begin{align}
    \begin{split}
        \E[\mathrm{Reg}_T] \leq \cO\left(H\sum_{k=1}^{K}\left(J^*-\E[J(\theta_k)]\right)+ G^2\mu^{-1}t_{\mathrm{mix}}^2(\log^3 T) \right).
    \end{split}
\end{align}
\end{lemma}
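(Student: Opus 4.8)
\textbf{Proof proposal for Lemma \ref{lem:exp-regret}.}

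The plan is to decompose the regret into two parts: the portion of the horizon whose actions are generated while estimating the advantage/gradient at the ``working'' parameter of each epoch, and the unavoidable cost of running a trajectory under a possibly suboptimal policy at each epoch. First I would write $\mathrm{Reg}_T = \sum_{t=0}^{T-1}(J^* - r(s_t,a_t))$ and group the $T$ timesteps into the $K$ epochs of length $H$. Within epoch $k$, the actions are taken under $\pi_{\tilde\theta_k}$ (for Algorithm \ref{alg:PG_IGT_Avg}) or under $\pi_{\theta_k}$ and then $\pi_{\hat\theta_k}$ (for Algorithm \ref{alg:PG_Hessian_Avg}). The key elementary fact is that for an ergodic chain run under a fixed policy $\pi_\theta$, the expected reward accumulated over $H$ consecutive steps is $H J(\theta)$ up to an additive $\mathcal{O}(t_{\mathrm{mix}})$ term coming from the initial transient (this is the standard ``sum of rewards vs. stationary reward'' bound, using the definition of $t_{\mathrm{mix}}$ and summing the geometric mixing tail). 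Hence
\[
\E\Big[\sum_{t \in \text{epoch }k} (J^* - r(s_t,a_t))\Big] \le H\big(J^* - \E[J(\tilde\theta_k)]\big) + \mathcal{O}(t_{\mathrm{mix}}),
\]
and for Algorithm \ref{alg:PG_Hessian_Avg} one gets two such terms, one with $\theta_k$ and one with $\hat\theta_k$; but since $J$ is close to $\bar J$ which is $L$-smooth (and $\hat\theta_k$ lies on the segment between $\theta_k$ and $\theta_{k-1}$), $J^* - \E[J(\hat\theta_k)]$ can be controlled by $J^* - \E[J(\theta_k)]$ plus lower-order terms using the progress bound on $\|\theta_k - \theta_{k-1}\|$ (which is $\gamma_k$-bounded since the update is normalized). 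Summing over $k$ gives $\E[\mathrm{Reg}_T] \le H\sum_{k=1}^K (J^* - \E[J(\tilde\theta_k)]) + \mathcal{O}(K t_{\mathrm{mix}}) + (\text{error terms})$.

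Next I would account for the ``wasted'' first $N$ steps of each epoch that are discarded in the gradient estimate, and — more importantly for the shape of the bound — the discrepancy between the parameter actually used to generate the trajectory and the parameter $\theta_{k+1}$ that results after the update. Because the update is $\theta_{k+1} = \theta_k + \gamma_k d_k/\|d_k\|$ with $\gamma_k = \mathcal{O}(G/(\mu k))$, consecutive iterates differ by at most $\gamma_k$, so $\|\theta_{k+1} - \theta_k\| \le \mathcal{O}(G/(\mu k))$; using $G$-Lipschitzness of $J$ (which follows from Assumption \ref{assump:score_func_bounds} via the policy-gradient expression, or from the $\bar J$ approximation) one converts the ``running under $\tilde\theta_k$ vs.\ running under the final iterates'' mismatch into a sum $\sum_k \gamma_k \cdot (\text{const})$. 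This is where the $K^{4/5}$ (resp.\ $\mathrm{polylog}$) factor enters: for Algorithm \ref{alg:PG_IGT_Avg} the look-ahead $\tilde\theta_k = \theta_k + \frac{1-\eta_k}{\eta_k}(\theta_k - \theta_{k-1})$ is displaced from $\theta_k$ by roughly $\frac{1}{\eta_k}\gamma_{k-1} = \mathcal{O}(k^{4/5}\cdot \frac{1}{k}) = \mathcal{O}(k^{-1/5})$, and summing $\sum_{k=1}^K \mathcal{O}(k^{-1/5}) = \mathcal{O}(K^{4/5})$ — multiplied by the per-step Lipschitz cost $\mathcal{O}(G^2 t_{\mathrm{mix}}^2 \log^2 T)$ — yields exactly the stated $G^2 \mu^{-1} t_{\mathrm{mix}}^2 (\log^2 T) K^{4/5}$ term. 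For Algorithm \ref{alg:PG_Hessian_Avg}, $\eta_k = \mathcal{O}(1/k)$ so $\hat\theta_k$ is displaced from $\theta_k$ by only $\mathcal{O}(\gamma_{k-1}) = \mathcal{O}(1/k)$, and $\sum_k 1/k = \mathcal{O}(\log T)$, giving the $\mathrm{polylog}$ version; the extra $\log T$ in the exponent (cubic vs.\ squared) is absorbed from the $\mathcal{O}(\log^2 T)$ transient/Lipschitz constants combined with this harmonic sum.

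The main obstacle I anticipate is making the ``expected sum of rewards over $H$ steps $\approx H J(\theta)$'' step rigorous when the starting state of each epoch is \emph{not} drawn from the stationary distribution of the current policy but is whatever the previous epoch left behind — the chains are chained together, not restarted. The clean way around this is to note that the transient error is $\mathcal{O}(t_{\mathrm{mix}})$ per epoch regardless of the starting state (mixing contracts any initial distribution), so these errors sum to $\mathcal{O}(K t_{\mathrm{mix}})$, which is $\mathcal{O}(T t_{\mathrm{mix}}/H) = \mathcal{O}(T/(t_{\mathrm{hit}}\log^2 T))$ and is dominated by the other terms; one must also be careful that the policy changes between epochs, so the relevant $t_{\mathrm{mix}}$ is the uniform bound $\sup_\theta t^\theta_{\mathrm{mix}}$ which is finite by ergodicity (Assumption \ref{assump:ergodic_mdp}). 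A secondary technical point is handling the conditional expectations correctly: $\theta_k$, $\tilde\theta_k$ depend on all past trajectories, so the ``$\E[J(\tilde\theta_k)]$'' terms should be understood with the tower property, conditioning on the history up to the start of epoch $k$ before invoking the per-epoch mixing bound. Once these are in place, the result is assembled by linearity of expectation over the $K$ epochs.
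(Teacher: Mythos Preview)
Your proposal has a genuine gap. The crude per-epoch bound ``expected reward over $H$ steps under $\pi_{\tilde\theta_k}$ equals $HJ(\tilde\theta_k)$ up to $\mathcal{O}(t_{\mathrm{mix}})$'' is correct, but summing it over $K$ epochs gives a residual of order $K\,t_{\mathrm{mix}} = \mathcal{O}(T/(t_{\mathrm{hit}}\log^2 T))$, which is \emph{not} dominated by the other terms (it is nearly linear in $T$, far larger than the claimed $K^{4/5}$ or $\mathrm{polylog}$ correction, and would destroy both the $\tilde{\mathcal{O}}(T^{2/3})$ and $\tilde{\mathcal{O}}(\sqrt T)$ regret guarantees). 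Your attempt in the second paragraph to recover the $K^{4/5}$ from a Lipschitz mismatch between $\tilde\theta_k$ and $\theta_{k+1}$ does not connect to the regret: the trajectory in epoch $k$ is already run under $\tilde\theta_k$, and your decomposition already compares to $J(\tilde\theta_k)$, so there is no such mismatch term to account for.

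The paper gets the sharper bound by a Bellman-equation telescope rather than a raw mixing estimate. Writing $J(\tilde\theta_k)-r(s_t,a_t)=\E_{s'\sim P(\cdot|s_t,a_t)}[V^{\pi_{\tilde\theta_k}}(s')]-Q^{\pi_{\tilde\theta_k}}(s_t,a_t)$ and applying the tower property gives $\E[V^{\pi_{\tilde\theta_k}}(s_{t+1})-V^{\pi_{\tilde\theta_k}}(s_t)]$, which telescopes within each epoch to $V^{\pi_{\tilde\theta_k}}(s_{kH})-V^{\pi_{\tilde\theta_k}}(s_{(k-1)H})$, and then, after re-indexing across epochs, to $\sum_{k}\big(V^{\pi_{\tilde\theta_{k+1}}}(s_{kH})-V^{\pi_{\tilde\theta_k}}(s_{kH})\big)$ plus a single $\mathcal{O}(t_{\mathrm{mix}})$ boundary term. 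The point is that the transient at each epoch boundary \emph{cancels} with the next epoch's transient, and what remains is only the change in value function due to the policy change. A stability lemma then gives $|V^{\pi_{\tilde\theta_{k+1}}}(s)-V^{\pi_{\tilde\theta_k}}(s)|\le \mathcal{O}(G\,t_{\mathrm{mix}}^2\log^2 T)\,\|\tilde\theta_{k+1}-\tilde\theta_k\|$; since $\|\tilde\theta_{k+1}-\tilde\theta_k\|=\mathcal{O}(\gamma_k/\eta_k)=\mathcal{O}(G\mu^{-1}k^{-1/5})$ for Algorithm~\ref{alg:PG_IGT_Avg} and $\|\theta_{k+1}-\theta_k\|=\gamma_k=\mathcal{O}(G\mu^{-1}k^{-1})$ for Algorithm~\ref{alg:PG_Hessian_Avg}, summing produces exactly the $K^{4/5}$ and $\log^3 T$ factors in the statement. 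This telescoping/stability mechanism is the missing idea in your argument.
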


From Lemma \ref{lem:exp-regret}, it can be seen that regret bounds can be obtained by suitably bounding $\sum_{k=1}^{K}\left(J^*-\E J({\theta_k})\right)$. Towards this, we have the following general lemma similar in spirit to [Lemma 5, \citet{fatkhullin2023stochastic}] for bounding $J^*-\E J({\theta_k})$ for all $k \geq 1$.

\begin{lemma}
\label{lem:last_iterate_general}
    Let Assumption~\ref{assump:score_func_bounds}, \ref{assump:function_approx_error} and \ref{assump:FND_policy} hold and $\{\theta_k\}_{k\geq 1}$ be a sequence generated by the following update rule $\forall k\in\{0, 1, \cdots, K-1\}$.
$$
\theta_{k+1} = \theta_k + \gamma_k \frac{d_k}{\norm{d_k}}\,,
$$
where $\gamma_k = \frac{6G}{\mu (k+2)}$, $\{d_k\}_{k\geq 1}$ is any sequence  in $\R^d$ and $\theta_0 \in \R^d$ ($\theta_{k+1} = \theta_k$ if $d_k = 0$) is arbitrary. The following statement holds for every integer $K \leq T$. 
\begin{align*}
    J^*-\E [J(\theta_{K})] \leq \frac{J^*-J(\theta_{0})}{(K+1)^2} +
    \frac{\sum_{k=1}^{K}\nu_k(k+2)^2}{(K+1)^2},
\end{align*}
where $\nu_k \coloneqq \frac{ \mu \gamma_k }{3 G }\sqrt{\epsilon_{\mathrm{bias}}} + \frac{ 8 \gamma_k }{3 } \E \norm{d_k - \nabla_{\theta} J(\theta_k)} + \frac{L \gamma_k^2 }{2}+ \mathcal{O}\left(\frac{A Gt_{\mathrm{mix}}\log T}{T^3}\right)$.
\end{lemma}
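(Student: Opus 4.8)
\textbf{Proof proposal for Lemma~\ref{lem:last_iterate_general}.}
The plan is to adapt the analysis of [Lemma~5, \citet{fatkhullin2023stochastic}] to the biased-gradient average-reward setting, using the approximate smoothness of $J$ (Theorem~\ref{thm:smoothness}) in place of exact $L$-smoothness, and the Fisher-non-degeneracy and transferred-approximation-error assumptions to control the suboptimality gap. First I would invoke Theorem~\ref{thm:smoothness} with $\theta = \theta_k$ and $\bar\theta = \theta_{k+1} = \theta_k + \gamma_k d_k/\norm{d_k}$ to obtain
\begin{align*}
    J^* - J(\theta_{k+1}) \leq (J^* - J(\theta_k)) - \gamma_k \nabla_\theta J(\theta_k)^\top \frac{d_k}{\norm{d_k}} + \frac{L\gamma_k^2}{2} + \Delta_L,
\end{align*}
where $\Delta_L = \mathcal{O}(AGt_{\mathrm{mix}}\log T / T^3)$ after bounding $\norm{\theta_k - \theta_0}$ and $\norm{\theta_{k+1}-\theta_0}$ using $\gamma_k \leq \mathcal{O}(G/\mu)$ and $K \leq T$ (so the accumulated displacement is at most $\mathcal{O}(GT/\mu)$, which still leaves an overall $1/T^3$-type term; this is why $\nu_k$ carries the residual $\mathcal{O}(AGt_{\mathrm{mix}}\log T/T^3)$). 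The key algebraic move is to relate the inner product $\nabla_\theta J(\theta_k)^\top d_k/\norm{d_k}$ to the true gradient norm: writing $d_k = \nabla_\theta J(\theta_k) + (d_k - \nabla_\theta J(\theta_k))$ and using the elementary inequality $\langle x, y/\norm{y}\rangle \geq \norm{x} - 2\norm{y-x}$ (valid for the normalized direction), we get $-\gamma_k \nabla_\theta J(\theta_k)^\top d_k/\norm{d_k} \leq -\gamma_k \norm{\nabla_\theta J(\theta_k)} + 2\gamma_k \norm{d_k - \nabla_\theta J(\theta_k)}$.

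Next I would translate $\norm{\nabla_\theta J(\theta_k)}$ into a bound on the optimality gap $J^* - J(\theta_k)$. This is the standard consequence of Assumptions~\ref{assump:function_approx_error} and \ref{assump:FND_policy}: via the NPG/compatible-function-approximation argument (as in \citet{liu2020improved,bai2023regret}), one has $J^* - J(\theta_k) \leq \sqrt{\epsilon_{\mathrm{bias}}} + \frac{G}{\mu}\norm{\nabla_\theta J(\theta_k)}$, equivalently $\norm{\nabla_\theta J(\theta_k)} \geq \frac{\mu}{G}\big((J^* - J(\theta_k)) - \sqrt{\epsilon_{\mathrm{bias}}}\big)$. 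Substituting this in and plugging in $\gamma_k = \frac{6G}{\mu(k+2)}$ so that $\frac{\mu\gamma_k}{G} = \frac{6}{k+2}$, the recursion becomes, with $\delta_k \coloneqq J^* - \E[J(\theta_k)]$,
\begin{align*}
    \delta_{k+1} \leq \left(1 - \frac{3}{k+2}\right)\delta_k + \nu_k,
\end{align*}
where $\nu_k$ collects the $\sqrt{\epsilon_{\mathrm{bias}}}$ term (coefficient $\frac{\mu\gamma_k}{3G}\sqrt{\epsilon_{\mathrm{bias}}}$ after absorbing the factor $6$ into the $\frac13$ weighting and splitting off a $\frac{3}{k+2}\delta_k$ contraction), the deviation term $\frac{8\gamma_k}{3}\E\norm{d_k - \nabla_\theta J(\theta_k)}$, the smoothness term $\frac{L\gamma_k^2}{2}$, and the $\mathcal{O}(AGt_{\mathrm{mix}}\log T/T^3)$ residual — exactly the $\nu_k$ in the statement. (The precise bookkeeping of constants $3, 8/3, 1/3$ is routine but must be done carefully so the contraction factor is $1 - \frac{3}{k+2}$, which is what makes the $(K+1)^2$ scaling appear.)

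Finally I would unroll this linear recursion. Multiplying through by $(k+2)^2$ and using $(1 - \frac{3}{k+2})(k+2)^2 = (k+2)^2 - 3(k+2) \leq (k+1)^2$ (one checks $(k+2)^2 - 3(k+2) = (k+1)(k+2) - 2(k+2) = k^2 + k - 2 \le k^2+2k+1$), we get $(k+2)^2 \delta_{k+1} \leq (k+1)^2 \delta_k + (k+2)^2 \nu_k$. Telescoping from $k=0$ to $K-1$ yields $(K+1)^2 \delta_K \leq \delta_0 + \sum_{k=1}^{K} (k+2)^2 \nu_k$ (shifting the index; the $k=0$ term is handled by the $\delta_0 = J^* - J(\theta_0)$ initial condition, and one can verify the endpoint conventions so that the sum runs over $k = 1, \dots, K$), which after dividing by $(K+1)^2$ is precisely the claimed bound. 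The main obstacle I anticipate is not any single step but the careful constant-tracking in going from the smoothness recursion to the clean form $\delta_{k+1} \leq (1 - \frac{3}{k+2})\delta_k + \nu_k$: one must choose how to split $-\gamma_k\norm{\nabla_\theta J(\theta_k)}$ into a contraction part and a $\sqrt{\epsilon_{\mathrm{bias}}}$ part so that the contraction coefficient is large enough ($\geq 3/(k+2)$) for the telescoping to close with a $1/(K+1)^2$ rate, while keeping the coefficients in $\nu_k$ exactly as stated; a secondary subtlety is justifying that $\Delta_L$ from Theorem~\ref{thm:smoothness} is uniformly $\mathcal{O}(AGt_{\mathrm{mix}}\log T/T^3)$ over all $k \le K \le T$ despite $\norm{\theta_k - \theta_0}$ potentially growing linearly in $k$.
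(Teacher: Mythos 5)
Your proposal is correct and follows essentially the same route as the paper: a smoothness-based descent step (the paper applies $L$-smoothness to $\bar{J}$ and then transfers to $J$ via Lemmas \ref{lem:grad+hess-est-prop} and \ref{lem:bar-J-J-dist}, which is exactly the content of Theorem \ref{thm:smoothness} that you invoke), followed by the gradient domination bound from Assumptions \ref{assump:function_approx_error}--\ref{assump:FND_policy}, and the weighted $(k+2)^2$ telescoping (the paper cites Lemma \ref{le:aux_rec0} rather than unrolling by hand). The one local difference is that you replace the paper's two-case analysis of $-\langle \nabla_{\theta}\bar{J}(\theta_k), d_k\rangle/\norm{d_k}$ (which yields the coefficients $-\tfrac13\norm{\nabla_{\theta}\bar{J}}+\tfrac83\norm{\hat e_k}$) with the single inequality $\langle x, y/\norm{y}\rangle \geq \norm{x} - 2\norm{y-x}$; this is valid and in fact sharper, but to recover the stated $\nu_k$ you must then retain only a third of the descent before applying gradient domination, which makes the contraction factor $\tfrac{\mu\gamma_k}{3G}=\tfrac{2}{k+2}$ (not $\tfrac{3}{k+2}$ as you wrote) and the $\sqrt{\epsilon_{\mathrm{bias}}}$ coefficient exactly $\tfrac{\mu\gamma_k}{3G}$ --- a bookkeeping point you already flagged and which does not affect the validity of the argument.
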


The proof of Lemma \ref{lem:last_iterate_general} can be found in Appendix \ref{sec:last-iterate-proof}. Lemma \ref{lem:last_iterate_general} implies that we can obtain a bound for $J^*-\E [J(\theta_{K})]$ by bounding $\E \norm{d_k - \nabla_{\theta} J(\theta_k)}$. In the following lemmas, we establish bounds for this quantity for both algorithms. 
\begin{lemma}
    \label{lem:update_variance_IGT}
    Consider Algorithm \ref{alg:PG_IGT_Avg} and let all the assumptions stated in Theorem \ref{thm:alg1-last-iterate}
    hold. Then for all $K \geq 1$, we have
    \begin{align}
        \E \norm{d_{K} - \nabla_{\theta} J(\theta_{K})  }   \leq \cO\left(\frac{G t_{\mathrm{mix}} \log T}{K^{2/5}}+\frac{G^2 L_h}{\mu^2 K^{2/5}}\right).
    \end{align}
\end{lemma}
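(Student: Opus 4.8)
\textbf{Proof proposal for Lemma \ref{lem:update_variance_IGT}.}

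The plan is to unroll the recursion \eqref{eq:dk-IGT} for $d_k$ and control the resulting error sum by a careful split into bias and martingale-difference parts, mirroring the IGT analysis of \citet{fatkhullin2023stochastic} but with the modifications needed to handle the nonzero bias $\beta_g$ of the gradient estimator and the $L_h$-smoothness of $\nabla_\theta J$. Write $e_k \coloneqq d_k - \nabla_\theta J(\theta_k)$. From $d_k = (1-\eta_k)d_{k-1} + \eta_k g(\tilde\theta_k,\tilde\tau_k)$ we obtain a recursion for $e_k$ of the form
\begin{align*}
e_k = (1-\eta_k) e_{k-1} + \eta_k \big(g(\tilde\theta_k,\tilde\tau_k) - \nabla_\theta J(\tilde\theta_k)\big) + (1-\eta_k)\big(\nabla_\theta J(\theta_{k-1}) - \nabla_\theta J(\theta_k)\big) + \eta_k\big(\nabla_\theta J(\tilde\theta_k) - \nabla_\theta J(\theta_k)\big).
\end{align*}
The key cancellation that makes IGT work is that the look-ahead point $\tilde\theta_k = \theta_k + \frac{1-\eta_k}{\eta_k}(\theta_k-\theta_{k-1})$ is chosen precisely so that $(1-\eta_k)(\theta_{k-1}-\theta_k) + \eta_k(\tilde\theta_k - \theta_k) = 0$; hence the last two ``drift'' terms above are not individually small but their combination is second order in the step sizes. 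Using $L_h$-smoothness of $\nabla_\theta J$, the sum of these two terms is bounded by $\frac{L_h}{2}\big[(1-\eta_k)\|\theta_k-\theta_{k-1}\|^2 + \eta_k\|\tilde\theta_k-\theta_k\|^2\big] = \cO\big(L_h \frac{(1-\eta_k)}{\eta_k}\|\theta_k-\theta_{k-1}\|^2\big)$, and since $\|\theta_k - \theta_{k-1}\| = \gamma_{k-1} \leq \cO(G/(\mu k))$ (normalized update) and $\eta_k \asymp k^{-4/5}$, this drift contributes $\cO(L_h G^2 \mu^{-2} k^{-6/5})$ per step.

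Next I would split the stochastic part. Decompose $g(\tilde\theta_k,\tilde\tau_k) - \nabla_\theta J(\tilde\theta_k) = \big(g(\tilde\theta_k,\tilde\tau_k) - \E_k[g(\tilde\theta_k,\tilde\tau_k)]\big) + \big(\E_k[g(\tilde\theta_k,\tilde\tau_k)] - \nabla_\theta J(\tilde\theta_k)\big)$, where $\E_k$ is the conditional expectation given the filtration up to the start of epoch $k$. The second term is the gradient bias, bounded by $\beta_g = \cO(\sqrt A G t_{\mathrm{mix}}\log T / T^4)$ via Lemma \ref{lem:grad+hess-est-prop}(b) — this is negligible after accumulation (it contributes at most $\cO(\sum_k \eta_k \beta_g) = \tilde\cO(1/T^3)$, swallowed by the stated bound). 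The first term is a (conditionally) mean-zero increment with second moment bounded via Lemma \ref{lem:grad+hess-est-prop}(a) by $\sigma_g^2 = \cO(A G^2 t_{\mathrm{hit}} t_{\mathrm{mix}}^3 (\log T)^4 / H)$; crucially, $H = \Theta(t_{\mathrm{mix}} t_{\mathrm{hit}} (\log T)^2 T^{1/6})$ here, so $\sigma_g^2 = \cO(A G^2 t_{\mathrm{mix}}^2 (\log T)^2 / T^{1/6})$. Unrolling the linear recursion for $e_K$ gives $e_K = \prod_{j=1}^K(1-\eta_j)\, e_0 + \sum_{k=1}^K \Big(\prod_{j=k+1}^K (1-\eta_j)\Big)\big[\eta_k(\text{martingale diff.}) + \eta_k(\text{grad bias}) + (\text{drift}_k)\big]$. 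For the martingale part I would bound its second moment: because the increments across different epochs are conditionally uncorrelated (the $N$-step discarding plus $2N$-spacing in Algorithm \ref{alg:estA} makes them ``fairly independent'' — more precisely one obtains a bound up to an exponentially small correlation term of order $T^{-c}$ from the mixing property), $\E\|\sum_k a_k (\text{m.d.}_k)\|^2 \lesssim \sum_k a_k^2 \eta_k^2 \sigma_g^2 + (\text{small})$ with $a_k = \prod_{j>k}(1-\eta_j)$. With $\eta_k \asymp k^{-4/5}$ one has $\prod_{j=k+1}^K(1-\eta_j) \lesssim \exp(-c(K^{1/5}-k^{1/5}))$, and the standard estimate $\sum_{k=1}^K \eta_k^2 \prod_{j>k}(1-\eta_j)^2 = \cO(\eta_K^2 \cdot \eta_K^{-1}) = \cO(\eta_K) = \cO(K^{-4/5})$ gives $\E\|\text{martingale part}\|^2 = \cO(\sigma_g^2 K^{-4/5})$, so its contribution to $\E\|e_K\|$ is $\cO(\sigma_g K^{-2/5}) = \cO(G t_{\mathrm{mix}}\log T \cdot K^{-2/5} / T^{1/12})$, which is $\cO(G t_{\mathrm{mix}}\log T / K^{2/5})$ as claimed (the extra $T^{-1/12}$ only helps). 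Similarly the drift terms accumulate to $\cO(\sum_{k} \prod_{j>k}(1-\eta_j) \cdot L_h G^2\mu^{-2} k^{-6/5}) = \cO(L_h G^2 \mu^{-2} K^{-2/5})$ by the same geometric-weighting estimate (here one uses that $k^{-6/5}$ against the weights $\prod_{j>k}(1-\eta_j)$ behaves like $\eta_K \cdot k^{-2/5}$ summed, giving order $\eta_K^{1/2}$-type decay $\asymp K^{-2/5}$), and the initial term $\prod_j(1-\eta_j)\|e_0\|$ decays faster than any polynomial. Collecting everything yields $\E\|e_K\| \leq \cO(G t_{\mathrm{mix}}\log T/K^{2/5} + G^2 L_h/(\mu^2 K^{2/5}))$.

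The main obstacle I anticipate is twofold. First, establishing the near-independence of the per-epoch martingale increments rigorously: unlike the discounted/simulator setting where trajectories are genuinely fresh, here each epoch's trajectory starts from the last state of the previous epoch, so one must use the $N = 7 t_{\mathrm{mix}}\log_2 T$ burn-in (and the structure of Algorithm \ref{alg:estA}) to argue that the conditional covariance between increments decays like $T^{-\text{const}}$ and that these cross terms, after summation, remain below the stated order — this is where the choice $N = \Theta(t_{\mathrm{mix}}\log T)$ is used. Second, getting the drift term's accumulation to come out as $K^{-2/5}$ rather than something larger requires the precise IGT cancellation above; a naive bound on $\|\nabla_\theta J(\tilde\theta_k) - \nabla_\theta J(\theta_k)\|$ alone (which is only $\cO(\frac{1-\eta_k}{\eta_k}\|\theta_k-\theta_{k-1}\|) = \cO(\eta_k^{-1}\gamma_k)$, i.e. growing) would be useless — one genuinely needs the quadratic/second-order bound enabled by $L_h$-smoothness together with the exact definition of $\tilde\theta_k$, and then a careful choice of how to weight the $k^{-6/5}$ drift against the contraction factors. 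Everything else (the Lyapunov-style unrolling, the geometric-sum estimates for $\eta_k \asymp k^{-4/5}$, absorbing $\beta_g$) is routine once these two points are in place.
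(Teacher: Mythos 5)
Your proposal is correct and follows essentially the same route as the paper's proof: the same error recursion for $d_k - \nabla_\theta J(\theta_k)$ with the IGT cancellation $(1-\eta_k)(\theta_{k-1}-\theta_k)+\eta_k(\tilde\theta_k-\theta_k)=0$, the same $L_h$-smoothness bound on the second-order Taylor remainders, the same bias/variance treatment of the gradient estimate via Lemma \ref{lem:grad+hess-est-prop} (the paper bounds the cross terms by $\sigma_g\beta_g$ through a conditional-expectation argument, which is your ``near-martingale'' observation in different clothing), and the same geometric step-size estimates. No gaps.
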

Using Lemma \ref{lem:update_variance_IGT}, we obtain the following last-iterate bound for Algorithm \ref{alg:PG_IGT_Avg}, for all $K\in \{1,\cdots,T/H\}$. This is stronger than typical results in PG literature which only provide bounds of form $\frac{1}{K}\sum_{k=1}^K (J^*-\E J(\theta_K))$.
\begin{theorem}[Last-iterate bound for Algorithm \ref{alg:PG_IGT_Avg}]
    \label{thm:alg1-last-iterate}
    Let $\{\theta_k\}_{k=1}^{K}$ be defined as in Algorithm \ref{alg:PG_IGT_Avg}. If Assumptions \ref{assump:ergodic_mdp}, \ref{assump:score_func_bounds}, \ref{assump:function_approx_error} and  \ref{assump:FND_policy} hold, $\nabla J(\theta)$ is $L_h$-smooth, $\gamma_k=\frac{6G}{\mu(k+2)}$ and $\eta_k = \left(\frac{2}{k+2}\right)^{4/5}$ then the following inequality holds for all $K\in \{1,\cdots,T/H\}$ and $H=63t_{\mathrm{mix}}t_{\mathrm{hit}}(\log_2 T)^2$ 
    \begin{align}
        \begin{split}
    J^*-\E [J(\theta_{K})] &\leq \sqrt{\epsilon_{\mathrm{bias}}} + \cO\left(\frac{A G^2 t_{\mathrm{mix}} \log T }{\mu K^{2/5}} +\frac{G^3 L_h }{\mu^3K^{2/5}}\right).
        \end{split}
    \end{align}
\end{theorem}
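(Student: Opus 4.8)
The plan is to obtain the bound by assembling the two preceding lemmas. Lemma~\ref{lem:last_iterate_general} already reduces the last-iterate gap to $J^*-\E[J(\theta_K)]\le\frac{J^*-J(\theta_0)}{(K+1)^2}+\frac{1}{(K+1)^2}\sum_{k=1}^{K}\nu_k(k+2)^2$, so the task is to bound $\sum_{k=1}^K\nu_k(k+2)^2$ for the specific choices $\gamma_k=\frac{6G}{\mu(k+2)}$ and $\eta_k=(\frac{2}{k+2})^{4/5}$. Substituting $\gamma_k$ into the definition of $\nu_k$ decomposes it into four pieces: (i) the policy-class bias $\frac{\mu\gamma_k}{3G}\sqrt{\epsilon_{\mathrm{bias}}}=\frac{2\sqrt{\epsilon_{\mathrm{bias}}}}{k+2}$; (ii) the gradient-tracking error $\frac{8\gamma_k}{3}\E\|d_k-\nabla_\theta J(\theta_k)\|$; (iii) the $\bar J$-smoothness term $\frac{L\gamma_k^2}{2}=\frac{18LG^2}{\mu^2(k+2)^2}$, with $L$ the smoothness constant of $\bar J$ from \eqref{eq:norm_hessian_J_bar}; and (iv) the $J$--$\bar J$ residual $\cO(AGt_{\mathrm{mix}}\log T/T^3)$.

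The only piece without a closed form is (ii), and this is exactly what Lemma~\ref{lem:update_variance_IGT} controls. Since that lemma holds for every horizon, I would invoke it with $K$ replaced by $k$, giving $\E\|d_k-\nabla_\theta J(\theta_k)\|=\cO\big((Gt_{\mathrm{mix}}\log T+G^2L_h/\mu^2)\,k^{-2/5}\big)$, so that piece (ii) equals $\cO\big(\frac{G}{\mu(k+2)}(Gt_{\mathrm{mix}}\log T+G^2L_h/\mu^2)k^{-2/5}\big)$. Now multiply each piece by $(k+2)^2$, sum over $k\in\{1,\dots,K\}$, and divide by $(K+1)^2$. Piece (i) yields $\frac{2\sqrt{\epsilon_{\mathrm{bias}}}}{(K+1)^2}\sum_{k=1}^K(k+2)=(1+\cO(1/K))\sqrt{\epsilon_{\mathrm{bias}}}$. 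Piece (ii), using the elementary estimate $\sum_{k=1}^K\frac{k+2}{k^{2/5}}=\cO(K^{8/5})$, yields $\cO\big(\frac{G(Gt_{\mathrm{mix}}\log T+G^2L_h/\mu^2)}{\mu K^{2/5}}\big)=\cO\big(\frac{G^2t_{\mathrm{mix}}\log T}{\mu K^{2/5}}+\frac{G^3L_h}{\mu^3K^{2/5}}\big)$; tracking the polynomial prefactors of the gradient-variance bound (cf.\ $\sigma_g^2$ in Lemma~\ref{lem:grad+hess-est-prop}(a)) through this step accounts for the $A$ and $t_{\mathrm{mix}}$ powers in the statement. Piece (iii) yields $\frac{18LG^2}{\mu^2}\cdot\frac{K}{(K+1)^2}=\cO(LG^2\mu^{-2}K^{-1})$, which decays strictly faster than $K^{-2/5}$ and is absorbed; piece (iv), bounded using $K\le T$, is $\cO(AGt_{\mathrm{mix}}\log T\cdot T^{-2})$, which is negligible. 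Finally $\frac{J^*-J(\theta_0)}{(K+1)^2}=\cO(K^{-2})$ since the rewards lie in $[0,1]$. Collecting these gives the claimed bound.

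The substantive content is entirely in Lemmas~\ref{lem:last_iterate_general} and~\ref{lem:update_variance_IGT}, so the remaining obstacle is bookkeeping rather than new ideas: one must confirm that among the four contributions to $\sum_k\nu_k(k+2)^2$, it is the variance-reduction piece (ii) that dominates --- its $K^{-2/5}$ rate being inherited from the momentum schedule $\eta_k=(2/(k+2))^{4/5}$ through Lemma~\ref{lem:update_variance_IGT} --- while the $\bar J$-smoothness and $\bar J$-approximation pieces are lower order in $K$. The one point requiring genuine care is propagating the polynomial constants so the final dependence on $A,G,t_{\mathrm{mix}},\mu,L_h$ agrees with the statement; the only nontrivial summation involved is $\sum_{k=1}^K k^{3/5}=\cO(K^{8/5})$.
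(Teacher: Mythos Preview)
Your proposal is correct and follows essentially the same route as the paper: invoke Lemma~\ref{lem:last_iterate_general} to reduce the last-iterate gap to the weighted sum $\sum_k \nu_k(k+2)^2$, plug in Lemma~\ref{lem:update_variance_IGT} for the tracking error $\E\|d_k-\nabla_\theta J(\theta_k)\|$, and verify via the elementary sums $\sum_{k\le K}(k+2)\sim K^2$, $\sum_{k\le K}k^{3/5}\sim K^{8/5}$ that piece~(ii) dominates at rate $K^{-2/5}$ while pieces~(iii)--(iv) are lower order. The paper does exactly this in Appendix~\ref{sec:igt-main-theorems-proof}, with the only cosmetic difference that it derives the bound of Lemma~\ref{lem:update_variance_IGT} inline in the same section rather than citing it as a black box.
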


Similarly, for Algorithm \ref{alg:PG_Hessian_Avg}, we have
\begin{lemma}
    \label{lem:update_variance_Hessian}
    Consider Algorithm \ref{alg:PG_Hessian_Avg} and let all the assumptions stated in Theorem \ref{thm:alg2-last-iterate}
    hold. Then for all $K \geq 1$, we have
    \begin{align}
         \E \norm{d_{K} - \nabla_{\theta} J(\theta_{K})  } 
     \leq \cO\left(\frac{\sigma_g}{\sqrt{K}} + \frac{G M}{\sqrt{K}\mu}\right).
    \end{align}
\end{lemma}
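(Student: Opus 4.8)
The plan is to run a momentum-type error-recursion analysis, carried out for the auxiliary function $\bar J$ rather than for $J$, since $g(\theta,\tau)$ and $B(\theta,\tau)$ are unbiased for $\nabla_\theta\bar J$ and $\nabla^2_\theta\bar J$ and not for the true objects. First I would set $\epsilon_k \coloneqq d_k - \nabla_\theta\bar J(\theta_k)$ and subtract $\nabla_\theta\bar J(\theta_k)$ from the update \eqref{eq:algo_update_d_k_Hessian} to obtain the recursion $\epsilon_k = (1-\eta_k)\epsilon_{k-1} + \xi_k$ with $\xi_k \coloneqq (1-\eta_k)a_k + \eta_k b_k$, where $b_k \coloneqq g(\theta_k,\tau_k) - \nabla_\theta\bar J(\theta_k)$ and $a_k \coloneqq v(\hat\theta_k,\hat\tau_k) - \rb{\nabla_\theta\bar J(\theta_k) - \nabla_\theta\bar J(\theta_{k-1})}$. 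The structural fact to establish here is that, conditionally on the past $\sigma$-algebra $\mathcal{F}_{k-1}$, $\xi_k$ is mean-zero up to a tiny bias of order $\cO(A G t_{\mathrm{mix}}\log T / T^3)$: for $b_k$ this follows from the definition of $\bar J$ together with the ``discard-the-first-$N$-steps'' mixing estimates underlying Lemma~\ref{lemma:advatge_estimate}; for $a_k$ it follows from $q_k\sim\mathcal{U}([0,1])$ combined with the identity $\int_0^1\nabla^2_\theta\bar J\rb{(1-q)\theta_{k-1}+q\theta_k}\,dq\,(\theta_k-\theta_{k-1}) = \nabla_\theta\bar J(\theta_k)-\nabla_\theta\bar J(\theta_{k-1})$ and the unbiasedness of $B$ for $\nabla^2_\theta\bar J$.

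Next I would unroll the recursion to $\epsilon_K = \rb{\prod_{j=1}^K(1-\eta_j)}\epsilon_0 + \sum_{k=1}^K\tilde w_k\xi_k$ with $\tilde w_k \coloneqq \prod_{j=k+1}^K(1-\eta_j)$. Since $\eta_k = \tfrac{2}{k+2}$, one has $\prod_{j=1}^K(1-\eta_j) = \cO(K^{-2})$ and $\tilde w_k = \cO(k^2/K^2)$, so the $\epsilon_0$ term is negligible and the deterministic-weight bias piece $\sum_k\tilde w_k\,\E[\xi_k\mid\mathcal{F}_{k-1}]$, being a sum of at most $K$ terms each of order $\cO(A G t_{\mathrm{mix}}\log T / T^3)$, contributes only $\cO(AGt_{\mathrm{mix}}\log T/T^2)$ (the same flavor as the residual in Lemma~\ref{lem:last_iterate_general}). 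For the martingale part, writing $\bar\xi_k \coloneqq \xi_k - \E[\xi_k\mid\mathcal{F}_{k-1}]$, orthogonality and determinism of the $\tilde w_k$ give $\E\norm{\sum_k\tilde w_k\bar\xi_k}^2 = \sum_k\tilde w_k^2\,\E\norm{\bar\xi_k}^2 \lesssim \sum_k\tilde w_k^2\rb{\E\norm{a_k}^2 + \eta_k^2\,\E\norm{b_k}^2}$. Lemma~\ref{lem:grad+hess-est-prop}(a),(b) give $\E\norm{b_k}^2 = \cO(\sigma_g^2)$; for $a_k$, the crucial point is that the \emph{normalized} update \eqref{algorithm-update-Hessian} makes $\norm{\theta_k-\theta_{k-1}} = \gamma_{k-1} = \cO(G/(\mu k))$ deterministic, so $\E\norm{a_k}^2 \le \gamma_{k-1}^2\,\E\norm{B(\hat\theta_k,\hat\tau_k)}^2 + (\text{lower order}) = \cO\rb{G^2M^2/(\mu^2 k^2)}$, where $M \coloneqq \sup_\theta(\E\norm{B(\theta,\tau)}^2)^{1/2}$ is bounded via Lemma~\ref{lem:grad+hess-est-prop}(c). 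With $\tilde w_k^2 = \cO(k^4/K^4)$, $\eta_k^2 = \cO(k^{-2})$, and $\sum_{k\le K}k^2 = \cO(K^3)$, the sum evaluates to $\cO\rb{(G^2M^2/\mu^2 + \sigma_g^2)/K}$; taking the square root and adding $\norm{\nabla_\theta\bar J(\theta_K) - \nabla_\theta J(\theta_K)}\le\beta_g = \cO(T^{-4})$ (Lemma~\ref{lem:grad+hess-est-prop}(b)) yields the claimed $\cO\rb{\sigma_g/\sqrt K + GM/(\mu\sqrt K)}$.

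The step I expect to be the main obstacle is the first one: justifying the (approximate) conditional unbiasedness of the second-order correction $v(\hat\theta_k,\hat\tau_k)$ as an estimate of $\nabla_\theta J(\theta_k) - \nabla_\theta J(\theta_{k-1})$. This is exactly where the auxiliary function earns its keep --- in the average-reward setting $J$ has no bounded, unbiasedly-estimable Hessian, whereas $\bar J$ does by construction, and only for $\bar J$ does the ``uniform $q_k$ plus fundamental theorem of calculus'' argument close cleanly. The attendant delicacy is that $\hat\tau_k$ and $\tau_k$ are obtained by \emph{continuing} the running Markov chain rather than by fresh draws from $p(\cdot,\cdot,\bar\rho)$, so one must invoke the $N = 7t_{\mathrm{mix}}\log_2 T$ burn-in and the mixing bounds of Lemmas~\ref{lemma:advatge_estimate} and~\ref{lem:grad+hess-est-prop} to push the induced discrepancy down to $\cO(\mathrm{poly}(t_{\mathrm{mix}},t_{\mathrm{hit}},G,\log T)/T^3)$. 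By contrast, the second-moment bookkeeping in the last step is routine once one notices that the normalized update makes $\norm{\theta_k-\theta_{k-1}}$ deterministic --- this is precisely what forces $\E\norm{a_k}^2 = \cO(1/k^2)$, hence the $1/\sqrt K$ rate.
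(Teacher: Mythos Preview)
Your proposal is correct and follows essentially the same route as the paper: work with the auxiliary function $\bar J$, exploit the conditional unbiasedness of $g$ and $B$ (the latter via the $q_k\sim\mathcal U([0,1])$ trick and the fundamental theorem of calculus) to make the increment a martingale difference, and control the second moment using the deterministic step $\|\theta_k-\theta_{k-1}\|=\gamma_{k-1}$ together with Lemma~\ref{lem:grad+hess-est-prop}. The only stylistic difference is that the paper bounds the squared error recursively---obtaining $V_k\le(1-\eta_k)V_{k-1}+2\eta_k^2\sigma_g^2+8L^2\gamma_{k-1}^2$ and unrolling via Lemma~\ref{le:aux_rec0}---whereas you unroll the linear recursion first and then apply martingale orthogonality; the two arguments are equivalent.
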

With this, we obtain the following bound
\begin{theorem}[Last-iterate bound for Algorithm \ref{alg:PG_Hessian_Avg}]
    \label{thm:alg2-last-iterate}
    Let $\{\theta_k\}_{k=1}^{K}$ be defined as in Algorithm \ref{alg:PG_IGT_Avg}. If Assumptions \ref{assump:ergodic_mdp}, \ref{assump:score_func_bounds}, \ref{assump:function_approx_error} and  \ref{assump:FND_policy} hold, $\gamma_k=\frac{6G}{\mu(k+2)}$ and $\eta_k = \frac{2}{k+2}$ then the following inequality holds for all $K\in \{1,\cdots,T/H\}$ and $H=63t_{\mathrm{mix}}t_{\mathrm{hit}}(\log_2 T)^2$ 
    \begin{align*}
        \begin{split}
    &J^*-\E [J(\theta_{K})] \\
    &\leq \sqrt{\epsilon_{\mathrm{bias}}} + \cO\left(\frac{ \sqrt{A} G^2t_{\mathrm{mix}}\log T}{\mu \sqrt{K}} + \frac{\sqrt{A}G^4 t_{\mathrm{hit}}t^2_{\mathrm{mix}} (\log T)^{3/2}}{\mu^2\sqrt{K}} + \frac{\sqrt{A}(BG+G^3) t_{\mathrm{mix}}\log T}{\mu^2 \sqrt{K}}\right).
        \end{split}
    \end{align*}
\end{theorem}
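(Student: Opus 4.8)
\textbf{Proof plan for Theorem \ref{thm:alg2-last-iterate}.}
The plan is to combine the general last-iterate recursion of Lemma \ref{lem:last_iterate_general} with the variance bound of Lemma \ref{lem:update_variance_Hessian}, and then carefully sum the resulting error terms. First I would invoke Lemma \ref{lem:last_iterate_general} with $\gamma_k = \frac{6G}{\mu(k+2)}$, which gives
\begin{align*}
    J^*-\E [J(\theta_{K})] \leq \frac{J^*-J(\theta_0)}{(K+1)^2} + \frac{\sum_{k=1}^K \nu_k (k+2)^2}{(K+1)^2},
\end{align*}
where $\nu_k = \frac{\mu\gamma_k}{3G}\sqrt{\epsilon_{\mathrm{bias}}} + \frac{8\gamma_k}{3}\E\norm{d_k - \nabla_\theta J(\theta_k)} + \frac{L\gamma_k^2}{2} + \cO\left(\frac{AGt_{\mathrm{mix}}\log T}{T^3}\right)$. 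Since $J$ takes values in $[0,1]$, the first term is $\cO(1/K^2)$ and is negligible. The four pieces of $\nu_k$ are then handled separately.

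For the $\epsilon_{\mathrm{bias}}$ term, $\frac{\mu\gamma_k}{3G} = \frac{2}{k+2}$, so $\sum_{k=1}^K \frac{2}{k+2}(k+2)^2 \sqrt{\epsilon_{\mathrm{bias}}} = 2\sqrt{\epsilon_{\mathrm{bias}}}\sum_{k=1}^K (k+2) = \cO(K^2\sqrt{\epsilon_{\mathrm{bias}}})$, which after dividing by $(K+1)^2$ yields the leading $\sqrt{\epsilon_{\mathrm{bias}}}$ term. For the variance term, I substitute Lemma \ref{lem:update_variance_Hessian}, $\E\norm{d_k - \nabla_\theta J(\theta_k)} \leq \cO(\sigma_g/\sqrt{k} + GM/(\mu\sqrt{k}))$, together with $\gamma_k = \cO(G/(\mu k))$, so the summand is $\cO\!\left(\frac{G}{\mu k}\cdot\frac{\sigma_g + GM/\mu}{\sqrt{k}}\cdot k^2\right) = \cO\!\left(\frac{G(\sigma_g + GM/\mu)}{\mu}k^{1/2}\right)$; summing over $k\le K$ gives $\cO\!\left(\frac{G(\sigma_g + GM/\mu)}{\mu}K^{3/2}\right)$, and dividing by $(K+1)^2$ produces the $\frac{1}{\sqrt{K}}$ rate. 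For the $L\gamma_k^2$ term, $\gamma_k^2 = \cO(G^2/(\mu^2 k^2))$, so $\sum_k L\gamma_k^2 (k+2)^2 = \cO(LG^2 K/\mu^2)$, giving a $\cO(LG^2/(\mu^2 K))$ contribution which is lower order than $1/\sqrt{K}$. The $\cO(AGt_{\mathrm{mix}}\log T/T^3)$ bias term, when summed against $(k+2)^2$ and divided by $(K+1)^2$, is $\cO(AGt_{\mathrm{mix}}K\log T/T^3)$, which is negligible since $K\le T$.

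It then remains to expand $\sigma_g$, $L$, and $M$ in terms of the problem parameters. Plugging $H = 63 t_{\mathrm{mix}}t_{\mathrm{hit}}(\log_2 T)^2$ into Lemma \ref{lem:grad+hess-est-prop}(a) gives $\sigma_g = \cO(\sqrt{A}G t_{\mathrm{mix}}\log T)$, and $L = \cO(\sqrt{A}G^2 t_{\mathrm{mix}}^2 t_{\mathrm{hit}}(\log T)^2 + \sqrt{A}B t_{\mathrm{mix}}\log T)$ from Lemma \ref{lem:grad+hess-est-prop}(c); the quantity $M$ (the bound governing the Hessian-correction error in Lemma \ref{lem:update_variance_Hessian}) should similarly unpack into a combination of $\sqrt{A}(G^3 + BG)t_{\mathrm{mix}}\log T$ and $\sqrt{A}G^3 t_{\mathrm{hit}}t_{\mathrm{mix}}^2(\log T)^{3/2}$-type terms. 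Tracking these through the $\frac{G}{\mu}(\sigma_g + GM/\mu)$ coefficient yields exactly the three displayed terms: $\frac{\sqrt{A}G^2 t_{\mathrm{mix}}\log T}{\mu}$ from $\sigma_g$, $\frac{\sqrt{A}G^4 t_{\mathrm{hit}}t_{\mathrm{mix}}^2(\log T)^{3/2}}{\mu^2}$ from the $G^2$-scaling of the largest part of $M$, and $\frac{\sqrt{A}(BG+G^3)t_{\mathrm{mix}}\log T}{\mu^2}$ from the remaining part of $M$, each multiplied by $K^{-1/2}$. The main obstacle is the bookkeeping in Lemma \ref{lem:update_variance_Hessian} itself — namely establishing that the recursively-defined momentum error $\E\norm{d_k - \nabla_\theta J(\theta_k)}$ contracts at rate $1/\sqrt{k}$ despite the biased (nonzero-bias) gradient and Hessian estimators, which requires carefully controlling the accumulated bias via the auxiliary function $\bar J$ and the unbiasedness of $v(\hat\theta_k,\hat\tau_k)$ as an estimate of $\nabla_\theta J(\theta_k) - \nabla_\theta J(\theta_{k-1})$ up to the small $\bar J$-vs-$J$ discrepancy from \eqref{eq:dist_J_J_bar}; but that work is deferred to Lemma \ref{lem:update_variance_Hessian}, so at the level of this theorem the argument is a relatively routine summation.
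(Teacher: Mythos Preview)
Your proposal is correct and follows essentially the same route as the paper: invoke Lemma~\ref{lem:last_iterate_general}, plug in the $\E\norm{d_k-\nabla_\theta J(\theta_k)}=\cO(\sigma_g/\sqrt{k}+GM/(\mu\sqrt{k}))$ bound from Lemma~\ref{lem:update_variance_Hessian}, sum the resulting $k^{1/2}$ terms to get the $K^{-1/2}$ rate, and then unpack $\sigma_g$ and $L$ (the paper's $M$ is in fact the smoothness constant $L$ of $\bar J$ coming from Lemma~\ref{lem:grad+hess-est-prop}(c)). The only cosmetic difference is that the paper proves Lemma~\ref{lem:update_variance_Hessian} inline in the same section rather than citing it, but the logical flow and all the summation bookkeeping you describe match the paper's argument.
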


The proof of Lemma \ref{lem:update_variance_IGT} is given in Appendix \ref{sec:igt-main-theorems-proof}, while the proof of Lemma \ref{lem:update_variance_Hessian} is given in Appendix \ref{sec:Hessian-main-theorem-proofs}. Their proofs roughly proceed by recursively bounding $\E \sqnorm{d_{K} - \nabla_{\theta} J(\theta_{K})}$. The challenges here include ensuring the accumulation of the bias is not too large and taking care of correlations in the cross-product terms involving previous estimates.

\section{Proof of Lemma \ref{lemma:advatge_estimate}}

Let $i$ denote the number of disjoint sub-trajectories of $\tau$ of length $N$ that start with the given state $s$ and are at least $N$ distance apart. The advantage function estimate can be written as:
    \begin{align}
        \label{def_A_hat_appndx}
        \hat{A}^{\pi}(\tau, s, a) = \begin{cases}
            \dfrac{1}{\pi(a|s)}\left[\dfrac{1}{i}\sum_{j=1}^i y_{j}\mathrm{1}(a_{\xi_j}=a)\right] - \dfrac{1}{i}\sum_{j=1}^i y_{j}~&\text{if}~i>0\\
            0~&\text{if}~i=0,
        \end{cases}
    \end{align}
    where $\xi_j$, $y_{j}$, as stated before, are the starting time and the sum of rewards in the $j$th sub-trajectory respectively. Using the definition of the $Q$ function, one can show that,
    \begin{align}
        \begin{split}
           \E\left[y_{j}\bigg|s_{\xi_j}=s, a_{\xi_j}=a\right]
           = Q^{\pi}(s, a) + NJ^{\pi} - \mathrm{E}^{\pi}_T(s, a),
        \end{split}
    \end{align}
    where $\mathrm{E}^{\pi}_T(s, a) \coloneqq \sum_{s'}P(s'|s, a)\left[\sum_{j=N}^{\infty}(P^{\pi})^j(s', \cdot)-d^{\pi}\right]^\top r^{\pi}$. Observe that,
    \begin{align}
        \label{eq_appndx_47}
        \begin{split}
            &\E\left[\left(\dfrac{1}{\pi(a|s)}y_{j}\mathrm{1}(a_{\xi_j}=a) - y_{j}\right)\bigg| s_{\xi_j}=s\right] \\
            &= \E\left[y_{j}\bigg| s_{\xi_j}=s, a_{\xi_j}=a\right] - \sum_{a'}\pi(a'|s)\E\left[y_{j}\bigg| s_{\xi_j}=s, a_{\xi_j}=a'\right]\\
            &=Q^{\pi}(s, a) + NJ^{\pi} - \mathrm{E}^{\pi}_T(s, a) - \sum_{a'}\pi(a'|s)[Q^{\pi}(s, a') + NJ^{\pi} - \mathrm{E}^{\pi}_T(s, a')]\\
            &=Q^{\pi}(s, a)-V^{\pi}(s)-\left[\mathrm{E}_T(s, a) - \sum_{a'}\pi(a'|s)\mathrm{E}_T^{\pi}(s, a')\right]\\
            &= A^{\pi}(s, a) -\Delta^{\pi}_T(s, a)
        \end{split}
    \end{align}
    where $\Delta^{\pi}_T(s, a)\coloneqq\mathrm{E}_T(s, a) - \sum_{a'}\pi(a'|s)\mathrm{E}_T^{\pi}(s, a')$. It follows from Lemma \ref{lem:aux-sum_N_dist} that $|\mathrm{E}^{\pi}_T(s, a)|\leq \frac{1}{T^6}$ which implies $|\Delta_T^{\pi}(s, a)|\leq \frac{2}{T^6}$. Therefore, one has the following bound.
\begin{align}
        \left|\E\left[\left(\dfrac{1}{\pi(a|s)}y_{j}\mathrm{1}(a_{\xi_j}=a) - y_{j}\right)\bigg| s_{\xi_j}=s\right] - A^{\pi}(s, a)\right|\leq \frac{2}{T^6}.
    \end{align}

   Since $i$ and the reward variables $\{y_{j}\}_{j=1}^i$ are correlated, we cannot directly obtain our desired result from the above bound. We use the methodology given in \citet{wei2020model} to address this challenge. In summary, we first prove bounds under an imaginary MDP where the state distribution 'refreshes' to the stationary distribution $d^{\pi}$ after $N$ time steps after the completion of a sub-trajectory. Under this MDP, $i$ becomes decoupled from $\{y_{j}\}_{j=1}^i$. The resulting bounds from this framework can then be translated into the real MDP given that $N$ is sufficiently large since this effectively makes the imaginary MDP `close' to the real MDP.

More specifically, for a sub-trajectory beginning at $\xi_{j}$ and ending at $\xi_j+N$, the system `rests' for $N$ additional steps before `refreshing' with the state distribution $d^{\pi}$ at $\xi_j+2N$. The wait time between the `refreshing' after the $(j-1)$th sub-trajectory and the onset of the $j$th sub-trajectory is denoted as $w_{j}=\xi_{j}-(\xi_{j-1}+2N)$ for all $j>1$. Additionally, $w_1$ represents the time between the initiation of the trajectory $\tau$ and the commencement of its first sub-trajectory.

It is pertinent to notice the following:
$(a)$ $w_1$ relies solely on the initial state of $\tau$, and the $\pi$-induced transition function, $P^{\pi}$,
$(b)$ For $j>1$, $w_j$ is solely contingent on the stationary distribution, $d^{\pi}$, and the induced transition function, $P^{\pi}$,
$(c)$ $i$ is solely dependent on $\{w_1, w_2, \cdots\}$, as other segments of $\tau$ maintain a fixed length of $2N$, $(d)$ the sequence $\{w_1, w_2, \cdots\}$ (and $i$ consequently) remains independent of $\{y_{1}, y_{2}, \cdots\}$. 

We denote expectation taken in this system as $\E'$ and probability of events similarly as ${\Pr}'$. From Lemma \ref{lemma_aux_bound_advantage}, it follows that $|A^{\pi}(s)|\leq \cO(t_{\mathrm{mix}})$. Now, define the following:
    \begin{align}
    \label{def_delta_i}
        \Delta_j \coloneqq \dfrac{y_{j}\mathrm{1}(a_{\xi_j}=a)}{\pi(a|s)} - y_{j} - A^{\pi}(s, a) + \Delta^{\pi}_T(s, a)
    \end{align}

\subsection{Proof of Lemma \ref{lemma:advatge_estimate}(a)}
Note that $|y_{j}| \leq N$ and as a result ${\E}'[|\Delta_j|^2\big|\{w_j\}]\leq \mathcal{O}(N^2/\pi(a|s))$. With this, we have
    \begin{align}
        \label{eq_appndx_50}
       \begin{split}
           &{\E}'\left[\left(\hat{A}^{\pi}(s, a) - A^{\pi}(s, a)\right)^2\right]\\ 
           &= {\E}'\left[\left(\hat{A}^{\pi}(s, a) - A^{\pi}(s, a)\right)^2\bigg| i>0\right]\times \mathrm{Pr}'(i>0) + \left(A^{\pi}(s, a)\right)^2\times \mathrm{Pr}'(i=0)\\
           & \overset{}{\leq} 2{\E}_{\{w_i\}}'\left[{\E}'\left[\left(\dfrac{1}{i}\sum_{j=1}^i\Delta_j \right)^2\bigg| \{w_j\}\right]\bigg| w_1\leq |\tau|-N\right]\times \mathrm{Pr}'(w_1\leq |\tau|-N) + 2\left(\Delta_T^{\pi}(s, a)\right)^2\\
           &+\left(A^{\pi}(s, a)\right)^2\times \mathrm{Pr}'(i=0)\\
           & \overset{(a)}{\leq} 2{\E}_{\{w_j\}}'\left[\dfrac{1}{i^2}\sum_{j=1}^i {\E}'\left[\Delta_j^2\big|\{w_j\}\right]\bigg| w_1\leq |\tau|-N\right]\times \mathrm{Pr}'(w_1\leq |\tau|-N) + \dfrac{8}{T^{12}} \\
           &+\left(A^{\pi}(s, a)\right)^2\times \mathrm{Pr}'(i=0)\\
           &\overset{(b)}{\leq} 2{\E}'\left[\dfrac{1}{i}\bigg|w_1\leq |\tau|-N\right]\mathcal{O}\left(\dfrac{N^2}{\pi(a|s)}\right)+\dfrac{8}{T^{12}}+\mathcal{O}(t_{\mathrm{mix}}^2)\times \mathrm{Pr}'(i=0),
       \end{split}
    \end{align}
    where $(a)$ utilizes the bound $|\Delta_T^{\pi}(s, a)|\leq \frac{2}{T^6}$ derived earlier, and the fact that $\{\Delta_j\}$ are zero mean random variables that are conditionally independent  given $\{w_j\}$. Inequality $(b)$ is a consequence of Lemma \ref{lemma_aux_bound_advantage}. Notice that $i=0$ is equivalent to $w_1>|\tau|-N$. Using Lemma \ref{lemma_aux_4}, we get:
    \begin{align}
        \label{eq:violation_prob}
        \begin{split}
        \mathrm{Pr}'(w_1>|\tau|-N)&\leq \left(1-\dfrac{3d^{\pi}(s)}{4}\right)^{\frac{|\tau|-N}{N}} \leq \left(1-\dfrac{3d^{\pi}(s)}{4}\right)^{9t_{\mathrm{hit}}\log_2 T-1}\\
        &\overset{(a)}{\leq} \left(1-\dfrac{3}{4t_{\mathrm{hit}}}\right)^{8t_{\mathrm{hit}}\log T}\leq \dfrac{1}{T^6}
        \end{split}
    \end{align}
    where $(a)$ is derived from the definition of $t_{\mathrm{hit}}$ and the inequality that $9t_{\mathrm{hit}}\log_2 T - 1 \geq 8t_{\mathrm{hit}}\log T$. Towards bounding ${\E}'\left[\frac{1}{i}\big|w_1\leq |\tau|-N\right]$, define
    \begin{align}
        i_0\coloneqq \dfrac{|\tau|-N}{2N+ \dfrac{4N\log_2 T}{d^{\pi}(s)}}
    \end{align}
    Note that $i<i_0$ is identical to the event that at least one $w_j$ is greater than $4N\log_2 T/d^{\pi}(s)$. From Lemma \ref{lemma_aux_4}, we thus obtain:
    \begin{align}
    \label{eq:m0-bound}
        \mathrm{Pr}'\left(i<i_0\right) \leq \left(1-\dfrac{3d^{\pi}(s)}{4}\right)^{\frac{4\log_2 T}{d^{\pi(s)}}}\leq \dfrac{1}{T^3}
    \end{align}

    Using this, we obtain
    \begin{align}
        \label{eq_appndx_55_}
        \begin{split}
            {\E}'\left[\dfrac{1}{i}\bigg| i>0\right]=\dfrac{\sum_{l=1}^{\infty}\dfrac{1}{l}\mathrm{Pr}'(i=l)}{\mathrm{Pr}'(i>0)}&\leq \dfrac{1\times \mathrm{Pr}'(i\leq i_0)+\dfrac{1}{i_0}\mathrm{Pr}'(i>i_0)}{\mathrm{Pr}'(i>0)}\\
            &\leq  \dfrac{\dfrac{1}{T^3}+\dfrac{2N+\dfrac{4N \log T}{d^{\pi}(s)}}{|\tau|-N}}{1-\dfrac{1}{T^6}}\leq \mathcal{O}\left(\dfrac{N\log T}{|\tau| d^{\pi}(s)}\right) 
        \end{split}
    \end{align}
    Plugging in \eqref{eq:violation_prob} and \eqref{eq_appndx_55_} into \eqref{eq_appndx_50}, we get
    \begin{align}
        \label{eq_appndx_washim_48}{\E}'\left[\left(\hat{A}^{\pi}(s, a) - A^{\pi}(s, a)\right)^2\right] = \mathcal{O}\left(\dfrac{N^3\log T}{Hd^{\pi}(s)\pi(a|s)}\right)
    \end{align}

    We are now left with translating this result to the real MDP. Let $(\hat{A}^{\pi}(s, a)-A^{\pi}(s, a))^2=f(X)$ where $X=(i, \xi_1, \mathcal{T}_1, \cdots, \xi_i, \mathcal{T}_i)$, and $\mathcal{T}_j = (a_{\xi_j}, s_{\xi_j+1}, a_{\xi_j+1}, \cdots, s_{\xi_j+N}, a_{\xi_j+N})$. We have,
    \begin{align}
        \label{eq_appndx_57_}
        \dfrac{{\E}[f(X)]}{{\E}'[f(X)]} = \dfrac{\sum_{X} f(X)\mathrm{Pr}(X)}{\sum_{X} f(X)\mathrm{Pr}'(X)}\leq \max_{X}\dfrac{\mathrm{Pr}(X)}{\mathrm{Pr'}(X)}
    \end{align}

    The last inequality uses the non-negativity of $f(\cdot)$. Observe that, for a fixed $X$, we have,
    \begin{align*}
        \begin{split}
            \mathrm{Pr}(X) = &\mathrm{Pr}(\xi_1)\times \mathrm{Pr}(\mathcal{T}_1|\xi_1)\times \mathrm{Pr}(\xi_2|\xi_1, \mathcal{T}_1)\times \mathrm{Pr}(\mathcal{T}_2|\xi_2) \times \cdots \\
            &\times \mathrm{Pr}(\xi_i|\xi_{i-1}, \mathcal{T}_{i-1})\times \mathrm{Pr}(\mathcal{T}_i|\xi_i)\times \mathrm{Pr}(s_t\neq s, \forall t\in[\xi_i+2N, t_e-N]|\xi_i, \mathcal{T}_i),
        \end{split}\\
        \begin{split}
            \mathrm{Pr}'(X) = &\mathrm{Pr}(\xi_1)\times \mathrm{Pr}(\mathcal{T}_1|\xi_1)\times \mathrm{Pr}'(\xi_2|\xi_1, \mathcal{T}_1)\times \mathrm{Pr}(\mathcal{T}_2|\xi_2) \times \cdots \\
            &\times \mathrm{Pr}'(\xi_i|\xi_{i-1}, \mathcal{T}_{i-1})\times \mathrm{Pr}(\mathcal{T}_i|\xi_i)\times \mathrm{Pr}(s_t\neq s, \forall t\in[\xi_i+2N, t_e-N]|\xi_i, \mathcal{T}_i),
        \end{split}
    \end{align*}

    Thus, the difference between $\mathrm{Pr}(X)$ and $\mathrm{Pr}'(X)$ arises because $\mathrm{Pr}(\xi_{j+1}|\xi_j, \mathcal{T}_j)\neq \mathrm{Pr}'(\xi_{j+1}|\xi_j, \mathcal{T}_j)$, $\forall j\in\{1, \cdots, i-1\}$. Note that the ratio of these two terms can be bounded as follows.
    \begin{align}
    \label{eq:prob-bound-X}
        \begin{split}
            &\dfrac{\mathrm{Pr}(\xi_{j+1}|\xi_j, \mathcal{T}_j)}{\mathrm{Pr}'(\xi_{j+1}|\xi_j, \mathcal{T}_j)}\\
            &=\dfrac{\sum_{s'\neq s} \mathrm{Pr}(s_{\xi_j+2N}=s'|\xi_j, \mathcal{T}_j)\times \mathrm{Pr}(s_t\neq s, \forall t\in [\xi_j+2N, \xi_{j+1}-1], s_{\xi_{j+1}}=s|s_{\xi_j+2N}=s')}{\sum_{s'\neq s} \mathrm{Pr}'(s_{\xi_j+2N}=s'|\xi_j, \mathcal{T}_j)\times \mathrm{Pr}(s_t\neq s, \forall t\in [\xi_j+2N, \xi_{j+1}-1], s_{\xi_{j+1}}=s|s_{\xi_j+2N}=s')}\\
            &\leq \max_{s'}\dfrac{\mathrm{Pr}(s_{\xi_j+2N}=s'|\xi_j, \mathcal{T}_j)}{\mathrm{Pr}'(s_{\xi_j+2N}=s'|\xi_j, \mathcal{T}_j)}\\
            &=\max_{s'}1+\dfrac{\mathrm{Pr}(s_{\xi_j+2N}=s'|\xi_j, \mathcal{T}_j)-d^{\pi}(s')}{d^{\pi}(s')}\overset{(a)}{\leq} \max_{s'}1+\dfrac{1}{T^6 d^{\pi}(s')}\leq 1+\dfrac{t_{\mathrm{hit}}}{T^6}\leq 1+\dfrac{1}{T^5}
        \end{split}
    \end{align}
    where $(a)$ is a consequence of Lemma \ref{lem:aux-sum_N_dist}. We have,
    \begin{align}
        \label{eq_appndx_61_}
        \dfrac{\mathrm{Pr}(X)}{\mathrm{Pr}'(X)}\leq \left(1+\dfrac{1}{T^5}\right)^i\leq e^{\frac{i}{T^5}}\overset{(a)}{\leq} e^{\frac{1}{T^4}}\overset{(b)}{\leq} \left(1+\dfrac{2}{T^4}\right) 
    \end{align}
    where $(a)$ uses the fact that $i\leq T$ while $(b)$ is a consequence of the inequality that $\exp(x)\leq 1+2x$ for sufficiently small $x>0$. Combining $(\ref{eq_appndx_57_})$ and $(\ref{eq_appndx_61_})$, we get,
    \begin{align}
    \label{eq:adv-bound}
        \begin{split}
            {\E}\left[\left(\hat{A}^{\pi}(s, a) - A^{\pi}(s, a)\right)^2\right]&\leq \mathcal{O}\left(1+\dfrac{1}{T^4}\right){\E}'\left[\left(\hat{A}^{\pi}(s, a) - A^{\pi}(s, a)\right)^2\right]\\
            &\overset{(a)}{\leq} \mathcal{O}\left(\dfrac{N^3\log T}{Hd^{\pi}(s)\pi(a|s)}\right)
        \end{split}
    \end{align}
    where $(a)$ follows from $(\ref{eq_appndx_washim_48})$. 

\subsection{Proof of Lemma \ref{lemma:advatge_estimate}(b)}
     Observe the following relations.
\begin{align}
    \label{eq:img_adv_bound}
       \begin{split}
           &\bigg|\E'\left[\hat{A}^{\pi}(s, a) - A^{\pi}(s, a)\right]\bigg|\\ 
           &= \bigg|\E'\left[\hat{A}^{\pi}(s, a) - A^{\pi}(s, a)\bigg| i>0\right]\times {\Pr}'(i>0) + A^{\pi}(s, a)\times {\Pr}'(i=0) \bigg|\\
           &= \left|\E'\left[\dfrac{1}{i}\sum_{j=1}^i\Delta_j - \Delta_T^{\pi}(s, a)\bigg| i>0\right]\times {\Pr}'(i>0) + A^{\pi}(s, a)\times {\Pr}'(i=0)\right|\\
           &\overset{(a)}{=} \left|\E'\left[-\Delta_T^{\pi}(s, a)\bigg| i>0\right]\times {\Pr}'(i>0) + A^{\pi}(s, a)\times {\Pr}'(i=0)\right|\\
           &\leq \E'\left[ |\Delta_T^{\pi}(s, a)|\bigg| i>0\right]+ |A^{\pi}(s, a)|\times {\Pr}'(i=0) \overset{(b)}{=} \cO(t_{\mathrm{mix}}T^{-6}),
       \end{split}
    \end{align}

 where $(a)$ follows from the fact that $\{\Delta_j\}$ are zero mean random variables and $(b)$ utilizes Lemma \ref{lemma_aux_bound_advantage} and the bound $|\Delta_T^{\pi}(s, a)|\leq \frac{2}{T^6}$ that was derived earlier. Define $X=(i, \xi_1, \mathcal{T}_1, \cdots, \xi_i, \mathcal{T}_i)$ and $f(X) = \hat{A}^{\pi}(s, a) - A^{\pi}(s, a)$ where $\mathcal{T}_j = (a_{\xi_j}, a_{\xi_j}, s_{\xi_j+1}, a_{\xi_j+1}, \cdots, s_{\xi_j+N}, a_{\xi_j+N})$. For any $f(\cdot)$ (not necessarily non-negative), observe that
\begin{align*}
    |\E [f(X)] - \E'[f(X)]| &= |\sum_X f(X)\Pr(X) - \sum_X f(X){\Pr}'(X) |  \\
    &\leq \sum_X |f(X)| |\Pr(X) - {\Pr}'(X)|. 
\end{align*}

We now focus on bounding $|\Pr(X) - {\Pr}'(X)|$. We have the result stated below from \eqref{eq_appndx_61_}.
    \begin{align}
    \label{eq_appndx_washim_54}
        \Pr(X) \leq \left(1+\frac{2}{T^4}\right){\Pr}'(X)
    \end{align}
    Furthermore, one can derive the following lower bound using Lemma \ref{lem:aux-sum_N_dist}.
  \begin{align}
        \begin{split}
            &\dfrac{{\Pr}(\xi_{j+1}|\xi_j, \mathcal{T}_j)}{{\Pr}'(\xi_{j+1}|\xi_j, \mathcal{T}_j)} \\
            &=\dfrac{\sum_{s'\neq s} {\Pr}(s_{\xi_j+2N}=s'|\xi_j, \mathcal{T}_j)\times {\Pr}(s_t\neq s, \forall t\in [\xi_j+2N, \xi_{j+1}-1], s_{\xi_{j+1}}=s|s_{\xi_j+2N}=s')}{\sum_{s'\neq s} {\Pr}'(s_{\xi_j+2N}=s'|\xi_j, \mathcal{T}_j)\times {\Pr}(s_t\neq s, \forall t\in [\xi_j+2N, \xi_{j+1}-1], s_{\xi_{j+1}}=s|s_{\xi_j+2N}=s')}\\
            &\geq \min_{s'}\dfrac{{\Pr}(s_{\xi_j+2N}=s'|\xi_j, \mathcal{T}_j)}{{\Pr}'(s_{\xi_j+2N}=s'|\xi_j, \mathcal{T}_j)}\\
            &=\min_{s'}1+\dfrac{{\Pr}(s_{\xi_j+2N}=s'|\xi_j, \mathcal{T}_j)-d^{\pi}(s')}{d^{\pi}(s')}\geq \min_{s'}1-\dfrac{1}{T^6 d^{\pi}(s')}\geq 1-\dfrac{t_{\mathrm{hit}}}{T^6}\geq 1-\dfrac{1}{T^5}
        \end{split}
    \end{align}
  By utilising a similar argument used in deriving \eqref{eq_appndx_61_}, we arrive at the following:
\begin{align}
        \dfrac{{\Pr}(X)}{{\Pr}'(X)}\geq \left(1-\dfrac{1}{T^5}\right)^i\overset{(a)}{\geq} e^{-\frac{2i}{T^5}} \geq e^{-\frac{2}{T^4}}\overset{(b)}{\geq} 1-\dfrac{4}{T^4}.
    \end{align}
where $(a)$ and $(b)$ are consequences of the inequality that $1-x\geq \exp(-2x)\geq 1-4x$ for sufficiently small $x>0$. Therefore, we arrive at the following.
\begin{align*}
   \left(1-\dfrac{4}{T^4}\right){\Pr}'(X) \leq {\Pr}(X) \leq  \left(1+\dfrac{2}{T^4}\right){\Pr}'(X)
\end{align*}
which is equivalent to the following relation.
\begin{align*}
   -\dfrac{4}{T^4} \cdot {\Pr}'(X) \leq {\Pr}(X)-{\Pr}'(X) \leq  \dfrac{2}{T^4} \cdot {\Pr}'(X)
\end{align*}
It follows that
\begin{align}
\begin{split}
    \sum_X |f(X)| |\Pr(X) - {\Pr}'(X)| &\leq \frac{4}{T^4} \sum_X |f(X)| {\Pr}'(X) \\
    &= \frac{4}{T^4} \E' [|f(X)|] \\
    &= \frac{4}{T^4} \E' [|\hat{A}^{\pi}(s, a) - A^{\pi}(s, a)|] \\
    &\leq \frac{4}{T^4} \E' [(\hat{A}^{\pi}(s, a) - A^{\pi}(s, a))^2]^{1/2} \overset{(a)}{\leq} \mathcal{O}\left(\dfrac{t_{\mathrm{mix}}\log T}{\sqrt{\pi(a|s)}T^4}\right)
\end{split}
\end{align}
where $(a)$ results from \eqref{eq_appndx_washim_48}. We finally have
\begin{align*}
    \bigg|\E\left[\hat{A}^{\pi}(s, a) - A^{\pi}(s, a)\right]\bigg| = \big| \E\left[f(X)\right]\big| &\leq \big| \E'\left[f(X)\right]\big| +  \sum_X |f(X)| |\Pr(X) - {\Pr}'(X)|\\
    &\leq \mathcal{O}\left(\dfrac{t_{\mathrm{mix}}\log T}{\sqrt{\pi(a|s)}T^4}\right)
\end{align*}
This concludes the proof.

\section{Proof of Lemma \ref{lem:grad+hess-est-prop}}
\label{sec:lemma-grad-hess-proof}

\subsection{Proof of Lemma \ref{lem:grad+hess-est-prop}(a)}
Assume the trajectory being used for estimation is $\tau=\{(s_t, a_t)\}_{t=t_s}^{t_e}$. Observe that
\begin{align}
\label{eq:eq40}
    \begin{split}
        \E\|g(\theta,\tau) - \nabla_{\theta} J(\theta)\|^2 &= \E\|g(\theta,\tau) - \bar{g}(\theta,\tau)+ \bar{g}(\theta,\tau) -\nabla_{\theta} J(\theta)\|^2 \\
        &\leq 2\E\|g(\theta,\tau) - \bar{g}(\theta,\tau)\|^2+ 2\E\|\bar{g}(\theta,\tau) - \nabla_{\theta} J(\theta)\|^2
    \end{split}
\end{align}
where $\bar{g}(\theta,\tau) = \dfrac{1}{|\tau|-N}\sum_{t=t_s+N}^{t_{e}}A^{\pi_{\theta}}(s_{t}, a_{t})\nabla_{\theta}\log \pi_{\theta}(a_{t}|s_{t})$. We apply the following lemma to bound the second term in the RHS of \eqref{eq:eq40}:
\begin{lemma}
    \label{lemma_aux_6}
    \citep[Lemma A.6]{dorfman2022adapting} Fix a trajectory $\tau=\{(s_t, a_t)\}_{t\in\mathbb{N}}$ generated by following $\pi_{\theta}$ starting from some initial state $s_0\sim\bar{\rho}$. Let $\nabla L(\theta)$ be a gradient that we wish to estimate over $\tau$, and $l(\theta, \cdot, \cdot)$ is a function such that $\E_{s\sim d^{\pi_{\theta}}, a\sim \pi_{\theta}(s)}l(\theta, s, a)=\nabla L(\theta)$. Let $\norm{l(\theta, s, a)}, \norm{\nabla L(\theta)}\leq G_L$, $\forall \theta\in\Theta$, $\forall s\in \mathcal{S}, \forall a\in \mathcal{A}$. Define $l^{Q}\triangleq\frac{1}{Q}\sum_{i=1}^Q l(\theta, s_i, a_i)$. If $P=2t_{\mathrm{mix}}\log T$, then the following holds as long as $Q\leq T$,
    \begin{align}
        \E\left[\norm{l^{Q}-\nabla L(\theta)}^2\right]\leq \mathcal{O}\left(G_L^2\log\left(PQ\right)\dfrac{P}{Q}\right).
    \end{align}
\end{lemma}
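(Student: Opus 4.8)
The plan is to establish this ergodic-average concentration bound by the standard bias--variance decomposition for Markov chains, $\E\norm{l^{Q}-\nabla L(\theta)}^{2}\le 2\norm{\E[l^{Q}]-\nabla L(\theta)}^{2}+2\,\E\norm{l^{Q}-\E[l^{Q}]}^{2}$, controlling each term through the geometric mixing of the chain induced by $\pi_{\theta}$. Throughout I would write $l_i:=l(\theta,s_i,a_i)$ and $\bar l(\theta,s):=\E_{a\sim\pi_{\theta}(\cdot\vert s)}[l(\theta,s,a)]$, so that $\E_{s\sim d^{\pi_{\theta}}}[\bar l(\theta,s)]=\nabla L(\theta)$, and the only probabilistic input is the consequence of the mixing-time definition together with submultiplicativity of total variation, namely $\norm{(P^{\pi_{\theta}})^{k}(s,\cdot)-d^{\pi_{\theta}}}_{\mathrm{TV}}\le 2\cdot2^{-k/t_{\mathrm{mix}}}$ for every $s$ and $k\ge0$.

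\textbf{Bias term.} Since $\E[l_i]=\E_{s_i}[\bar l(\theta,s_i)]$ and $\mathrm{Law}(s_i)$ is within $2\cdot2^{-(i-1)/t_{\mathrm{mix}}}$ of $d^{\pi_{\theta}}$ in total variation, the bound $\norm{\bar l}\le G_L$ gives $\norm{\E[l_i]-\nabla L(\theta)}\le 2G_L\,2^{-(i-1)/t_{\mathrm{mix}}}$; averaging this geometric series over $i=1,\dots,Q$ yields $\norm{\E[l^{Q}]-\nabla L(\theta)}=\mathcal{O}(G_L t_{\mathrm{mix}}/Q)$, hence a contribution $\mathcal{O}(G_L^{2}t_{\mathrm{mix}}^{2}/Q^{2})$. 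This is dominated by the variance bound below once $Q\gtrsim t_{\mathrm{mix}}$, and when $Q\lesssim t_{\mathrm{mix}}\log T$ the whole left-hand side is at most $(2G_L)^{2}$, already of the claimed order $\mathcal{O}(G_L^{2}\log(PQ)P/Q)$.

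\textbf{Variance term.} I would expand $\E\norm{l^{Q}-\E[l^{Q}]}^{2}=\tfrac1{Q^{2}}\sum_{i,j=1}^{Q}\E\,\langle l_i-\E l_i,\ l_j-\E l_j\rangle$; the $Q$ diagonal terms contribute at most $4G_L^{2}$ each, i.e.\ $\mathcal{O}(G_L^{2}/Q)$. For $i<j$, conditioning on $\mathcal{F}_i:=\sigma(s_1,a_1,\dots,s_i,a_i)$ makes the conditional law of $s_j$ equal to $\sum_{s'}P(s'\vert s_i,a_i)(P^{\pi_{\theta}})^{j-i-1}(s',\cdot)$, so $\norm{\E[l_j\mid\mathcal{F}_i]-\nabla L(\theta)}\le 2G_L\,2^{-(j-i-1)/t_{\mathrm{mix}}}$ and hence $\bigl|\E\,\langle l_i-\E l_i,\ l_j-\E l_j\rangle\bigr|=\bigl|\E\,\langle l_i-\E l_i,\ \E[l_j\mid\mathcal{F}_i]-\E l_j\rangle\bigr|\le 8G_L^{2}\,2^{-(j-i-1)/t_{\mathrm{mix}}}$. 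Splitting the double sum at lag $P=2t_{\mathrm{mix}}\log T$: the pairs with $j-i\ge P$ have covariance $\mathcal{O}(G_L^{2}/T^{2})$ and contribute $\mathcal{O}(G_L^{2}/T^{2})\le\mathcal{O}(G_L^{2}/Q^{2})$ in total (using $Q\le T$), while the at most $2QP$ pairs with $j-i<P$ contribute at most $4G_L^{2}$ each, giving $\mathcal{O}(G_L^{2}P/Q)$. Summing the pieces gives $\E\norm{l^{Q}-\nabla L(\theta)}^{2}=\mathcal{O}(G_L^{2}P/Q)$, which is of the stated order; the extra $\log(PQ)$ in the quoted statement is slack coming from a coarser burn-in/geometric-sum estimate than the one above.

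The step that needs the most care is the off-diagonal covariance estimate: one must condition on the full history $\mathcal{F}_i$ rather than on $s_i$ alone (because $l$ also depends on the action), invoke the mixing bound on $(P^{\pi_{\theta}})^{k}$ uniformly over starting states, and use $\norm{l},\norm{\bar l},\norm{\nabla L}\le G_L$ to turn total-variation closeness into closeness of expectations. Everything else is a geometric summation together with the hypothesis $Q\le T$, which is exactly what makes the long-lag pairs negligible relative to $1/Q$; there is no conceptual obstacle, only the bookkeeping of tracking the $P$/$Q$ dependence.
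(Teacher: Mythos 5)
The paper does not prove this lemma at all: it is imported verbatim as Lemma A.6 of \citet{dorfman2022adapting} and used as a black box, so there is no in-paper argument to compare against. Your blind proof is a correct, self-contained derivation by the standard route for Markovian sampling: bias--variance split, geometric decay of $\Vert \mathrm{Law}(s_j\mid\mathcal{F}_i)-d^{\pi_\theta}\Vert_1$, and a lag-$P$ truncation of the off-diagonal covariances. The key steps check out: conditioning on the full history $\mathcal{F}_i$ (so that the action-dependence of $l$ is absorbed into $\bar l(\theta,s)=\E_{a\sim\pi_\theta(\cdot|s)}l(\theta,s,a)$ with $\Vert\bar l\Vert\le G_L$), the identity $\E\langle l_i-\E l_i,\,l_j-\E l_j\rangle=\E\langle l_i-\E l_i,\,\E[l_j\mid\mathcal{F}_i]-\E l_j\rangle$, and the count of at most $O(QP)$ short-lag pairs versus $O(G_L^2/T^2)$ total from long-lag pairs using $Q\le T$. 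Your observation that the $\log(PQ)$ factor in the quoted bound is slack is also right; your argument yields the sharper $O(G_L^2 P/Q)$ (and, had you summed the geometric series over lags instead of bounding short-lag pairs trivially, even $O(G_L^2 t_{\mathrm{mix}}/Q)$).

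One minor imprecision worth fixing: the bound $\Vert(P^{\pi_\theta})^k(s,\cdot)-d^{\pi_\theta}\Vert_1\le 2\cdot 2^{-k/t_{\mathrm{mix}}}$ is only guaranteed for $k\ge 2t_{\mathrm{mix}}$ (this is exactly the caveat in the paper's Lemma \ref{lemma_aux_mixing_dist}); for $k<2t_{\mathrm{mix}}$ the right-hand side can drop below the trivial $\ell_1$ bound of $2$, so the inequality does not hold "for every $k\ge 0$" as stated. This costs you nothing, since for short lags you anyway fall back on the trivial covariance bound $4G_L^2$ per pair and on the trivial $(2G_L)^2$ bound when $Q\lesssim P$, but the claim as written should be restricted to $k\ge 2t_{\mathrm{mix}}$ (or weakened to $\min\{2,\,2\cdot 2^{-k/t_{\mathrm{mix}}}\}$).
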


Applying Lemma \ref{lemma_aux_6}, and \ref{lemma_aux_bound_advantage}, we get
    \begin{align}
        \label{eq_appndx_67_}
        \E\left[\norm{\bar{g}(\theta,\tau)-\nabla_{\theta}J(\theta)}^2\right]\leq \mathcal{O}\left(G^{2}t^2_{\mathrm{mix}}\log T\right)\times \mathcal{O}\left(\dfrac{t_{\mathrm{mix}}\log T}{|\tau|-N}\right)=\mathcal{O}\left(\dfrac{G^2t_{\mathrm{mix}}^3(\log T)^2}{H}\right)
    \end{align}

The first term can be bounded using Assumption \ref{assump:score_func_bounds} as follows.
\begin{align}
    &\E\|g(\theta,\tau) - \bar{g}(\theta,\tau)\|^2 \nonumber\\
    &=\dfrac{1}{(|\tau|-N)^2}\E\sqnorm{\sum_{t=t_s+N}^{t_{e}}\hat{A}^{\pi_{\theta}}(s_{t}, a_{t})\nabla_{\theta}\log \pi_{\theta}(a_{t}|s_{t}) - \sum_{t=t_s+N}^{t_{e}}A^{\pi_{\theta}}(s_{t}, a_{t})\nabla_{\theta}\log \pi_{\theta}(a_{t}|s_{t})} \nonumber\\
    &\leq  \left(\dfrac{G^2}{|\tau|-N}\right)\E\left[\sum_{t=t_s+N}^{t_{e}}\left[\hat{A}^{\pi_{\theta}}(\tau, s_{t}, a_{t})-A^{\pi_{\theta}}(s_{t}, a_{t})\right]^2\right]\nonumber \\
    &= \left(\dfrac{G^2}{|\tau|-N}\right)\E\left[\sum_{t=t_s+N}^{t_{e}} \sum_{a_t \in \cA} \pi_{\theta}(a_t|s_t)\E[(\hat{A}^{\pi_{\theta}}(\tau, s_{t}, a_{t})-A^{\pi_{\theta}}(s_{t}, a_{t}))^2|s_t,a_t]\right]\nonumber\\
    &=\cO\left(\dfrac{A G^2 t_{\mathrm{hit}}N^3\log T}{H}\right)
\end{align}
where the last equality follows from Lemma \ref{lemma:advatge_estimate}.

\subsection{Proof of Lemma \ref{lem:grad+hess-est-prop}(b)}
We begin by observing that, for a $\pi_\theta$-induced trajectory $\tau=\{(s_t, a_t)\}_{t=t_s}^{t_e}$, the following holds.
\begin{align}
        \label{eq:exp_gradient_est-1}
       \begin{split}
           &\|\E [g(\theta,\tau)] - \nabla_{\theta} J(\theta)\|\\ 
            &=\norm{\E\left[\dfrac{1}{|\tau|-N}\sum_{t=t_s+N}^{t_{e}}\hat{A}^{\pi_{\theta}}(\tau, s_{t}, a_{t})\nabla_{\theta}\log \pi_{\theta}(a_{t}|s_{t})\right] - \nabla_{\theta} J(\theta) }\\
        &\leq \dfrac{1}{|\tau|-N}\underbrace{\norm{\E\left[\sum_{t=t_s+N}^{t_{e}}\hat{A}^{\pi_{\theta}}(\tau, s_{t}, a_{t})\nabla_{\theta}\log \pi_{\theta}(a_{t}|s_{t}) - \sum_{t=t_s+N}^{t_{e}}A^{\pi_{\theta}}(s_{t}, a_{t})\nabla_{\theta}\log \pi_{\theta}(a_{t}|s_{t}) \right]}}_{T_1} \\
        &+ \underbrace{\norm{\E\left[\dfrac{1}{|\tau|-N}\sum_{t=t_s+N}^{t_{e}}A^{\pi_{\theta}}(s_{t}, a_{t})\nabla_{\theta}\log \pi_{\theta}(a_{t}|s_{t}) \right] - \nabla_{\theta} J(\theta)}}_{T_2}
       \end{split}
    \end{align}
Note that the term $T_1$ can be bounded as follows.
\begin{align}
    \begin{split}
    T_1
    &\leq  \E\left[\sum_{t=t_s+N}^{t_{e}}\bigg|\E\left[\hat{A}^{\pi_{\theta}}(\tau, s_{t}, a_{t})-A^{\pi_{\theta}}(s_{t}, a_{t})|s_t,a_t\right]\bigg| \left\|\nabla_{\theta}\log \pi_{\theta}(a_{t}|s_{t})\right\|\right] \\
    &\overset{(a)}{\leq}  G \E\left[\sum_{t=t_s+N}^{t_{e}}|\E[\hat{A}^{\pi_{\theta}}(\tau, s_{t}, a_{t})-A^{\pi_{\theta}}(s_{t}, a_{t})|s_t,a_t]|\right]\\
    &\leq  G \E\left[\sum_{t=t_s+N}^{t_{e}} \sum_{a_t \in \cA} \pi_{\theta}(a_t|s_t)|\E[\hat{A}^{\pi_{\theta}}(s_{t}, a_{t})-A^{\pi_{\theta}}(s_{t}, a_{t})|s_t,a_t]|\right]\\
    &\overset{(b)}{\leq} \mathcal{O}\left(\dfrac{Gt_{\mathrm{mix}}\log T}{T^4}\right)\E\left[\sum_{t=t_s+N}^{t_{e}} \sum_{a_t \in \cA} \sqrt{\pi_{\theta}(a_t|s_t)}\right] \leq \mathcal{O}\left(\dfrac{\sqrt{A}Gt_{\mathrm{mix}}\log T}{T^4}\right)(|\tau|-N) 
    \end{split}
\end{align}
where $(a)$, $(b)$ follow from Assumption \ref{assump:score_func_bounds} and Lemma \ref{lemma:advatge_estimate} respectively. To bound $T_2$, define
\begin{align}
    f(s) \coloneqq \dfrac{1}{|\tau|-N}\E\left[\sum_{t=t_s+N}^{t_e}A^{\pi_{\theta}}(s_{t}, a_{t})\nabla_{\theta}\log \pi_{\theta}(a_{t}|s_{t}) \bigg| s_{t_s+N} = s \right], ~\forall s\in\mathcal{S}
\end{align}
Using the stationarity property of $d^{\pi_\theta}$ and the policy gradient expression \eqref{eq:grad-expression}, we deduce
\begin{align}
    \nabla_{\theta} J(\theta) = \sum_{s\in\mathcal{S}} d^{\pi_\theta}(s) f(s)
\end{align}
If $\tau = \{(s_t, a_t)\}_{t=t_s}^{t_e}$ is a $\pi_\theta$-induced trajectory with $s_{t_s}\sim \bar{\rho}$, then $s_{t_s+N}\sim (P^{\pi_\theta})^N\bar{\rho}$. Hence,
\begin{align}
    \begin{split}
     T_2 &=\norm{\sum_{s\in S} \left[((P^{\pi_\theta})^N\bar{\rho}-d^{\pi_{\theta}})(s)\right] f(s)}\\
    &\leq \sum_{s \in S} \left|\left\{\sum_{s'\in\mathcal{S}}(P^{\pi_\theta})^N(s',s)\bar{\rho}(s')\right\}-\left\{d^{\pi_{\theta}}(s)\sum_{s'\in\mathcal{S}}\bar{\rho}(s')\right\}\right| \|f(s)\|\\
    &\leq \sum_{s \in S}\sum_{s' \in S} \left|(P^{\pi_\theta})^N(s',s)-d^{\pi_{\theta}}(s)\right|\bar{\rho}(s') \|f(s)\|\\
    &\leq \sum_{s' \in S} \bar{\rho}(s') \left\Vert (P^{\pi_\theta})^N(s', \cdot)-d^{\pi_{\theta}}\right\Vert_1 \max_{s\in\mathcal{S}} \|f(s)\|\\
    &\overset{(a)}{\leq} \sum_{s' \in S} \bar{\rho}(s') \cdot 2\cdot 2^{-\frac{N}{t_{\mathrm{mix}}}} \mathcal{O}(Gt_{\mathrm{mix}}) = \mathcal{O}\left(\dfrac{Gt_{\mathrm{mix}}}{T^7}\right)
     \end{split}
\end{align}
where $(s)$ uses Lemma \ref{lemma_aux_bound_advantage}, \ref{lemma_aux_mixing_dist} and Assumption \ref{assump:score_func_bounds}. Combining the bounds of $T_1$ and $T_2$, we conclude the result.

\subsection{Proof of Lemma \ref{lem:grad+hess-est-prop}(c)}

Using the expression of $B(\theta, \tau)$ given in \eqref{eq:Hessian-estimate}, we can derive the following.
    \begin{align}
    \begin{split}
         \E\|B(\theta,\tau)\|^2 &= \E \|\nabla_{\theta} \Phi(\theta,\tau)\nabla_{\theta}\log p(\tau, \theta, \bar{\rho})^\top  + \nabla_{\theta}^2 \Phi (\theta,\tau)\|^2 \\
         &\leq 2\E\|\nabla_{\theta} \Phi (\theta,\tau)\|^2\|\nabla_{\theta}\log p(\tau, \theta, \bar{\rho})\|^2 + 2\E\|\nabla_{\theta}^2 \Phi (\theta,\tau)\|^2
    \end{split}
    \end{align}
 Observe the inequality stated below.
    \begin{align}
        \|\nabla_{\theta}\log p(\tau, \theta, \bar{\rho})\|^2 = \sqnorm{\sum_{t=t_s}^{t_e} \nabla_{\theta} \log \pi_{\theta} (a_t|s_t) } \leq |\tau| \sum_{t=t_s}^{t_e} \sqnorm{\nabla_{\theta} \log \pi_{\theta} (a_t|s_t) }
        \leq G^2 |\tau|^2
    \end{align}
    where the last inequality uses Assumption \ref{assump:score_func_bounds}. It follows that
     \begin{align}
         \E\|B(\theta,\tau)\|^2 
         &\leq 2G^2 |\tau|^2 \E\|\nabla  \Phi (\theta,\tau)\|^2 + 2\E\|\nabla^2 \Phi (\theta,\tau)\|^2
    \end{align}

Using \eqref{eq:appndx_27}, one can write
\begin{align}
\begin{split}
    \E\|\nabla  \Phi (\theta,\tau)\|^2  &= \E\sqnorm{\dfrac{1}{|\tau|-N}\sum_{t=t_s+N}^{t_e}\hat{A}^{\pi_{\theta}}(\tau, s_{t}, a_{t})\nabla_{\theta}\log \pi_{\theta}(a_{t}|s_{t})} \\
    &\leq \dfrac{1}{|\tau|-N}\sum_{t=t_s+N}^{t_e}\E\sqnorm{\hat{A}^{\pi_{\theta}}(s_{t}, a_{t})\nabla_{\theta}\log \pi_{\theta}(a_{t}|s_{t})}\\
    &\overset{(a)}{\leq} \dfrac{G^2}{|\tau|-N}\sum_{t=t_s+N}^{t_e}\E|\hat{A}^{\pi_{\theta}}(s_{t}, a_{t})|^2\\
    &\overset{(b)}{\leq} \dfrac{G^2}{|\tau|-N}\sum_{t=t_s+N}^{t_e}\E\left[\frac{N^2}{\pi_{\theta}(a_t|s_t)}\right] \\
    &\leq \dfrac{G^2}{|\tau|-N}\sum_{t=t_s+N}^{t_e}\E\left[\sum_{a\in A}\pi_{\theta}(a|s_t)\E\left[\frac{N^2}{\pi_{\theta}(a|s_t)}\right]\right]\leq AN^2G^2
\end{split}
\end{align}
where $(a)$ uses Assumption \ref{assump:score_func_bounds} and $(b)$ follows from the discussion in the proof of Lemma \ref{lemma:advatge_estimate}(a). Also,
\begin{align}
\begin{split}
&\E \|\nabla_{\theta}^2 \Phi(\theta,\tau) \|^2\\
&=  \E  \left\Vert\dfrac{1}{|\tau|-N}\sum_{t=t_s+N}^{t_e} \left\{\Psi_t^{(1)}(\tau) \nabla_{\theta}^2 \log \pi_{\theta}(a_t | s_t)\right.\right. \\
&\left.\left.\hspace{2cm}- \Psi_t^{(2)}(\tau)\left(\frac{\nabla_{\theta}^2\log \pi_{\theta}(a_t|s_t)}{\pi_{\theta}(a_t | s_t)}-\frac{\nabla_{\theta} \log \pi_{\theta} (a_t|s_t) \nabla_{\theta} \log \pi_{\theta} (a_t|s_t)^\top }{\pi_{\theta}(a_t | s_t)}\right)\right\}\right\Vert^2 \\
&=  \dfrac{1}{|\tau|-N}\sum_{t=t_s+N}^{t_e}\E  \left\Vert \Psi_t^{(1)}(\tau) \nabla_{\theta}^2 \log \pi_{\theta}(a_t | s_t)\right. \\
&\left.\hspace{2cm}- \Psi_t^{(2)}(\tau)\left(\frac{\nabla_{\theta}^2\log \pi_{\theta}(a_t|s_t)}{\pi_{\theta}(a_t | s_t)}-\frac{\nabla_{\theta} \log \pi_{\theta} (a_t|s_t) \nabla_{\theta} \log \pi_{\theta} (a_t|s_t)^\top }{\pi_{\theta}(a_t | s_t)}\right)\right\Vert^2 \\
& \\
&\leq \frac{3}{|\tau|-N}\sum_{t=t_s+N}^{t_e} \left\{ B^2 \E \left[\sqnorm{\Psi_t^{(1)}(\tau)}\right] + \E \left[ \sqnorm{\Psi_t^{(2)}(\tau)}\left(\frac{B^2}{\pi^2_{\theta}(a_t | s_t)}+\frac{G^4}{\pi^2_{\theta}(a_t | s_t)}\right)\right]\right\}
\end{split}
\end{align}

It follows from the definitions of $\{\Psi_t^{(k)}(\tau)\}_{k\in\{0, 1\}}$ that 
\begin{align}
     \E \left[\sqnorm{\Psi_t^{(1)}(\tau)}\right] \leq N^2~\text{and}~~\E \left[\sqnorm{\Psi_t^{(2)}(\tau)}\bigg|s_t, a_t\right] \leq N^2\pi_{\theta}(a_t|s_t)
\end{align}
Using the above inequalities, one can derive the relation stated below. 
\begin{align}
\begin{split}
    \E\|\nabla^2 \Phi(\theta,\tau) \|^2 &\leq    3B^2 N^2 + \frac{3}{|\tau|-N} \sum_{t=t_s+N}^{t_e} N^2  \E\left[\frac{B^2+G^4}{\pi_{\theta}(a_t | s_t)}\right] \\
    &\leq    3 B^2 N^2  + \frac{3}{|\tau|-N} \sum_{t=t_s+N}^{t_e} N^2  \E\left[\sum_{a \in \cA} \pi_{\theta}(a|s_t)\left[\frac{B^2+G^4}{\pi_{\theta}(a | s_t)} \right]\right]\\
     &\leq    3 B^2 N^2  + 3 A N^2  (B^2+G^4) 
\end{split}
\end{align}

Combining the above results, we finally obtain
\begin{align}
    \begin{split}
        \E\|B(\theta,\tau)\|^2 &\leq 2A N^2 G^4 |\tau|^2 + 6B^2 N^2  + 6 A N^2  (B^2+G^4)\\
    \end{split}
\end{align}

\section{Proof of Lemma \ref{lem:exp-regret}}
\label{sec:regret-decomp-proof}
The regret for Algorithm \ref{alg:PG_IGT_Avg} can be decomposed as:
\begin{align}
    \label{reg_decompose}
        &\mathrm{Reg}_T = \sum_{t=0}^{T-1} \left(J^* - r(s_t, a_t)\right)=H\sum_{k=1}^{K}\left(J^*-J({\tilde{\theta}_k})\right)+\sum_{k=1}^{K}\sum_{t\in\mathcal{I}_k} \left(J(\tilde{\theta}_k)-r(s_t, a_t)\right)
\end{align}
where $\mathcal{I}_k\triangleq \{(k-1)H, \cdots, kH-1\}$. 

Whereas, the regret for Algorithm \ref{alg:PG_Hessian_Avg} is (expressed below) slightly different since the trajectories are sampled using the sequence $\{\theta_1,\hat{\theta}_1,\theta_2,\hat{\theta}_2,\cdots\}$, instead of $\{\tilde{\theta}_1,\tilde{\theta}_2,\cdots\}$:
\begin{align}
\label{eq:decompose_2}
    \begin{split}
        \mathrm{Reg}_T = \sum_{t=0}^{T-1} \left(J^* - r(s_t,a_t)\right)&=H\sum_{k=1}^{K}\left(J^*-J({\theta_k})\right)+\sum_{k=1}^{K}\sum_{t\in\mathcal{I}_k} \left(J(\theta_k)-r(s_t, a_t)\right)\\
        &+H\sum_{k=1}^{K}\left(J^*-J({\hat{\theta}_k})\right)+\sum_{k=1}^{K}\sum_{t\in\mathcal{I}_k} \left(J(\hat{\theta}_k)-r(s_t, a_t)\right)\\
        &\overset{(a)}{=}\cO\left(H\sum_{k=1}^{K}\left(J^*-J({\theta_k})\right)+\sum_{k=1}^{K}\sum_{t\in\mathcal{I}_k} \left(J(\theta_k)-r(s_t, a_t)\right)\right),
        \end{split}
\end{align}
where the proof of $(a)$ in Equation \eqref{eq:decompose_2} is provided in Section \ref{sec:Hessian-main-theorem-proofs}. Thus, we can focus on the decomposition of the form in \eqref{reg_decompose}. The expectation of the second term in \eqref{reg_decompose} can be expressed as follows,
\begin{align}
    \label{eq_38}
    \begin{split}
        \E\left[\sum_{k=1}^{K}\sum_{t\in\mathcal{I}_k} \left(J(\tilde{\theta}_k)-r(s_t, a_t)\right)\right]&\overset{(a)}{=}\E\left[\sum_{k=1}^{K}\sum_{t\in\mathcal{I}_k} \E_{s'\sim P(\cdot|s_t, a_t)}[V^{\pi_{\tilde{\theta}_k}}(s')]-Q^{\pi_{\tilde{\theta}_k}}(s_t, a_t)\right]\\
        &\overset{(b)}{=}\E\left[\sum_{k=}^{K}\sum_{t\in\mathcal{I}_k} V^{\pi_{\tilde{\theta}_k}}(s_{t+1})-V^{\pi_{\tilde{\theta}_k}}(s_t)\right]\\
        &=\E\left[\sum_{k=1}^{K} V^{\pi_{\tilde{\theta}_k}}(s_{kH})-V^{\pi_{\tilde{\theta}_k}}(s_{(k-1)H})\right]\\
        &=\underbrace{\E\left[\sum_{k=1}^{K-1} V^{\pi_{\tilde{\theta}_{k+1}}}(s_{kH})-V^{\pi_{\tilde{\theta}_k}}(s_{kH})\right]}_{T_3}\\
        &+\underbrace{\E\left[ V^{\pi_{\tilde{\theta}_K}}(s_{T})-V^{\pi_{\tilde{\theta}_0}}(s_{0})\right]}_{T_4}
    \end{split}
\end{align}
where $(a)$ follows from Bellman equation and $(b)$ follows from the fact that $\E[V^{\pi_{\tilde{\theta}_k}}(s_{t+1})] = \E_{s'\sim P(\cdot|s_t, a_t)}[V^{\pi_{\tilde{\theta}_k}}(s')]$ and $\E[V^{\pi_{\tilde{\theta}_k}}(s_{t})]=\E[Q^{\pi_{\tilde{\theta}_k}}(s_t, a_t)]$. The term $T_3$ in \eqref{eq_38} can be bounded as in \citet{wei2020model,bai2023regret}. This is given in Lemma \ref{lem:stability} (as stated below). Furthermore, the term $T_4$ can be upper-bounded as $\mathcal{O}(t_{\mathrm{mix}})$ since $V^{\pi}(\cdot)\leq 5t_{\mathrm{mix}}$ \citet{wei2020model}.

\begin{lemma}
    \label{lem:stability}
    If Assumptions \ref{assump:ergodic_mdp} and \ref{assump:score_func_bounds} hold, then the following inequalities are true $\forall k$, $\forall (s, a)\in\mathcal{S}\times \mathcal{A}$
    \begin{enumerate}[label=(\alph*)]
        \item $ |\pi_{\theta_{k+1}}(a|s)-\pi_{\theta_{k}}(a|s)|\leq \cO \left(G\pi_{\bar{\theta}_k}(a|s)\|\theta_{k+1}-\theta_k\|\right)$
      
        \item $ \sum_{k=1}^{K-1}\E|J(\theta_{k+1})-J(\theta_k)|\leq  \cO\left(Gt_{\mathrm{mix}} \sum_{k=1}^{K}\|\theta_{k+1}-\theta_k\|\right)$
       
        \item$\sum_{k=1}^K\E|V^{\pi_{\theta_{k+1}}}(s) - V^{\pi_{\theta_{k}}}(s)| \leq \cO\left(Gt_{\mathrm{mix}}^2\log^2 T\sum_{k=1}^K\|\theta_{k+1}-\theta_k\|\right),$
    \end{enumerate}
    where $\bar{\theta}_k$ is some convex combination of $\theta_k$ and $\theta_{k+1}$.
\end{lemma}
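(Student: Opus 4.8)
I would establish the three parts in the stated order, since (b) invokes (a) and (c) invokes both (a) and (b).

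\emph{Part (a).} The starting point is the identity $\nabla_\theta\pi_\theta(a|s)=\pi_\theta(a|s)\,\nabla_\theta\log\pi_\theta(a|s)$, which together with Assumption~\ref{assump:score_func_bounds}(a) gives $\|\nabla_\theta\pi_\theta(a|s)\|\le G\,\pi_\theta(a|s)$. Writing $\theta(q):=\theta_k+q(\theta_{k+1}-\theta_k)$ and using the fundamental theorem of calculus along this segment, $|\pi_{\theta_{k+1}}(a|s)-\pi_{\theta_k}(a|s)|=\big|\int_0^1\nabla_\theta\pi_{\theta(q)}(a|s)^\top(\theta_{k+1}-\theta_k)\,dq\big|\le G\|\theta_{k+1}-\theta_k\|\int_0^1\pi_{\theta(q)}(a|s)\,dq$; since $q\mapsto\pi_{\theta(q)}(a|s)$ is continuous, the integral mean value theorem yields $\bar{q}\in[0,1]$ with $\int_0^1\pi_{\theta(q)}(a|s)\,dq=\pi_{\bar{\theta}_k}(a|s)$ for $\bar{\theta}_k:=\theta(\bar{q})$, which is (a). I would also record the summed consequence $\sum_a|\pi_{\theta_{k+1}}(a|s)-\pi_{\theta_k}(a|s)|\le G\|\theta_{k+1}-\theta_k\|$ (bound termwise, exchange sum and integral, use $\sum_a\pi_{\theta(q)}(a|s)=1$), since that is the version actually needed in (b) and (c).

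\emph{Part (b).} I would invoke the average-reward performance-difference identity
\[
  J(\theta')-J(\theta)=\E_{s\sim d^{\pi_{\theta'}}}\Big[\sum_a\big(\pi_{\theta'}(a|s)-\pi_\theta(a|s)\big)\,A^{\pi_\theta}(s,a)\Big],
\]
which follows from the Bellman equation for $Q^{\pi_\theta}$, the stationarity $\big(P^{\pi_{\theta'}}\big)^\top d^{\pi_{\theta'}}=d^{\pi_{\theta'}}$, and the identity $\sum_a\pi_\theta(a|s)A^{\pi_\theta}(s,a)=0$. Taking absolute values, using $|A^{\pi_{\theta_k}}(s,a)|\le\cO(t_{\mathrm{mix}})$ (Lemma~\ref{lemma_aux_bound_advantage}) and the summed form of (a), one obtains the pointwise bound $|J(\theta_{k+1})-J(\theta_k)|\le\cO\big(Gt_{\mathrm{mix}}\|\theta_{k+1}-\theta_k\|\big)$; summing over $k$ and taking expectations gives (b).

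\emph{Part (c) --- the main obstacle.} The plan is to use the mixing series $V^{\pi}(s)=\sum_{t\ge0}\big[(P^{\pi})^t(s,\cdot)-d^{\pi}\big]^\top r^{\pi}$ and truncate it after $T_0=\cO(t_{\mathrm{mix}}\log T)$ terms; the tail is $\cO\big(t_{\mathrm{mix}}2^{-T_0/t_{\mathrm{mix}}}\big)$, hence negligible even after summing over the $K\le T$ epochs. Writing $P_1:=P^{\pi_{\theta_k}}$, $P_2:=P^{\pi_{\theta_{k+1}}}$ and $r_1,r_2$ for the corresponding reward vectors, I would bound the $t$-th term of $V^{\pi_{\theta_{k+1}}}(s)-V^{\pi_{\theta_k}}(s)$ by splitting off a reward difference (controlled by $\|r_2-r_1\|_\infty\le\cO(G\|\theta_{k+1}-\theta_k\|)$ via the summed form of (a)) and a distribution difference involving $\|P_2^t(s,\cdot)-P_1^t(s,\cdot)\|_1$ and $\|d^{\pi_{\theta_{k+1}}}-d^{\pi_{\theta_k}}\|_1$. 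The crux is the telescoping identity $P_2^t-P_1^t=\sum_{j=0}^{t-1}P_2^{\,j}(P_2-P_1)P_1^{\,t-1-j}$: since $\|P_2(s,\cdot)-P_1(s,\cdot)\|_1\le\sum_a|\pi_{\theta_{k+1}}(a|s)-\pi_{\theta_k}(a|s)|\le\cO(G\|\theta_{k+1}-\theta_k\|)$ and $(P_2-P_1)$ maps a probability vector to a signed measure of zero total mass, each application of $P_2^{\,j}$ contracts its $\ell_1$-norm at the mixing rate $2^{-j/t_{\mathrm{mix}}}$, so the $j$-sum collapses to the \emph{uniform-in-$t$} estimate $\|P_2^t(s,\cdot)-P_1^t(s,\cdot)\|_1\le\cO(Gt_{\mathrm{mix}}\|\theta_{k+1}-\theta_k\|)$; letting $t\to\infty$ with initial vector $d^{\pi_{\theta_k}}$ gives the same bound for $\|d^{\pi_{\theta_{k+1}}}-d^{\pi_{\theta_k}}\|_1$. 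Multiplying this uniform per-term estimate by the truncation length $T_0$ and adding the $T_0\,|J(\theta_{k+1})-J(\theta_k)|$ contribution from the $J$-offset inside $V$ (bounded via (b)) yields the per-epoch bound asserted in (c); summing over $k$ finishes. The genuinely delicate points are showing the transition-power difference does not grow with $t$ (precisely the geometric-contraction step above) and keeping the truncation error negligible after summation over the epochs; everything else is routine calculation.
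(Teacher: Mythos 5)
Your proposal is correct; parts (a) and (b) coincide with the paper's argument (the paper states (a) via a first-order Taylor/mean-value expansion of $\pi_\theta(a|s)$ and proves (b) from the same performance-difference identity, written with $Q^{\pi_{\theta_k}}$ rather than $A^{\pi_{\theta_k}}$ and the bound $|Q^\pi|\le 6t_{\mathrm{mix}}$ — an immaterial difference). For part (c) you take a genuinely different, and in fact sharper, route at the key step. The paper writes $V^{\pi_\theta}(s)=\sum_{t=0}^{N-1}\langle (P^{\pi_\theta})^t(s,\cdot),r^{\pi_\theta}\rangle-NJ(\theta)+\text{tail}$ with $N=7t_{\mathrm{mix}}\log_2 T$ (thereby avoiding any explicit control of $\|d^{\pi_{\theta_{k+1}}}-d^{\pi_{\theta_k}}\|_1$, which your decomposition requires and correctly handles), and then bounds $\|((P^{\pi_{\theta_{k+1}}})^t-(P^{\pi_{\theta_k}})^t)r\|_\infty$ by a one-step recursion that only yields a bound growing linearly in $t$, namely $tG\|\theta_{k+1}-\theta_k\|$; summing over $t<N$ produces the $N^2=\cO(t_{\mathrm{mix}}^2\log^2T)$ factor in the statement. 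Your telescoping identity combined with the geometric $\ell_1$-contraction of zero-mass signed measures under an ergodic kernel gives the uniform-in-$t$ bound $\cO(Gt_{\mathrm{mix}}\|\theta_{k+1}-\theta_k\|)$, so your per-epoch estimate is $\cO(Gt_{\mathrm{mix}}N)=\cO(Gt_{\mathrm{mix}}^2\log T)$, a $\log T$ improvement that of course still implies the stated $\cO(Gt_{\mathrm{mix}}^2\log^2T)$. Two small points to tidy: in the telescoping sum $\sum_{j}P_2^{\,j}(P_2-P_1)P_1^{\,t-1-j}$ read with row vectors, it is the trailing powers of the transition matrix acting on the zero-mass measure $e_s^\top P_2^{\,j}(P_2-P_1)$ that contract it, not $P_2^{\,j}$ itself as you wrote; and the mixing bound $\|\mu P^j\|_1\le 2\cdot2^{-j/t_{\mathrm{mix}}}\|\mu\|_1$ only applies for $j\ge 2t_{\mathrm{mix}}$, with the trivial non-expansion bound covering smaller $j$ — the $j$-sum is still $\cO(t_{\mathrm{mix}})$, so nothing breaks.
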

Lemma \ref{lem:stability} can be understood as providing stability insights into our algorithm. It is similar to Lemma 7 in \citet{wei2020model}. Essentially, it asserts that the policy parameters undergo updates in a manner that reduces the average difference between consecutive average reward and value functions as the number of iterations, denoted by $k$, increases. 

\begin{proof}[Proof of Lemma \ref{lem:stability}]
    Using Taylor's expansion, we can write the following $\forall (s, a)\in \mathcal{S}\times \mathcal{A}$, $\forall k$.
    \begin{align}
        \label{eq_pi_lipschitz}
        \begin{split}
            |\pi_{\theta_{k+1}}(a|s)-\pi_{\theta_{k}}(a|s)|&=\left|(\theta_{k+1}-\theta_k)^\top \nabla_{\theta}\pi_{\bar\theta}(a|s) \right| \\&=\pi_{\bar{\theta}_k}(a|s)\left|(\theta_{k+1}-\theta_k)^\top \nabla_{\theta}\log \pi_{\bar{\theta}_k}(a|s) \right|\\
            &\leq \pi_{\bar{\theta}_k}(a|s) \norm{\theta_{k+1}-\theta_k}\norm{\nabla_{\theta}\log \pi_{\bar{\theta}_k}(a|s)}\overset{(a)}{\leq} G\pi_{\bar{\theta}_k}(a|s)\norm{\theta_{k+1}-\theta_k}
        \end{split} 
    \end{align}
    where $\bar{\theta}_k$ is some convex combination of $\theta_{k}$ and $\theta_{k+1}$ and $(a)$ follows from Assumption \ref{assump:score_func_bounds}. This concludes the first statement. 
    
    \begin{lemma}
    \label{lemma_aux_5}
    \citep[Lemma 15]{wei2020model} For two policies $\pi$ and $\pi'$, the difference of the objective function $J$ is
    \begin{equation}
        J^{\pi}-J^{\pi'}=\sum_{s}\sum_{a}d^{\pi}(s)(\pi(a|s)-\pi'(a|s))Q^{\pi'}(s,a)
    \end{equation}
\end{lemma}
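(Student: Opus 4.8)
The plan is to prove the identity by starting from the right-hand side and simplifying it using the average-reward Bellman equation for $\pi'$ together with the stationarity of $d^{\pi}$. The key structural phenomenon is that all value-function contributions will telescope away, leaving precisely $J^{\pi} - J^{\pi'}$.

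First I would split the right-hand side into a $\pi$-part and a $\pi'$-part:
\begin{align*}
    \sum_{s}\sum_{a} d^{\pi}(s)\pi(a|s)Q^{\pi'}(s,a) - \sum_{s}\sum_{a} d^{\pi}(s)\pi'(a|s)Q^{\pi'}(s,a).
\end{align*}
For the second sum, I would apply the relation $\sum_{a}\pi'(a|s)Q^{\pi'}(s,a) = V^{\pi'}(s)$, which is immediate from the definitions of $Q^{\pi'}$ and $V^{\pi'}$, reducing that sum to $\sum_{s} d^{\pi}(s)V^{\pi'}(s)$.

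Next, for the first sum I would invoke the average-reward Bellman equation $Q^{\pi'}(s,a) = r(s,a) - J^{\pi'} + \sum_{s'} P(s'|s,a)V^{\pi'}(s')$, which follows directly from the series definitions of $Q^{\pi'}$ and $V^{\pi'}$ given in the setup. Substituting and separating the three resulting terms, I would identify: (i) $\sum_{s}\sum_{a} d^{\pi}(s)\pi(a|s)r(s,a) = J^{\pi}$, which is exactly the average reward evaluated under the stationary distribution $d^{\pi}$; (ii) $\sum_{s}\sum_{a} d^{\pi}(s)\pi(a|s)J^{\pi'} = J^{\pi'}$, since $\sum_{a}\pi(a|s)=1$ and $\sum_{s}d^{\pi}(s)=1$; and (iii) the transported-value term, which I would rewrite using $P^{\pi}(s,s') = \sum_{a} P(s'|s,a)\pi(a|s)$ and the stationarity relation $\sum_{s} d^{\pi}(s)P^{\pi}(s,s') = d^{\pi}(s')$ to obtain $\sum_{s'} d^{\pi}(s')V^{\pi'}(s')$.

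Assembling these, the first sum equals $J^{\pi} - J^{\pi'} + \sum_{s'} d^{\pi}(s')V^{\pi'}(s')$, and subtracting the second sum $\sum_{s} d^{\pi}(s)V^{\pi'}(s)$ cancels the value-function terms exactly, yielding $J^{\pi} - J^{\pi'}$, as claimed. The main subtlety, and really the crux of the argument, lies in step (iii): the cancellation relies entirely on applying the stationarity identity $(P^{\pi})^\top d^{\pi} = d^{\pi}$ for the \emph{evaluated} policy $\pi$ rather than $\pi'$. If one mistakenly used a distribution not invariant under $P^{\pi}$, the value terms would fail to telescope and the clean closed form would break down; this asymmetry (weighting by $d^{\pi}$ while using $Q^{\pi'}$) is exactly what makes the identity hold.
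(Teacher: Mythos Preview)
Your proof is correct and is the standard argument for this average-reward performance-difference identity. The paper itself does not provide a proof of this lemma; it simply cites it as Lemma~15 of \citet{wei2020model}, so there is no paper proof to compare against beyond noting that your derivation is exactly the kind of elementary Bellman-equation-plus-stationarity computation one would expect.
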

    Applying \eqref{eq_pi_lipschitz} and Lemma \ref{lemma_aux_5}, we obtain,

    \begin{align}
        \label{eq_long_49}
        \begin{split}
            \sum_{k=1}^{K}\E\left|J(\theta_{k+1}) - J(\theta_{k})\right| &= \sum_{k=1}^{K}\E\left|\sum_{s,a}d^{\pi_{\theta_{k+1}}}(s)(\pi_{\theta_{k+1}}(a|s)-\pi_{\theta_{k}}(a|s))Q^{\pi_{\theta_{k}}}(s, a)\right|\\
            &\leq \sum_{k=1}^{K}\E\left[\sum_{s,a}d^{\pi_{\theta_{k+1}}}(s)\left|\pi_{\theta_{k+1}}(a|s)-\pi_{\theta_{k}}(a|s)\right|\left|Q^{\pi_{\theta_{k}}}(s, a)\right|\right]\\
            &\leq G\sum_{k=1}^{K}\E\left[\sum_{s,a}d^{\pi_{\theta_{k+1}}}(s)\pi_{\bar{\theta}_k}(a|s)\Vert\theta_{k+1}-\theta_{k}\Vert|Q^{\pi_{\theta_{k-1}}}(s, a)|\right]\\
            &\leq G\sum_{k=1}^{K} \E\left[\underbrace{\sum_{s,a}d^{\pi_{\theta_{k+1}}}(s)\pi_{\bar{\theta}_k}(a|s)}_{=1}\gamma_k\cdot 6t_{\mathrm{mix}}\right]\\
            &\overset{}{=} 6Gt_{\mathrm{mix}} \sum_{k=1}^{K} \|\theta_{k+1}-\theta_k\|.
        \end{split}
    \end{align}
 
   Next, recall that for any policy $\pi_{\theta}$,  $r^{\pi_{\theta}}(s) \coloneqq \sum_a\pi_{\theta}(a|s)r(s, a)$.
    Note that, for any policy parameter $\theta$, and any state $s\in\mathcal{S}$, the following holds.
    \begin{align}
        \label{eq_49}
        V^{\pi_{\theta}}(s)&=\sum_{t=0}^{\infty}\left<(P^{\pi_{\theta}})^t(s,\cdot) - d^{\pi_{\theta}}, r^{\pi_{\theta}}\right> = \sum_{t=0}^{N-1}\left<(P^{\pi_{\theta}})^t({s,\cdot}),r^{\pi_{\theta}}\right> - NJ(\theta) \nonumber\\
        &+ \sum_{t=N}^{\infty}\left<(P^{\pi_{\theta}})^t({s,\cdot}) - d^{\pi_{\theta}},r^{\pi_{\theta}}\right>.
    \end{align}

    Define the following quantity
    \begin{align}
        \label{def_error}
        \delta^{\pi_{\theta}}(s, N) \coloneqq \sum_{t=N}^{\infty}\norm{(P^{\pi_{\theta}})^t({s,\cdot}) - d^{\pi_{\theta}}}_1.
    \end{align}

    for any policy $\pi_{\theta}$ and state $s$. With $N=7t_{\mathrm{mix}}(\log_2 T)$, $\delta^{\pi_{\theta}}(s, T) \leq \frac{1}{T^6}$. Combining this result with the fact that the reward function is bounded in $[0, 1]$, we obtain,
    \begin{align}
        \label{eq_exp_diff_v}
	\begin{split}    
            &\sum_{k=1}^K\E|V^{\pi_{\theta_{k+1}}}(s) - V^{\pi_{\theta_{k}}}(s)|\\
            &\leq \sum_{k=1}^K\E\left|\sum_{t=0}^{N-1}\left<(P^{\pi_{\theta_{k+1}}})^t({s,\cdot}) - (P^{\pi_{\theta_k}})^t({s,\cdot}), r^{\pi_{\theta_{k+1}}}\right>\right| + \sum_{k=1}^K\E\left|\sum_{t=0}^{N-1}\left<(P^{\pi_{\theta_k}})^t({s,\cdot}), r^{\pi_{\theta_{k+1}}}-r^{\pi_{\theta_k}}\right>\right| \\
            &+ N\sum_{k=1}^K\E|J(\theta_{k+1}) - J(\theta_{k})| + \frac{2K}{T^5}\\
            &\overset{(a)}{\leq} \sum_{k=1}^K\sum_{t=0}^{N-1}\E\norm{ (P^{\pi_{\theta_{k+1}}})^t - (P^{\pi_{\theta_k}})^t)r^{\pi_{\theta_{k+1}}} }_{\infty} + \sum_{k=1}^K\sum_{t=0}^{N-1}\E\norm{r^{\pi_{\theta_{k+1}}}-r^{\pi_{\theta_k}}}_{\infty} \\
            &+ 6Gt_{\mathrm{mix}} \sum_{k=1}^{K}\gamma_k + \frac{2K}{T^5}
        \end{split}
    \end{align}
    where $(a)$ follows from \eqref{eq_long_49} and substituting $N=7t_{\mathrm{mix}}(\log_2 T)$. For the first term, note that,
    \begin{align}
        \label{eq_long_recursion}
        \begin{split}
            &\norm{ ((P^{\pi_{\theta_{k+1}}})^t - (P^{\pi_{\theta_k}})^t)r^{\pi_{\theta_{k+1}}} }_{\infty}\\ &\leq \norm{ P^{\pi_{\theta_{k+1}}}((P^{\pi_{\theta_{k+1}}})^{t-1} - (P^{\pi_{\theta_k}})^{t-1})r^{\pi_{\theta_{k+1}}} }_{\infty} + \norm{ (P^{\pi_{\theta_{k+1}}} - P^{\pi_{\theta_k}})(P^{\pi_{\theta_k}})^{t-1}r^{\pi_{\theta_{k+1}}} }_{\infty}\\
            &\overset{(a)}{\leq} \norm{ ((P^{\pi_{\theta_{k+1}}})^{t-1} - (P^{\pi_{\theta_k}})^{t-1})r^{\pi_{\theta_{k+1}}} }_{\infty} + \max_s\norm{P^{\pi_{\theta_{k+1}}}({s,\cdot})-P^{\pi_{\theta_k}}({s,\cdot})}_1
        \end{split}
    \end{align}

    Inequality $(a)$ holds since every row of $P^{\pi_{\theta_k}}$ sums to $1$ and $\norm{(P^{\pi_{\theta_k}})^{t-1}r^{\pi_{\theta_{k+1}}}}_{\infty}\leq 1$. Moreover, invoking \eqref{eq_pi_lipschitz}, we get,
    \begin{align*}
        \max_s\Vert P^{\pi_{\theta_{k+1}}}({s,\cdot})-P^{\pi_{\theta_k}}({s,\cdot})\Vert_1 &= \max_s\left| \sum_{s'}\sum_a(\pi_{\theta_{k+1}}(a|s)-\pi_{\theta_k}(a|s))P(s'|s, a)\right| \\
        &\leq G \norm{\theta_{k+1}-\theta_k}\max_s\left| \sum_{s'}\sum_a \pi_{\bar{\theta}_k}(a|s)P(s'|s, a)\right| \\
        &\leq G\|\theta_{k+1}-\theta_k\|.
    \end{align*}
 
    Plugging the above result into $(\ref{eq_long_recursion})$ and using a recursive argument, we get,
    \begin{align*}
        \norm{ ((P^{\pi_{\theta_{k+1}}})^t - (P^{\pi_{\theta_k}})^t)r^{\pi_{\theta_{k+1}}} }_{\infty} &\leq \sum_{t'=1}^{t} \max_s\norm{P^{\pi_{\theta_{k+1}}}({s,\cdot})-P^{\pi_{\theta_k}}({s,\cdot})}_1\\
        &\leq \sum_{t'=1}^{t} G\|\theta_{k+1}-\theta_k\|  \leq t  G\|\theta_{k+1}-\theta_k\|
    \end{align*}
    
    Finally, we have
    \begin{align}
        \label{eq_app_54}
        \begin{split}
            \sum_{k=1}^K\sum_{t=0}^{N-1} \E\norm{ ((P^{\pi_{\theta_{k+1}}})^t - (P^{\pi_{\theta_k}})^t)r^{\pi_{\theta_{k+1}}} }_{\infty} &\leq \sum_{k=1}^K\sum_{t=0}^{N-1} t G\gamma_k \leq \cO\left(GN^2\sum_{k=1}^K\|\theta_{k+1}-\theta_k\|\right).
        \end{split}
    \end{align}
 Moreover, notice that,
    \begin{equation}
        \label{eq_app_55}
        \begin{aligned}
            \sum_{k=1}^{K}\sum_{t=0}^{N-1}\E\norm{r^{\pi_{\theta_{k+1}}}-r^{\pi_{\theta_{k}}}}_{\infty}
            &\leq \sum_{k=1}^{K}\sum_{t=0}^{N-1}\E\left[\max_s\left|\sum_a(\pi_{\theta_{k+1}}(a|s)-\pi_{\theta_{k}}(a|s))r(s,a)\right|\right]\\
            &\leq GN \sum_{k=1}^{K} \|\theta_{k+1}-\theta_k\|.
        \end{aligned}
    \end{equation}  

Note that for Algorithm \ref{alg:PG_IGT_Avg}, $\|\tilde{\theta}_{k+1}-\tilde{\theta}_{k}\|\leq \cO(\gamma_{k}/\eta_k)=\cO\left(\frac{G}{\mu k^{1/5}}\right)$. Substituting in Lemma \ref{lem:stability} with the regret decomposition in \eqref{reg_decompose} yields
\begin{align*}
&\E[\mathrm{Reg}_T] =H\sum_{k=1}^{K}\left(J^*-\E[J({\tilde{\theta}_k})]\right)+ \cO\left(G^2\mu^{-1}t_{\mathrm{mix}}^2(\log^2 T) K^{4/5}\right).
\end{align*}
Whereas, for Algorithm \ref{alg:PG_Hessian_Avg}, $\|\theta_{k+1}-\theta_k\|=\gamma_k=\frac{6G}{\mu (k+2)}$. It similarly follows from the decomposition in \eqref{eq:decompose_2} that
\begin{align}
    \E[\mathrm{Reg}_T] =H\sum_{k=1}^{K}\left(J^*-\E[J({\theta_k})]\right)+ \cO\left(G^2\mu^{-1}t_{\mathrm{mix}}^2(\log^3 T) \right).
\end{align}
\end{proof}

\section{Proof of Lemma \ref{lem:last_iterate_general}}
\label{sec:last-iterate-proof}
By the $L$-smoothness of $\Bar{J}$ (where $L$ is defined in Lemma \ref{lem:grad+hess-est-prop}) and using the update rule for $\theta_k$, we get
\begin{eqnarray}
    -\Bar{J}(\theta_{k+1}) &\leq& - \Bar{J}(\theta_{k}) - \langle \nabla_{\theta} \Bar{J}(\theta_{k}), \theta_{k+1} - \theta_k \rangle + \frac{L}{2} \sqnorm{ \theta_{k+1} - \theta_k } \notag \\
    &=& - \Bar{J}(\theta_{k}) - \gamma_k \frac{\langle  \nabla_{\theta} \Bar{J}(\theta_{k}) , d_k \rangle}{\norm{d_k}} + \frac{L \gamma_k^2}{2} \notag
\end{eqnarray}
Now let us bound the second term in the above inequality. Define $\hat{e}_k = d_k -  \nabla_{\theta} \Bar{J}(\theta_{k})$. We consider two cases. First, if $\norm{\hat{e}_k} \leq \frac{1}{2} \norm{  \nabla_{\theta} \Bar{J}(\theta_{k})  }$, then 
\begin{eqnarray}
    - \frac{\langle  \nabla_{\theta} \Bar{J}(\theta_{k}), d_k \rangle}{\norm{d_k}} &=& \frac{ -\sqnorm{ \nabla_{\theta} \Bar{J}(\theta_{k})} - \langle  \nabla_{\theta} \Bar{J}(\theta_{k}), \hat{e}_k \rangle}{\norm{d_k}} \notag \\
    &\leq& \frac{ -\sqnorm{ \nabla_{\theta} \Bar{J}(\theta_{k})} + \norm{  \nabla_{\theta} \Bar{J}(\theta_{k})} \norm{ \hat{e}_k }}{\norm{d_k}} \notag \\
    &\leq& \frac{ -\sqnorm{ \nabla_{\theta} \Bar{J}(\theta_{k})} + \frac{1}{2}\sqnorm{  \nabla_{\theta} \Bar{J}(\theta_{k}) } }{\norm{d_k}} \notag \\
    &\leq& -\frac{ \sqnorm{ \nabla_{\theta} \Bar{J}(\theta_{k})} }{2 \rb{ \norm{ \nabla_{\theta} \Bar{J}(\theta_{k}) } + \norm{\hat{e}_k} }  } \notag \\
    &\leq& -\frac{ 1 }{3 } \norm{ \nabla_{\theta} \Bar{J}(\theta_{k}) }  \notag 
\end{eqnarray}
Otherwise, if $\norm{\hat{e}_k} \geq \frac{1}{2} \norm{  \nabla_{\theta} \Bar{J}(\theta_{k}) }$, we have
\begin{eqnarray}
    - \frac{\langle  \nabla_{\theta} \Bar{J}(\theta_{k}), d_k \rangle}{\norm{d_k}} &\leq & \norm{  \nabla_{\theta} \Bar{J}(\theta_{k}) } \notag \\
    & = & -\frac{ 1 }{3 } \norm{ \nabla_{\theta} \Bar{J}(\theta_{k})} + \frac{ 4 }{3 } \norm{ \nabla_{\theta} \Bar{J}(\theta_{k})} \notag \\
    & \leq & -\frac{ 1 }{3 } \norm{ \nabla_{\theta} \Bar{J}(\theta_{k})} + \frac{8}{3} \norm{\hat{e}_k} \notag
\end{eqnarray}
Combining the two cases gives
\begin{eqnarray}
    -\Bar{J}(\theta_{k+1}) &\leq&  - \Bar{J}(\theta_{k}) - \frac{ \gamma_k }{3 } \norm{ \nabla_{\theta} \Bar{J}(\theta_{k})} + \frac{ 8 \gamma_k }{3 } \norm{\hat{e}_k} + \frac{L \gamma_k^2 }{2}\label{eq:NSGD_descent_FOSP}
\end{eqnarray}
Let $e_k\coloneqq d_k-\nabla_{\theta} J(\theta_k)$. Using Lemma \ref{lem:grad+hess-est-prop}, \ref{lem:bar-J-J-dist} and the fact that $\|\theta_k-\theta_0\| \leq T$, $\forall k \leq K$, we obtain
\begin{eqnarray}
\label{eq:j-bound-mid}
    -J(\theta_{k+1})
    &\leq&  - J(\theta_{k}) - \frac{ \gamma_k }{3 } \norm{ \nabla_{\theta} J(\theta_{k})} + \frac{ 8 \gamma_k }{3 } \norm{e_k} + \frac{L \gamma_k^2 }{2} + \mathcal{O}\left(\dfrac{A Gt_{\mathrm{mix}}\log T}{T^3}\right)
\end{eqnarray}
Now adding $J^*$ and taking expectation on both sides and using Gradient Domination Lemma gives
\begin{align*}
    &J^*-\E [J(\theta_{k+1})] \\
    &\leq \E\left[J^* - J(\theta_{k}) - \frac{ \mu \gamma_k }{3 G } (J^*-J(\theta_k) -\sqrt{\epsilon_{\mathrm{bias}}}) + \frac{ 8 \gamma_k }{3 } \norm{e_k} + \frac{L \gamma_k^2 }{2}+ \mathcal{O}\left(\dfrac{A Gt_{\mathrm{mix}}\log T}{T^3}\right)\right] \\
    &=\left(1-\frac{ \mu \gamma_k }{3 G }\right) (J^* - \E[ J(\theta_{k})]) +\frac{ \mu \gamma_k }{3 G }\sqrt{\epsilon_{\mathrm{bias}}} + \frac{8\gamma_k}{3} \E \norm{e_k} + \frac{L\gamma_k^2}{2} + \mathcal{O}\left(\dfrac{A Gt_{\mathrm{mix}}\log T}{T^3}\right)
\end{align*}

We unroll the above recursion with the help of the following lemma.
\begin{lemma}\label{le:aux_rec0}
\citep[Lemma 12]{fatkhullin2023stochastic}
Let $a$ be a positive real, $\xi$ a positive integer and let $\{r_t\}_{t\geq 0}$ be a non-negative sequence satisfying for every integer~$t \geq 0$
$$
r_{t+1} - r_t \leq - a \alpha_t r_t + \nu_t,
$$
where $\cb{\alpha_t}_{t\geq 0}, \cb{\beta_t}_{t\geq 0}$ are non-negative sequences and $ a \alpha_t \leq 1$ for all $t$. Then for $\alpha_t = \frac{2 }{ a (t+\xi) }$ we have for every integers~$t_0, T \geq 1$
$$
r_T \leq \frac{ (t_0 + \xi-1)^2 r_{t_0} }{(T+\xi-1)^2} + \frac{  \sum_{t=0}^{T-1}\nu_t (t + \xi )^2 }{(T+\xi-1)^2} .
$$
\end{lemma}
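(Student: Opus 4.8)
The plan is to unroll the recursion by the method of summation factors. First I would insert the prescribed step size $\alpha_t = \frac{2}{a(t+\xi)}$ into the hypothesis, which turns it into
\[ r_{t+1} \le \Bigl(1 - \frac{2}{t+\xi}\Bigr) r_t + \nu_t, \qquad t \ge 0 . \]
For this choice of $\alpha_t$, the standing assumption $a\alpha_t \le 1$ is just the statement $\xi \ge 2$, so $t+\xi \ge 2$ for every $t \ge 0$ and the multiplier $1 - \frac{2}{t+\xi}$ is nonnegative; hence no sign complications arise when we multiply through by it.

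The key step is then to multiply through by the summation factor $(t+\xi)^2$. Using the elementary identity
\[ (t+\xi)^2\Bigl(1 - \frac{2}{t+\xi}\Bigr) = (t+\xi)(t+\xi-2) = (t+\xi-1)^2 - 1 \le (t+\xi-1)^2 \]
together with $r_t \ge 0$, one obtains $(t+\xi)^2 r_{t+1} \le (t+\xi-1)^2 r_t + (t+\xi)^2 \nu_t$. Abbreviating $S_t \coloneqq (t+\xi-1)^2 r_t$, this is precisely the ``descent'' relation $S_{t+1} \le S_t + (t+\xi)^2 \nu_t$, which is now in perfectly telescopable form.

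Finally I would telescope this inequality from $t = t_0$ to $t = T-1$, obtaining $S_T \le S_{t_0} + \sum_{t=t_0}^{T-1} (t+\xi)^2 \nu_t$, i.e.
\[ (T+\xi-1)^2 r_T \le (t_0+\xi-1)^2 r_{t_0} + \sum_{t=t_0}^{T-1} (t+\xi)^2 \nu_t \le (t_0+\xi-1)^2 r_{t_0} + \sum_{t=0}^{T-1} (t+\xi)^2 \nu_t , \]
where the last inequality enlarges the summation range using $\nu_t \ge 0$; dividing by $(T+\xi-1)^2$ yields the claimed bound. There is no genuine obstacle in this argument, since every line is an algebraic identity or an invocation of nonnegativity; the only point that warrants a moment's care is the choice of the summation factor $(t+\xi)^2$ and the observation that the resulting off-by-one slack $(t+\xi)(t+\xi-2) = (t+\xi-1)^2 - 1$ is discarded in the favorable direction, which is exactly what $r_t \ge 0$ buys us (a slightly different natural guess such as $(t+\xi-1)(t+\xi-2)$ would instead leave a denominator that is too small to conclude the stated inequality directly).
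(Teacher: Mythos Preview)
Your argument is correct: inserting $\alpha_t = \frac{2}{a(t+\xi)}$, multiplying by the summation factor $(t+\xi)^2$, using $(t+\xi)(t+\xi-2) = (t+\xi-1)^2 - 1 \le (t+\xi-1)^2$ together with $r_t \ge 0$, and telescoping $S_t \coloneqq (t+\xi-1)^2 r_t$ from $t_0$ to $T-1$ is exactly the standard way to handle this recursion, and every step is valid. Your observation that the hypothesis $a\alpha_t \le 1$ forces $\xi \ge 2$ (so the multiplier $1-\tfrac{2}{t+\xi}$ is nonnegative) is the only point requiring care, and you handled it.

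The paper does not supply its own proof of this lemma; it is quoted verbatim from \citep[Lemma~12]{fatkhullin2023stochastic} and used as a black box. So there is nothing to compare against in the present paper, and your self-contained derivation is a welcome addition. One cosmetic remark: the lemma as stated in the paper contains a typo (the sequence $\{\beta_t\}$ should read $\{\nu_t\}$), and your proof uses $\nu_t \ge 0$ in the final enlargement of the summation range, which is indeed the intended hypothesis.
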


Since $\gamma_k$ is chosen such that $\frac{ \mu \gamma_k }{3 G } = \frac{2}{(k+2)}$, we can invoke Lemma \ref{le:aux_rec0} to obtain
\begin{align*}
    J^*-\E [J(\theta_{K})] \leq \frac{J^*-J(\theta_{0})}{(K+1)^2} +
    \frac{\sum_{k=0}^{K-1}\nu_k(k+2)^2}{(K+1)^2},
\end{align*}
where $\nu_k \coloneqq \frac{\mu\gamma_k}{3 G }\sqrt{\epsilon_{\mathrm{bias}}} + \frac{ 8 \gamma_k }{3} \E \norm{e_k} + \frac{L \gamma_k^2 }{2}+ \mathcal{O}\left(\dfrac{A Gt_{\mathrm{mix}}\log T}{T^3}\right)$. 

\section{Proof of Theorems \ref{thm:alg1-last-iterate} and \ref{thm:alg1-regret}}
\label{sec:igt-main-theorems-proof}
We first provide our results assuming that the initial distribution for trajectories sampled at each iteration is $\rho$. Under this setting, the proof methods used are inspired by the Lemma 7 of \citet{fatkhullin2023stochastic}. However, here the gradient is biased unlike the discounted case and the resulting terms arising from the bias must be bounded. 

For ease of exposition, we let $\hat{e}_k = d_k - \nabla_{\theta} J(\theta_k)$, $e_k \coloneqq  g(\theta_k,\tau_k) -\nabla_{\theta} J(\tilde{\theta}_k)$, 

    \begin{align*}
        S_{k} \coloneqq \nabla_{\theta} J({\theta}_{k-1}) - \nabla_{\theta} J({\theta}_k) + \nabla^2 J(\theta_{k}) (\theta_{k-1} - \theta_{k})
    \end{align*}
and 
    \begin{align*}
         Z_k \coloneqq \nabla_{\theta} J(\tilde{\theta}_{k}) - \nabla_{\theta} J({\theta}_k) + \nabla^2 J(\theta_{k}) (\tilde{\theta}_k - \theta_{k}).
    \end{align*}
    We have
    \begin{align}
    \label{eq:St_bound}
        \|S_{k}\| \leq L_h \|\theta_k - \theta_{k-1}\|^2 = L_h \gamma_{k-1}^2 
    \end{align}
    and
    \begin{align}
    \label{eq:Zt_bound}
    \|Z_k\| \leq L_h \|\tilde{\theta}_k - \theta_{k}\|^2 = L_h \frac{(1-\eta_k)^2}{\eta_k^2} \|\theta_k - \theta_{k-1}\|^2  \leq L_h \frac{(1-\eta_k)^2}{\eta_k^2} \gamma_{k-1}^2.  
    \end{align}
    Using the update rule of the sequence $\{d_k\}_{k \geq 1}$ in Algorithm \ref{alg:PG_IGT_Avg}, we obtain the following recursion
\begin{align}
\label{eq:recursion_e_hat}
    \hat{e}_{k} &= d_{k} - \nabla_{\theta} J(\theta_k) \nonumber \\
    &= (1 - \eta_k) d_{k-1} + \eta_k  g(\theta_k,\tau_k) - \nabla_{\theta} J(\theta_k) \nonumber \\
    &= (1 - \eta_k) (d_{k-1}-\nabla_{\theta} J(\theta_{k-1})) + (1 - \eta_k) \nabla_{\theta} J(\theta_{k-1}) + \eta_k  g(\theta_k,\tau_k) - \nabla_{\theta} J(\theta_k) \nonumber \\
    &= (1 - \eta_k) \hat{e}_{k-1}+ (1-\eta_k) \nabla_{\theta} J(\theta_{k-1}) + \eta_k  g(\theta_k,\tau_k) - \nabla_{\theta} J(\theta_k) \nonumber \\
    &= (1 - \eta_k) \hat{e}_{k-1} + \eta_k  (g(\theta_k,\tau_k)-\nabla_{\theta} J(\Tilde{\theta}_k)) + \eta_k\nabla_{\theta} J(\Tilde{\theta}_k)+ (1-\eta_k) \nabla_{\theta} J(\theta_{k-1})- \nabla_{\theta} J(\theta_k) \nonumber \\
    &= (1 - \eta_k) \hat{e}_{k-1} + \eta_k  e_k + \eta_k\nabla_{\theta} J(\Tilde{\theta}_k)+ (1-\eta_k) \nabla_{\theta} J(\theta_{k-1})- \nabla_{\theta} J(\theta_k) \nonumber \\
    &= (1 - \eta_k) \hat{e}_{k-1} + \eta_k  e_k + \eta_k (\nabla_{\theta} J(\Tilde{\theta}_k)- \nabla_{\theta} J(\theta_k))+ (1-\eta_k) (\nabla_{\theta} J(\theta_{k-1})- \nabla_{\theta} J(\theta_k)) \nonumber \\
    &= (1 - \eta_k) \hat{e}_{k-1} + \eta_k  e_k + \eta_k (S_k - \nabla^2 J(\theta_{k}) (\theta_{k-1} - \theta_{k}))+ (1-\eta_k) (Z_k - \nabla^2 J(\theta_{k}) (\Tilde{\theta}_{k} - \theta_{k})) \nonumber \\
    &= (1 - \eta_k) \hat{e}_{k-1}  + \eta_t e_{k} + (1 - \eta_k) S_k +  \eta_t Z_k - (\eta_k \nabla^2 J(\theta_{k}) (\theta_{k-1} - \theta_{k}) \nonumber\\
    &+ (1-\eta_k)  \nabla^2 J(\theta_{k})(\Tilde{\theta}_{k} - \theta_{k})) \\
    &= (1 - \eta_k) \hat{e}_{k-1}  + \eta_t e_{k} + (1 - \eta_k) S_k +  \eta_t Z_k \nonumber ,
\end{align}
where the last term of \eqref{eq:recursion_e_hat} is $0$ since $\Tilde{\theta}_k = \theta_k + \frac{1-\eta_k}{\eta_k}(\theta_k - \theta_{k-1})$. This simplification provides the main motivation behind the update rule of $\Tilde{\theta}_k$. 

Let $\zeta_{k,K} \coloneqq \prod_{j = k}^{K-1} (1 - \eta_{j+1})$ (with $\zeta_{K,K} = 1$). Unrolling the above recursion yields
\begin{equation*}
    \hat{e}_{K} = \zeta_{0,K} \hat{e}_0 + \sum_{k = 0}^{K-1} \eta_{k+1} \zeta_{k+1,K} e_{k+1} + \sum_{k = 0}^{K-1} (1-\eta_{k+1}) \zeta_{k+1,K} S_{k+1} 
     + \sum_{k = 0}^{K-1} \eta_{k+1} \zeta_{k+1,K} Z_{k+1}.
\end{equation*}

Taking the norm and expectation on both sides give
\begin{align*}
    &\E\|\hat{e}_{K}\| \\
    &= \E \norm{\zeta_{0,K} \hat{e}_0 + \sum_{k = 0}^{K-1} \eta_{k+1} \zeta_{k+1,K} e_{k+1} + \sum_{k = 0}^{K-1} (1-\eta_{k+1}) \zeta_{k+1,K} S_{k+1} + \sum_{k = 0}^{K-1} \eta_{k+1} \zeta_{k+1,K} Z_{k+1}}\\
     &\leq \E \|\zeta_{0,K} \hat{e}_0\| + \E \norm{\sum_{k = 0}^{K-1} \eta_{k+1} \zeta_{k+1,K} e_{k+1}} + \sum_{k = 0}^{K-1} (1-\eta_{k+1}) \zeta_{k+1,K} \E\| S_{k+1} \|\\
     &+ \sum_{k = 0}^{K-1} \eta_{k+1} \zeta_{k+1,K} \E\|Z_{k+1}\|\\
     &\leq \E \|\zeta_{0,K} \hat{e}_0\| + \left(\E \norm{\sum_{k = 0}^{K-1} \eta_{k+1} \zeta_{k+1,K} e_{k+1}}^2 \right)^{1/2} + \sum_{k = 0}^{K-1} (1-\eta_{k+1}) \zeta_{k+1,K}L_h \gamma_{k-1}^2 \\
     &+ \sum_{k = 0}^{K-1} \zeta_{k+1,K} L_h \frac{(1-\eta_k)^2}{\eta_k} \gamma_{k-1}^2\\
     &\leq \E \|\zeta_{0,K} \hat{e}_0\| + \left(\E \norm{\sum_{k = 0}^{K-1} \eta_{k+1} \zeta_{k+1,K} e_{k+1}}^2 \right)^{1/2} +  2L_h \sum_{k = 0}^{K-1} \zeta_{k+1,K} \frac{\gamma_{k-1}^2}{\eta_k}.
\end{align*}

We now focus on the second term in the RHS of the above line.
\begin{align*}
    &\E \norm{\sum_{k = 0}^{K-1} \eta_{k+1} \zeta_{k+1,K} e_{k+1}}^2 \\
    &=  \sum_{k = 0}^{K-1} \E \|\eta_{k+1} \zeta_{k+1,K} e_{k+1} \|^2 + 2 \sum_{i=0}^{K-1} \sum_{j=0}^{i} \E \langle \eta_{i+1} \zeta_{i+1,K} e_{i+1} , \eta_{j+1} \zeta_{j+1,K} e_{j+1}  \rangle \\
    &=  \sum_{k = 0}^{K-1} (\eta_{k+1} \zeta_{k+1,K})^2 \E \| e_{k+1} \|^2 + 2 \sum_{i=0}^{K-1} \sum_{j=0}^{i} (\eta_{i+1} \zeta_{i+1,K} \eta_{j+1} \zeta_{j+1,K} )\E \langle e_{i+1} , e_{j+1} \rangle.
\end{align*}
 Define for every integer~$t \geq 1$ the $\sigma$-field $\mathcal{F}_{t} \coloneqq \sigma(\{\tilde{\theta}_0, \tilde{\xi}_0, \dots, \tilde{\xi}_{t-1} \})$ where~$\tilde{\xi}_s \sim p(\cdot|\pi_{\tilde{\theta}_s})$ for every~$0 \leq s \leq t-1$. Notice that for any integers~$t_2 > t_1 \geq 1$  we have 
\begin{align}
\begin{split}
|\E{ \langle e_{t_1}, e_{t_2} \rangle }| &= |\E [\E [\langle e_{t_1} ,e_{t_2}\rangle | \mathcal{F}_{t_2}] ]|  = |\E [ \langle e_{t_1}, \E [ e_{t_2}  | \mathcal{F}_{t_2} ]  \rangle]| \\
&\leq   \E [\| e_{t_1} \|\|\E [ e_{t_2}  | \mathcal{F}_{t_2} ] \|]  \leq  \sigma_g \bias.
\end{split}
\end{align}

We state some useful lemmas for bounding terms arising from the step-sizes:

\begin{lemma}\label{le:prod_bound}
\citep[Lemma 14]{fatkhullin2023stochastic}
   Let $q \in [0,1]$ and let $\eta_t = \left( \frac{2}{t+2} \right)^{q}$ for every integer~$t$. Then for every integer~$t$ and any integer~$T \geq 1$ we have
    \begin{eqnarray}\label{eq:tech1}
    \eta_t (1 - \eta_{t+1}) \leq \eta_{t+1} \text{\quad and \quad} \prod_{t = 0}^{T-1} (1 - \eta_{t+1}) \leq \eta_T.
    \end{eqnarray}
\end{lemma}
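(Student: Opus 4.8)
I would first establish the pointwise estimate $\eta_t(1-\eta_{t+1}) \le \eta_{t+1}$, and then deduce the product bound $\prod_{t=0}^{T-1}(1-\eta_{t+1}) \le \eta_T$ from it by a one-line induction on $T$. Throughout one may assume $q \in (0,1]$, since for $q=0$ we have $\eta_t \equiv 1$ and both assertions reduce to $0 \le 1$; one should also note $\eta_{t+1} = (2/(t+3))^q < 1$ for $t \ge 0$, so the claims are not vacuous.

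\textbf{Step 1 (pointwise inequality).} The plan is to rewrite $\eta_t(1-\eta_{t+1}) \le \eta_{t+1}$ equivalently as $\eta_t/\eta_{t+1} \le 1 + \eta_t$ (dividing by $\eta_{t+1}>0$). Since $\eta_t/\eta_{t+1} = \big(1 + \tfrac{1}{t+2}\big)^q$, concavity of $x \mapsto x^q$ (equivalently $(1+x)^q \le 1 + qx$ for $x \ge 0$, $q \in (0,1]$) gives $\eta_t/\eta_{t+1} \le 1 + \tfrac{q}{t+2}$. It then suffices to check $\tfrac{q}{t+2} \le \eta_t = (2/(t+2))^q$, i.e.\ $q\,(t+2)^{q-1} \le 2^q$; this follows because $q-1 \le 0$ and $t+2 \ge 2$ force $(t+2)^{q-1} \le 2^{q-1}$, hence $q\,(t+2)^{q-1} \le q\,2^{q-1} \le 2^{q-1} \le 2^q$.

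\textbf{Step 2 (product bound, induction on $T$).} For the base case $T=1$ I would verify $1-\eta_1 \le \eta_1$, which holds since $\eta_1 = (2/3)^q \ge 2/3 > 1/2$ for $q \in [0,1]$. For the inductive step, assuming $\prod_{t=0}^{T-1}(1-\eta_{t+1}) \le \eta_T$, I would write
\[
\prod_{t=0}^{T}(1-\eta_{t+1}) = (1-\eta_{T+1})\prod_{t=0}^{T-1}(1-\eta_{t+1}) \le (1-\eta_{T+1})\,\eta_T \le \eta_{T+1},
\]
where the first inequality uses $1-\eta_{T+1} \ge 0$ together with the induction hypothesis, and the last step is Step 1 applied with $t = T$.

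\textbf{Anticipated difficulty.} I do not expect a genuine obstacle: the only care needed is in the short chain of exponent estimates in Step 1, where one must invoke $q \le 1$ and $t+2 \ge 2$ in the correct order and dispose of the degenerate case $q=0$ separately; the product bound is then a routine induction. This is consistent with the statement being quoted verbatim as Lemma 14 of \citet{fatkhullin2023stochastic}.
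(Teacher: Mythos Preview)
Your proof is correct. Step 1 cleanly reduces the pointwise inequality to $q(t+2)^{q-1}\le 2^q$ via Bernoulli/concavity and then finishes with the monotonicity of $x\mapsto x^{q-1}$; Step 2 is the standard induction, and your handling of the degenerate case $q=0$ and the base case $T=1$ is fine.

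There is nothing to compare against: the paper does not prove this lemma at all but simply imports it verbatim as \citep[Lemma 14]{fatkhullin2023stochastic}. Your argument is therefore a self-contained proof that the paper omits, and it matches in spirit the elementary proof one finds in that reference.
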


\begin{lemma}\label{le:sum_prod_bound1}
\citep[Lemma 15]{fatkhullin2023stochastic}
Let $q \in [0, 1)$, $p \geq 0$, $\gamma_0 > 0$ and let $\eta_t =  \left( \frac{2}{t+2} \right)^q$, $\gamma_t = \gamma_0 \left( \frac{2}{t+2} \right)^p$ for every integer~$t$. Then for any integers~$t$ and~$T \geq 1$, it holds 
    $$
     \sum_{t = 0}^{T-1} \gamma_{t} \prod_{\xi = t+1 }^{T-1} (1 - \eta_{\xi}) \leq C \gamma_T \eta_T^{-1},
    $$
     where $C = C(p,q) \coloneqq 2^{p-q} (1-q)^{-1}  t_0 \exp \left(  2^q (1-q)  t_0^{1-q} \right)  + 2^{ 2 p + 1 - q} (1-q)^{-2}$ and $ t_0 \coloneqq \max\left\{ \left(\frac{p}{(1-q)2^q} \right)^{\frac{1}{1-q}}, 2 \left(\frac{p-q}{(1-q)^2}\right)^{\frac{1}{1-q}}\right\} $.
\end{lemma}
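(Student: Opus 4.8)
The plan is to exploit the telescoping identity $\sum_{t=0}^{T-1}\eta_t\prod_{\xi=t+1}^{T-1}(1-\eta_\xi)=1-\prod_{\xi=0}^{T-1}(1-\eta_\xi)\le 1$ together with the fact that the ``backward'' weights $\eta_t\prod_{\xi=t+1}^{T-1}(1-\eta_\xi)$ are geometrically concentrated near $t=T$. Writing $\gamma_t=a_t\eta_t$ with $a_t\coloneqq\gamma_t\eta_t^{-1}=\gamma_0 2^{p-q}(t+2)^{q-p}$, the quantity to bound is $S_T\coloneqq\sum_{t=0}^{T-1}a_t\eta_t\prod_{\xi=t+1}^{T-1}(1-\eta_\xi)$ and the goal is $S_T\le C a_T$. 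If $p\le q$ the sequence $\{a_t\}$ is non-decreasing and the identity above immediately yields $S_T\le a_{T-1}\le a_T$, so one only needs to handle $p>q$, where $\{a_t\}$ is decreasing and the concentration of the weights must beat the backward growth of $a_t$.

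First I would pass to a continuous envelope: since $1-x\le e^{-x}$ and $\xi\mapsto(\xi+2)^{-q}$ is decreasing, $\prod_{\xi=t+1}^{T-1}(1-\eta_\xi)\le\exp\bigl(-\sum_{\xi=t+1}^{T-1}\eta_\xi\bigr)\le\exp\bigl(\Psi(t+1)-\Psi(T)\bigr)$, where $\Psi(s)\coloneqq\frac{2^q}{1-q}(s+2)^{1-q}$ is an antiderivative of $s\mapsto\eta_s$, i.e. $\Psi'(s)=\eta_s$. Hence $S_T\le e^{-\Psi(T)}\sum_{t=0}^{T-1}\gamma_t e^{\Psi(t+1)}$, and everything reduces to controlling $\sum_{t=0}^{T-1}\gamma_t e^{\Psi(t+1)}$.

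Next I would split this sum at the index $t_0$ of the statement. The role of $t_0$ is to be large enough that, for $s\ge t_0$, (i) $s\mapsto\gamma_s e^{\Psi(s)}$ is non-decreasing — by a one-line differentiation this amounts to $(s+2)^{1-q}\gtrsim p/2^q$ — and (ii) $\frac{d}{ds}\bigl(a_s e^{\Psi(s)}\bigr)=e^{\Psi(s)}\bigl(a_s'+a_s\eta_s\bigr)=e^{\Psi(s)}\bigl(\gamma_s-|a_s'|\bigr)\ge\frac12\gamma_s e^{\Psi(s)}$, which amounts to $(s+2)^{1-q}\gtrsim p-q$; after inserting the $(1-q)$-factors lost when a sum is replaced by an integral, these two thresholds become exactly the two arguments of the $\max$ defining $t_0$. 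On the tail $t_0\le t\le T-1$, monotonicity (i) bounds the sum by an integral and (ii) bounds that integral by telescoping: $\sum_{t=t_0}^{T-1}\gamma_t e^{\Psi(t+1)}\le e^{\eta_{t_0}}\int_{t_0}^{T}\gamma_s e^{\Psi(s)}\,ds\le 2e^{\eta_{t_0}}\bigl[a_s e^{\Psi(s)}\bigr]_{t_0}^{T}\le 2e^{\eta_{t_0}}a_T e^{\Psi(T)}$, so after multiplying back by $e^{-\Psi(T)}$ the tail contributes $O(1)\cdot a_T$; tracking the factor $2$, the shift $e^{\eta_{t_0}}\le e^{2^q}$, and the sum–integral correction produces the $2^{2p+1-q}(1-q)^{-2}$ summand of $C$. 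On the head $0\le t\le\min(t_0,T)-1$, I would factor out $\prod_{\xi=t_0}^{T-1}(1-\eta_\xi)$, bound the remaining finite product by $1$ and each $\gamma_t$ by $\gamma_0$, giving head $\le t_0\gamma_0 e^{\Psi(t_0)}e^{-\Psi(T)}$; monotonicity (ii) then gives $e^{\Psi(t_0)}e^{-\Psi(T)}\le a_T/a_{t_0}$ for $T\ge t_0$, while $T<t_0$ is immediate from the telescoping identity, so the head contributes at most $(t_0\gamma_0/a_{t_0})\,a_T$, and bounding $\gamma_0/a_{t_0}=2^{q-p}(t_0+2)^{p-q}$ by $e^{2^q(1-q)t_0^{1-q}}$ (legitimate in the regime where the first argument of the $\max$ is active, which is where $t_0$ is large) gives the first term of $C$. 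Adding the head and tail bounds gives $S_T\le C a_T = C\gamma_T\eta_T^{-1}$.

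The main obstacle is not any single estimate but the bookkeeping of constants through the head/tail split: one has to (a) verify the two monotonicity conditions and show the stated $t_0$, with its $(1-q)$-dependent coefficients, is large enough for both; (b) control the three sources of slack in the tail (the sum-to-integral comparison, the index shift $t\mapsto t+1$, and the factor $2$ from condition (ii)); and (c) handle the head's dependence on $t_0$ and reconcile the naturally polynomial bound $\gamma_0/a_{t_0}\sim(t_0+2)^{p-q}$ with the exponential closed form in the statement. An alternative route avoids the continuous envelope entirely: $S_T$ obeys the recursion $S_{T+1}=(1-\eta_T)S_T+\gamma_T$, and one proves $S_T\le C a_T$ by induction for $T\ge t_0$, with base case $S_{t_0}\le a_0=\gamma_0$ from the telescoping identity; the inductive step reduces to $C\bigl(a_T-a_{T+1}\bigr)\le(C-1)\gamma_T$, which holds once $T\ge t_0$ because then $a_T-a_{T+1}\le (p-q)\gamma_0 2^{p-q}(T+2)^{q-p-1}$ is small relative to $\gamma_T=\gamma_0 2^p(T+2)^{-p}$ (the ratio is $O\bigl((T+2)^{q-1}\bigr)$).
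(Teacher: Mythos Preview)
The paper does not prove this lemma at all: it is stated with the citation \citep[Lemma 15]{fatkhullin2023stochastic} and invoked as a black box in the analysis of Algorithm~\ref{alg:PG_IGT_Avg}. There is therefore no ``paper's own proof'' to compare against.

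That said, your sketch is a reasonable and essentially standard argument for results of this type. The decomposition $\gamma_t=a_t\eta_t$ together with the telescoping identity $\sum_t \eta_t\prod_{\xi>t}(1-\eta_\xi)\le 1$ handles the monotone case $p\le q$ immediately, and for $p>q$ the continuous-envelope bound $\prod(1-\eta_\xi)\le\exp\bigl(\Psi(t+1)-\Psi(T)\bigr)$ with $\Psi'=\eta$ followed by a head/tail split at $t_0$ is the natural route. Your two monotonicity conditions (i) and (ii) are the right ones, and the inductive alternative via $S_{T+1}=(1-\eta_T)S_T+\gamma_T$ is also viable. The only caveat is the one you already flag: matching the \emph{exact} constant $C(p,q)$ in the statement requires careful bookkeeping through several sources of slack (sum-to-integral, the index shift $t\mapsto t+1$, the factor $2$ in (ii), and the polynomial-versus-exponential form of the head bound), and your proposal does not carry this out in full. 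In particular, the step where you bound $\gamma_0/a_{t_0}=2^{q-p}(t_0+2)^{p-q}$ by $\exp\bigl(2^q(1-q)t_0^{1-q}\bigr)$ is loose and somewhat ad hoc; getting the stated closed form for $C$ would require either reproducing the original computation in \cite{fatkhullin2023stochastic} or accepting a different (but still $O_{p,q}(1)$) constant.
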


We then obtain the following
\begin{align*}
    &\E \norm{\sum_{k = 0}^{K-1} \eta_{k+1} \zeta_{k+1,K} e_{k+1}}^2 \\
    &\leq  \sum_{k = 0}^{K-1} (\eta_{k+1} \zeta_{k+1,K})^2 \sigma_g^2 + 2 \sum_{i=0}^{K-1} \sum_{j=0}^{i} (\eta_{i+1} \zeta_{i+1,K} \eta_{j+1} \zeta_{j+1,K} )\cdot \sigma_g \bias \\
    &\leq \sigma_g^2 \cdot  \sum_{k = 0}^{K-1} \eta_{k+1}^2 \zeta_{k+1,K}  + 2  \sigma_g \bias \cdot \sum_{i=0}^{K-1} \sum_{j=0}^{i} \eta_K^2 \\
    &\overset{(a)}{=} \cO \left( \sigma_g^2 \cdot \eta_K + \sigma_g \beta_g \cdot K^2 \eta_K^2\right) = \cO\left(\frac{AG^2t_{\mathrm{mix}}^2(\log T)^2}{K^{4/5}}+\frac{A^2 G^2t_{\mathrm{mix}}^2(\log T)^2\cdot K^{2/5}}{T^4}\right).
\end{align*}
where $(a)$ follows from Lemmas \ref{le:prod_bound} and \ref{le:sum_prod_bound1}. 
We have
\begin{align*}
    \E\|d_K-\nabla_{\theta} J(\theta_K)\| &= \E\|\hat{e}_{K}\| \\
    &\leq \E \|\zeta_{0,K} \hat{e}_0\| + \left(\E \norm{\sum_{k = 0}^{K-1} \eta_{k+1} \zeta_{k+1,K} e_{k+1}}^2 \right)^{1/2} +  2L_h \sum_{k = 0}^{K-1} \zeta_{k+1,K} \frac{\gamma_{k-1}^2}{\eta_k} \\
    &\leq \eta_K \E \|\hat{e}_0\| + \left(\E \norm{\sum_{k = 0}^{K-1} \eta_{k+1} \zeta_{k+1,K} e_{k+1}}^2 \right)^{1/2} +  2L_h \cdot C\gamma_K^2\eta_K^{-2} \\
    &= \cO\left(\frac{A G t_{\mathrm{mix}} \log T}{K^{2/5}}+\frac{G^2 L_h}{\mu^2 K^{2/5}}\right).
\end{align*}

Note that
\begin{align*}
  \frac{\sum_{k=0}^{K-1} \frac{ \mu \gamma_k }{3 G }\cdot\sqrt{\epsilon_{\mathrm{bias}}}\cdot(k+2)^2}{(K+1)^2} &=   \frac{\sum_{k=0}^{K-1} 2\sqrt{\epsilon_{\mathrm{bias}}}\cdot(k+2)}{(K+1)^2} \\ 
  &=   \frac{(K^2+3K)\cdot\sqrt{\epsilon_{\mathrm{bias}}}}{(K+1)^2} \\
  &\leq   \sqrt{\epsilon_{\mathrm{bias}}}.
\end{align*}

Also,
\begin{align*}
  \frac{\sum_{k=0}^{K-1} \frac{L \gamma_k^2}{2} \cdot(k+2)^2}{(K+1)^2}  &= \frac{\sum_{k=0}^{K-1} \frac{L (6G)^2}{2\mu^2 (k+2)^2} \cdot(k+2)^2}{(K+1)^2} \\
  &= \frac{\sum_{k=0}^{K-1} \frac{18 G^2 L}{\mu^2}}{(K+1)^2}\\
  &\leq  \frac{18 G^2 L}{\mu^2(K+1)}.
\end{align*}

Since $\E \norm{\hat{e}_k} = \cO\left(\frac{G t_{\mathrm{mix}} \log T}{k^{2/5}}+\frac{G^2 L_h}{\mu^2 k^{2/5}}\right)$, we have
\begin{align*}
    \frac{\sum_{k=0}^{K-1} \frac{ 8 \gamma_k }{3 } \E \norm{\hat{e}_k} \cdot(k+2)^2}{(K+1)^2} &= \frac{\sum_{k=0}^{K-1} \frac{ 16 G }{\mu} \E \norm{\hat{e}_k} \cdot(k+2)}{(K+1)^2}\\
    &\leq \frac{\sum_{k=0}^{K-1} \cO\left(\frac{A G^2 t_{\mathrm{mix}} \log T \cdot k^{2/5}}{\mu} +\frac{G^3 L_h \cdot k^{2/5}}{\mu^3}\right)}{(K+1)^2}\\
    &\leq \cO\left(\frac{A G^2 t_{\mathrm{mix}} \log T \cdot K^{-2/5}}{\mu} +\frac{G^3 L_h \cdot K^{-2/5}}{\mu^3}\right).
\end{align*}
It follows from Lemma \ref{lem:last_iterate_general} that for all $K \geq 1$
\begin{align*}
    J^*-\E [J(\theta_{K})] &\leq \sqrt{\epsilon_{\mathrm{bias}}} + \cO\left(\frac{A G^2 t_{\mathrm{mix}} \log T \cdot K^{-2/5}}{\mu} +\frac{G^3 L_h \cdot K^{-2/5}}{\mu^3}\right).
\end{align*}

Note that $\tilde{\theta}_{k}-\theta_k = \frac{(1-\eta_k)\gamma_{k-1}}{\eta_{k}} \frac{d_{k-1}}{\|d_{k-1}\|}$.
Using Theorem \ref{thm:smoothness}, we obtain
\begin{align}
\label{eq:new-eq-tilde}
    &J^* - J(\tilde{\theta}_k) \nonumber \\
    &\leq (J^*-J(\theta_k)) + \cO \left( - \frac{\gamma_{k-1}}{\eta_k}\|\nabla_{\theta} J(\theta_k)\|+\frac{\gamma_{k-1}}{\eta_k}\|d_{k-1}-\nabla_{\theta}J(\theta_k)\|+\frac{L\gamma_{k-1}^2}{\eta_k^2}+\frac{AGt_{\mathrm{mix}}\log T}{T^3}\right) \nonumber\\
    &\leq (J^*-J(\theta_k)) + \cO \left( \frac{\gamma_{k-1}}{\eta_k}(\|e_{k-1}\|+\|\nabla_{\theta}J(\theta_{k-1})-\nabla_{\theta}J(\theta_k)\|)+\frac{L\gamma_{k-1}^2}{\eta_k^2}+\frac{AGt_{\mathrm{mix}}\log T}{T^3}\right) 
\end{align}
Recall that $\E \norm{\hat{e}_k} = \cO\left(\frac{G t_{\mathrm{mix}} \log T}{k^{2/5}}+\frac{G^2 L_h}{\mu^2 k^{2/5}}\right)$. Separately, invoking Lemmas \ref{lem:grad+hess-est-prop}(b) and (c), we obtain $\|\nabla_{\theta}J(\theta_{k-1})-\nabla_{\theta}J(\theta_k)\| \leq \cO \left(L\|\theta_{k-1}-\theta_k\|+\dfrac{\sqrt{A} Gt_{\mathrm{mix}}\log T}{T^4}\right) \leq \cO \left(L\gamma_{k-1}+\dfrac{\sqrt{A} Gt_{\mathrm{mix}}\log T}{T^4}\right)$. Taking expectation in \eqref{eq:new-eq-tilde} and substituting these bounds yields
\begin{align}
    J^* - &\E [J(\tilde{\theta}_k)] \leq (J^*-\E [J(\theta_k)]) \nonumber \\
    &+\cO \left( \frac{\gamma_{k-1}G t_{\mathrm{mix}} \log T}{\eta_kk^{2/5}}+\frac{\gamma_{k-1}G^2 L_h}{\eta_k\mu^2 k^{2/5}}+ \frac{L\gamma^2_{k-1}}{\eta_k}+\dfrac{\gamma_{k-1}\sqrt{A} Gt_{\mathrm{mix}}\log T}{\eta_kT^4} + \frac{L\gamma_k^2}{\eta^2_k}+\frac{AGt_{\mathrm{mix}}\log T}{T^3}\right).
\end{align}
It follows that
\begin{align*}
    &J^*-\E [J(\tilde\theta_{K})] \\
    &\leq \sqrt{\epsilon_{\mathrm{bias}}} + \cO\left(\frac{A G^2 t_{\mathrm{mix}} \log T \cdot K^{-2/5}}{\mu} +\frac{G^3 L_h \cdot K^{-2/5}}{\mu^3}+\frac{(\sqrt{A}BGt_{\mathrm{mix}}\log T+ \sqrt{A}G^2 t_{\mathrm{mix}}^2t_{\mathrm{hit}}(\log T)^2) K^{-2/5}}{\mu}\right).
\end{align*}

It follows that
\begin{align*}
   H\sum_{K=1}^{T/H} (J^*-\E [J(\tilde{\theta}_{K})]) \leq& \quad T \sqrt{\epsilon_{\mathrm{bias}}} + \cO\Bigg(\frac{HG^2 A t_{\mathrm{mix}} \log T \cdot (T/H)^{3/5}}{\mu} +\frac{HG^3 L_h \cdot (T/H)^{3/5}}{\mu^3}\\
   &\quad +\frac{H(\sqrt{A}BGt_{\mathrm{mix}}\log T+ \sqrt{A}G^2 t_{\mathrm{mix}}^2t_{\mathrm{hit}}(\log T)^2) (T/H)^{3/5}}{\mu}\Bigg).
\end{align*}

Ignoring constants and logarithmic terms, the above term is $\tilde\cO(H^{2/5}T^{3/5})$. Using the regret bound in Lemma \ref{lem:exp-regret}, it follows that 
\begin{align}
     \E[\mathrm{Reg}_T] \leq \cO (H^{2/5}T^{3/5}+T^{4/5}H^{-4/5}).
\end{align}

It can be seen that $H=\tilde\cO(T^{1/6})$ balances the two terms to provide a regret of order $\tilde\cO(T^{2/3})$. Thus, the regret bound follows for the constructed MDP. To convert the above results to our actual setting, where the starting state of each trajectory is determined by the previous trajectory, we use a similar argument used in Lemma \ref{lem:grad+hess-est-prop} where we obtain results under an imaginary MDP first and then translate it into the real MDP. Here, we will consider an imaginary setup, where the state distribution becomes $(P^{\pi_{\theta_{i}}})^N\rho$ at every iteration $i$ after completion of the buffer trajectory of length $N$. We also let $f(X):= \sum_{k=1}^{K} (J^* - J(\theta_k))$, where $X\coloneqq (\theta_0,\theta_1,\tau_1,\tau_2,\cdots,\tau_{K})$. 

Let $\E'$ and $\Pr'$ denote expectation and probability under this setup. Since $f(X)$ is non-negative, notice that:
 \begin{align}
        \dfrac{\E[f(X)]}{\E'[f(X)]} = \dfrac{\sum_{X} f(X){\Pr}(X)}{\sum_{X} f(X){\Pr}'(X)}\leq \max_{X}\dfrac{{\Pr}(X)}{{\Pr}'(X)}.
\end{align}

Observe that given $\theta_0$ and $\theta_1$
\begin{align}
    \dfrac{{\Pr}(X)}{{\Pr}'(X)} &= \dfrac{{\Pr}(\tau_1|\theta_0,\theta_1)\cdot {\Pr}(\tau_2|\theta_0,\theta_1,\tau_1)\cdots {\Pr}(\tau_K|\theta_0,\theta_1,\tau_1,\cdots,\tau_{K-1})}{{\Pr}'(\tau_1|\theta_0,\theta_1)\cdot {\Pr}'(\tau_2|\theta_0,\theta_1,\tau_1)\cdots {\Pr}'(\tau_K|\theta_0,\theta_1,\tau_1,\cdots,\tau_{K-1})} \\
    &\overset{(a)}{=} \dfrac{{\Pr}(\tau_1|\theta_0,\theta_1)\cdot {\Pr}(\tau_2|\theta_1,\tau_1)\cdot {\Pr}(\tau_3|\theta_2,\tau_2)\cdots {\Pr}(\tau_K|\theta_{K-1},\tau_{K-1})}{{\Pr}'(\tau_1|\theta_0,\theta_1)\cdot {\Pr}'(\tau_2|\theta_1,\tau_1)\cdot {\Pr}'(\tau_3|\theta_2,\tau_2)\cdots {\Pr}'(\tau_K|\theta_{K-1},\tau_{K-1})},
\end{align}
where $(a)$ follows the observation that $\tau_i$ is only a function of $\theta_i$ and $\tau_{i-1}$, while $\theta_i$ is completely determined by $\theta_{i-1}$ and $\tau_{i-1}$.
We have
\begin{align}
    \frac{{\Pr}(\tau_i|\theta_{i-1},\tau_{i-1})}{{\Pr}'(\tau_i|\theta_{i-1},\tau_{i-1})} = \frac{(P^{\pi_{\theta_{i}}})^N(s,s')}{(P^{\pi_{\theta_{i}}})^N\rho(s')} &\leq \max_{s'}1+\dfrac{(P^{\pi_{\theta_{i}}})^N(s,s')-(P^{\pi_{\theta_{i}}})^N\rho(s')}{(P^{\pi_{\theta_{i}}})^N\rho(s')}\\
    &\overset{(a)}{\leq} \max_{s'}1+\dfrac{2}{T^6d^{\pi_{\theta_{i}}}(s')}\leq 1+\dfrac{2t_{\mathrm{hit}}}{T^6}\leq 1+\dfrac{2}{T^5},
\end{align}
where $(a)$ follows from Lemma \ref{lem:aux-sum_N_dist} and the fact that $T \geq 2t_{\mathrm{hit}}$. We then obtain
\begin{align}
    \dfrac{{\Pr}(X)}{{\Pr}'(X)} \leq \left(1+\frac{2}{T^5}\right)^K \leq e^{\frac{2K}{T^5}} \leq e^{\frac{2}{T^4}}\leq  \left(1+\frac{4}{T^4}\right),
\end{align}
and the result follows. The same argument also holds for Algorithm \ref{alg:PG_Hessian_Avg}.

\section{Proof of Theorems \ref{thm:alg2-last-iterate} and \ref{thm:alg2-regret}}
\label{sec:Hessian-main-theorem-proofs}

Let $\cV_k \coloneqq g(\tau_k, \theta_k)  -\nabla_{\theta} \Bar{J}(\theta_{k})$ and $\cW_k \coloneqq\nabla_{\theta} \Bar{J}(\theta_{k-1}) -\nabla_{\theta} \Bar{J}(\theta_{k}) + B(\hat{\tau}_k, \hat{\theta}_{k})(\theta_k - \theta_{k-1})$ .
These quantities are defined in such a way that they are both zero-mean with bounded variance, which was also the motivating factor for our choice of $\Bar{J}$. It is easy to see that $\E [\cV_k] = 0$ (from the definition of $\Bar{J}$) and $\E [\sqnorm{\cV_k}] \leq \sigma_g^2$ (from Lemma \ref{lem:grad+hess-est-prop}). To see that $\E [\cW_k] = 0$, observe that
\begin{align}
\begin{split}
\E[\cW_k]  &= \E[\nabla_{\theta} \Bar{J}(\theta_{k-1}) -\nabla_{\theta} \Bar{J}(\theta_{k}) + B(\hat{\tau}_k, \hat{\theta}_{k})(\theta_k - \theta_{k-1}) ] \\
&=  \E[ \E [\nabla_{\theta} \Bar{J}(\theta_{k-1}) -\nabla_{\theta} \Bar{J}(\theta_{k}) + B(\hat{\tau}_k, \hat{\theta}_{k})(\theta_k - \theta_{k-1}) \mid \theta_{k-1}, \theta_{k}, \hat{\theta}_{k} ]  ] \\
&= \E [\nabla_{\theta} \Bar{J}(\theta_{k-1}) -\nabla_{\theta} \Bar{J}(\theta_{k}) + \nabla^2 \Bar{J}(\hat{\theta}_{k}) ( \theta_{k} - \theta_{k-1} )] \\
&= \E[\nabla_{\theta} \Bar{J}(\theta_{k-1}) -\nabla_{\theta} \Bar{J}(\theta_{k}) ] + \E\left[\int_0^1 \nabla^2 \Bar{J}( q \theta_{k} + (1 - q) \theta_{k-1} ) (\theta_{k} - \theta_{k-1} ) dq \right] = 0. 
\end{split}
\end{align}

The variance bound for $\cW_k$ can be obtained as: 
\begin{align}
\begin{split}
\E [\sqnorm{ \cW_k } ] &=  \E  \sqnorm{ \nabla_{\theta} \Bar{J}(\theta_{k-1}) -\nabla_{\theta} \Bar{J}(\theta_{k}) + B(\hat{\tau}_k, \hat{\theta}_{k})(\theta_k - \theta_{k-1}) }   \\
&\leq  2 \E  \sqnorm{\nabla_{\theta} \Bar{J}(\theta_{k-1}) -\nabla_{\theta} \Bar{J}(\theta_{k}) } 
+  2  \E  \sqnorm{B(\hat{\tau}_k, \hat{\theta}_{k}) (\theta_k - \theta_{k-1}) }   \\
&\overset{(a)}{\leq}  4 L^2 \gamma_{k-1}^2,  
\end{split}
\end{align}
where $(a)$ follows using the fact that $\Bar{J}$ is $L$-smooth and the bound on the Hessian estimate variance (both implied by Lemma \ref{lem:grad+hess-est-prop}(c)). 

We can now obtain a recursion for $d_k -\nabla_{\theta} \Bar{J}(\theta_k)$ using the update rule of the sequence~$(d_k)$ in terms of $\cV_k$ and $\cW_k$ introduced earlier. We have  
\begin{eqnarray}
    d_{k} -\nabla_{\theta} \Bar{J}(\theta_{k})  &= & (1 - \eta_k) \rb{ d_{k-1} +  B(\hat{\tau}_k, \hat{\theta}_{k})(\theta_k - \theta_{k-1}) } + \eta_k g(\tau_k, \theta_k)  -\nabla_{\theta} \Bar{J}(\theta_{k}) \notag \\
    &= & (1 - \eta_k)   d_{k-1}   + \eta_k g(\tau_k, \theta_k)
    + (1 - \eta_k)  B(\hat{\tau}_k, \hat{\theta}_{k})(\theta_k - \theta_{k-1}) -\nabla_{\theta} \Bar{J}(\theta_{k})  \notag \\
    &= & (1 - \eta_k)  \rb{ d_{k-1} - \nabla_{\theta} \Bar{J}(\theta_{k-1}) }   + \eta_k g(\tau_k, \theta_k) \notag \\
    && \qquad +\, (1 - \eta_k)  B(\hat{\tau}_k, \hat{\theta}_{k})(\theta_k - \theta_{k-1}) - \nabla_{\theta} \Bar{J}(\theta_{k}) + (1 - \eta_k)\nabla_{\theta} \Bar{J}(\theta_{k-1}) \notag \\
    &= & (1 - \eta_k)  \rb{ d_{k-1} - \nabla_{\theta} \Bar{J}(\theta_{k-1}) }   + \eta_k \rb{ g(\tau_k, \theta_k) - \nabla_{\theta} \Bar{J}(\theta_{k}) } \notag \\
    && \qquad +\, (1 - \eta_k)  \rb{ \nabla_{\theta} \Bar{J}(\theta_{k-1}) -  \nabla_{\theta} \Bar{J}(\theta_{k}) +  B(\hat{\tau}_k, \hat{\theta}_{k})(\theta_k - \theta_{k-1}) }   \notag \\
    &= & (1 - \eta_k) \rb{ d_{k-1} - \nabla_{\theta} \Bar{J}(\theta_{k-1}) } + \eta_k \cV_{k} + (1 - \eta_k) \cW_k    \notag.
\end{eqnarray}

With the decomposition mentioned above, we can derive a recursive upper bound on the norm of $V_k$ as follows:
\begin{align}
\begin{split}
    V_{k} &= \E \sqnorm{d_{k} -\nabla_{\theta} \Bar{J}(\theta_{k})  }  \notag \\
    & =  \E \sqnorm{ (1 - \eta_k) \rb{ d_{k-1} - \nabla_{\theta} \Bar{J}(\theta_{k-1}) } + \eta_k \cV_{k} + (1 - \eta_k) \cW_k }   \notag \\
    & \overset{(a)}{=}   (1 - \eta_k)^2 \E \sqnorm{ d_{k-1} - \nabla_{\theta} \Bar{J}(\theta_{k-1}) }   + \E  \sqnorm{ \eta_k \cV_{k} + (1 - \eta_k) \cW_k }   \notag \\
    & \leq   (1 - \eta_k)V_{k-1} + 2 \eta_k^2 \E  \sqnorm{  \cV_{k} }   + 2 \E  \sqnorm{ \cW_k }  \notag \\
     & \leq   (1 - \eta_k)V_{k-1} + 2 \eta_k^2 \sigma_g^2 + 8 \gamma_{k-1}^2 L^2 \notag 
\end{split}
\end{align}
where $(a)$ can be inferred by noticing that the conditional expectation of the random variable $\E[\eta_k \cV_k + (1-\eta_k) \cW_k|\theta_{k-1}]=0$. 
Unrolling this recursion using Lemma \ref{le:aux_rec0} with $t_0 = 0$, $a=1$ and $\xi = 2$, we obtain
\begin{align}
    V_K = \E \sqnorm{d_{K} -\nabla_{\theta} \Bar{J}(\theta_{K})  }   \leq \frac{ V_{0} }{(K+1)^2} + \frac{  \sum_{k=0}^{K-1}\nu_k (k + 2 )^2 }{(K+1)^2}, 
\end{align}
where $\nu_k = 2 \eta_k^2 \sigma_g^2 + 8 \gamma_{k-1}^2 L^2$. Since $\gamma_k = \frac{6G}{\mu(k+2)}$ and $\eta_k = \frac{2}{k+2}$, we have $\nu_k (k+2)^2 = \cO\left(\sigma_g^2 + \frac{G^2 L^2}{\mu^2}\right)$ and with this we have
\begin{align}
     \frac{  \sum_{k=0}^{K-1}\nu_k (k + 2 )^2 }{(K+1)^2} \leq \cO\left(\frac{\sigma_g^2}{K} + \frac{G^2 L^2}{K\mu^2}\right).
\end{align}

Thus, for all $K\geq 1$
\begin{align}
    \E \sqnorm{d_{K} -\nabla_{\theta} \Bar{J}(\theta_{K})  } \leq \cO\left(\frac{ \E\sqnorm{d_{0} -\nabla_{\theta} \Bar{J}(\theta_{0})  } }{K^2} +\frac{\sigma_g^2}{K} + \frac{G^2 L^2}{K\mu^2}\right).
\end{align}

It follows that
\begin{align}
\label{eq:Hessian-d-norm}
    \E \norm{d_{K} -\nabla_{\theta} \Bar{J}(\theta_{K})  } &\leq \left(\E \sqnorm{d_{K} -\nabla_{\theta} \Bar{J}(\theta_{K})  }\right)^{1/2} \nonumber \\
    &\leq \cO\left(\frac{ \E\norm{d_{0} -\nabla_{\theta} \Bar{J}(\theta_{0})}}{K} +\frac{\sigma_g}{\sqrt{K}} + \frac{G L}{\sqrt{K}\mu}\right).
\end{align}

Note that
\begin{align}
\begin{split}
    \E \norm{d_{K} - \nabla_{\theta} J(\theta_{K})  } &= \E \norm{d_{K} -\nabla_{\theta} \Bar{J}(\theta_{K}) +\nabla_{\theta} \Bar{J}(\theta_{K}) - \nabla_{\theta} J(\theta_{K})  } \\
    &\leq \E \norm{d_{K} - \nabla_{\theta} \Bar{J}(\theta_{K})} + \norm{\nabla_{\theta} \Bar{J}(\theta_{K}) - \nabla_{\theta} J(\theta_{K})  } \\
     &= \E \norm{d_{K} - \nabla_{\theta} \Bar{J}(\theta_{K})} + \norm{ \E[g(\theta_{K},\tau)] - \nabla_{\theta} J(\theta_{K})  } \\
     &\overset{(a)}{\leq} \cO\left(\frac{\sigma_g}{\sqrt{K}} + \frac{G L}{\sqrt{K}\mu}+\frac{AGt_{\mathrm{mix}}\log T}{T^4}\right), 
\end{split}
\end{align}
where $(a)$ follows from \eqref{eq:Hessian-d-norm} and Lemma \ref{lem:grad+hess-est-prop}. 

From Lemma \ref{lem:last_iterate_general}, we have for every integer $K \geq 1$ : 
\begin{align*}
    J^*-\E [J(\theta_{K})] \leq \frac{J^*-J(\theta_{0})}{(K+1)^2} +
    \frac{\sum_{k=1}^{K}\nu_k(k+2)^2}{(K+1)^2},
\end{align*}
where $\nu_k \coloneqq \frac{ \mu \gamma_k }{3 G }\cdot\sqrt{\epsilon_{\mathrm{bias}}} + \frac{ 8 \gamma_k }{3 } \E \norm{d_k - \nabla_{\theta} J(\theta_k)} + \frac{L \gamma_k^2 }{2}$.

Observe that since $\gamma_k = \frac{6G}{\mu (k+2)}$ we have
\begin{align}
    &\frac{\sum_{k=1}^{K}\frac{ 8 \gamma_k }{3 } \E \norm{d_k - \nabla_{\theta} J(\theta_k)} (k+2)^2}{(K+1)^2} = \cO\left(\frac{G \sigma_g}{\mu \sqrt{K}} + \frac{G^2 L}{\mu^2\sqrt{K}}+\frac{AG^2t_{\mathrm{mix}}\log T}{\mu T^4}\right).
\end{align}
It follows that
\begin{align}
\label{eq:middle-bound-hess}
     J^*-\E [J(\theta_{K})] \leq \cO\left(\frac{G \sigma_g}{\mu \sqrt{K}} + \frac{G^2 L}{\mu^2\sqrt{K}}+\frac{AG^2t_{\mathrm{mix}}\log T}{\mu T^4}\right).
\end{align}

\textbf{Proof of Equation \eqref{eq:decompose_2}:} In order to get obtain the final regret decomposition, we show that
\begin{align}
\label{eq:reg-mid-part-hess}
    H\sum_{k=1}^{K}\left(J^*-J({\hat{\theta}_k})\right)= \cO\left(H\sum_{k=1}^{K}\left(J^*-J({\theta_k})\right)\right),
\end{align}
while the rest of the proof follows from Section \ref{sec:regret-decomp-proof}. From the $L$-smoothness of $\Bar{J}$ , we have
\begin{align}
\label{eq:mid-ra-Hessian}
   -\Bar{J}(\hat{\theta}_{k+1}) \leq - \Bar{J}(\theta_{k}) - \langle \nabla_{\theta} \Bar{J}(\theta_{k}), \hat{\theta}_{k+1} - \theta_k \rangle + \frac{L}{2} \sqnorm{ \hat{\theta}_{k+1} - \theta_k }. 
\end{align}
Note that $\hat{\theta}_{k+1} = q_{k+1} \theta_{k+1} + (1-q_{k+1}) \theta_{k}$, which implies $ \hat{\theta}_{k+1} - \theta_k = q_{k+1}(\theta_{k+1}-\theta_k)$. Substituting this in \eqref{eq:mid-ra-Hessian}, adding $J^*$ to both sides and taking expectation conditioned on $\theta_k$ and $\theta_{k+1}$ yields
\begin{align*}
   \E[J^*-\Bar{J}(\hat{\theta}_{k+1})\mid\theta_k,\theta_{k+1}] \leq J^*-  \Bar{J}(\theta_{k}) - \frac{1}{2}\langle \nabla_{\theta} \Bar{J}(\theta_{k}), \theta_{k+1}-\theta_k \rangle + \frac{L}{6} \sqnorm{ \theta_{k+1}-\theta_k}. 
\end{align*}

Utilizing arguments similar to \eqref{eq:j-bound-mid}, we obtain
\begin{align}
   J^*-\E J(\hat{\theta}_{k+1})
    &\leq  J^* - \E J(\theta_{k}) - \frac{ \gamma_k }{3 } \E \norm{ \nabla_{\theta} J(\theta_{k})} + \frac{ 8 \gamma_k }{3 } \E \norm{\hat{e}_k} + \frac{L \gamma_k^2 }{2} \\
    &+ \mathcal{O}\left(\dfrac{A Gt_{\mathrm{mix}}\log T}{T^3}\right)\,.
\end{align}
Taking into account the above bound and the inequality $\|\theta_{k+1}-\hat{\theta}_k\| \leq \|\theta_{k+1}-{\theta}_k\|+\| \theta_{k}-\hat{\theta}_k\| \leq 2\gamma_k$, and by replacing the bounds for $\sigma_g$ and $L$ in \eqref{eq:middle-bound-hess}, we derive the subsequent bound for the expected regret of Algorithm \ref{alg:PG_Hessian_Avg}: 
 \begin{align}
   \E[\mathrm{Reg}_T] \leq T \sqrt{\epsilon_{\mathrm{bias}}} &+ \cO\bigg(\frac{\sqrt{A} G^2t_{\mathrm{mix}}\log T }{\mu}\cdot \sqrt{T} + \frac{\sqrt{A}G^4 t_{\mathrm{hit}}t^2_{\mathrm{mix}} (\log T)^{3/2}}{\mu^2}\cdot \sqrt{T} \nonumber\\
   &+\frac{\sqrt{A}(BG+G^3) t_{\mathrm{mix}}\log T}{\mu^2}\cdot\sqrt{T}\bigg).
    \end{align}

\section{Auxillary lemmas}

\begin{lemma}
    \label{lemma_aux_bound_advantage}
    \cite[Lemma 14]{wei2020model} For ergodic MDPs with mixing time $t_{\mathrm{mix}}$, the following holds $\forall (s, a)\in\mathcal{S}\times \mathcal{A}$, any policy $\pi$ and $\forall g\in\{r, c\}$.
    \begin{align*}
        (a) |V_g^{\pi}(s)|\leq 5 t_{\mathrm{mix}},~~
        (b) |Q_g^{\pi}(s, a)|\leq 6 t_{\mathrm{mix}}
    \end{align*}
\end{lemma}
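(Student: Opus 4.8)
The plan is to derive (a) from the infinite–series representation of the differential value function together with the uniform geometric ergodicity of $P^\pi$ implied by the definition of $t_{\mathrm{mix}}$, and then to obtain (b) from (a) via one application of the Bellman equation.

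First I would set $g^\pi(s)\coloneqq\sum_a g(s,a)\pi(a\vert s)\in[0,1]$ and $J_g^\pi\coloneqq\langle d^\pi,g^\pi\rangle$, and unroll the definition of $V_g^\pi$ to obtain $V_g^\pi(s)=\sum_{t=0}^{\infty}\langle (P^\pi)^t(s,\cdot)-d^\pi,\,g^\pi\rangle$ (the interchange of sum and expectation, and absolute convergence, being justified a posteriori by the decay established below). Since $(P^\pi)^t(s,\cdot)$ and $d^\pi$ are probability vectors, $\langle (P^\pi)^t(s,\cdot)-d^\pi,\mathbf{1}\rangle=0$; hence, replacing $g^\pi$ by $g^\pi-\tfrac12\mathbf{1}\in[-\tfrac12,\tfrac12]^{\mathcal S}$ and using the variational characterization of total variation, each summand obeys $\big|\langle (P^\pi)^t(s,\cdot)-d^\pi,g^\pi\rangle\big|\le\|(P^\pi)^t(s,\cdot)-d^\pi\|_{\mathrm{TV}}$.

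The next step is to turn the one–shot mixing hypothesis into a uniform geometric bound. Let $\bar\delta(t)\coloneqq\max_{s,s'}\|(P^\pi)^t(s,\cdot)-(P^\pi)^t(s',\cdot)\|_{\mathrm{TV}}$; this Dobrushin–type coefficient is submultiplicative, $\bar\delta(t_1+t_2)\le\bar\delta(t_1)\bar\delta(t_2)$, it dominates $\max_s\|(P^\pi)^t(s,\cdot)-d^\pi\|_{\mathrm{TV}}$ (since $d^\pi$ is $P^\pi$–invariant), and the definition of $t_{\mathrm{mix}}$ forces $\bar\delta(t_{\mathrm{mix}})\le\tfrac12$ by the triangle inequality. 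Therefore $\max_s\|(P^\pi)^t(s,\cdot)-d^\pi\|_{\mathrm{TV}}\le 2^{-\lfloor t/t_{\mathrm{mix}}\rfloor}$ for all $t\ge0$. Grouping the series into consecutive blocks of $t_{\mathrm{mix}}$ indices and applying the per–term bound block by block gives $|V_g^\pi(s)|\le\sum_{k\ge0}t_{\mathrm{mix}}\,2^{-k}=2t_{\mathrm{mix}}\le5t_{\mathrm{mix}}$; the slack in the constant simply absorbs any discrepancy between the $\mathrm{TV}$ and $\ell_1$ conventions for $\|\cdot\|$ in the definition of $t_{\mathrm{mix}}$. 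For (b), the Bellman identity $Q_g^\pi(s,a)=g(s,a)-J_g^\pi+\sum_{s'}P(s'\vert s,a)V_g^\pi(s')$, together with $g(s,a),J_g^\pi\in[0,1]$ and the bound just proved, yields $|Q_g^\pi(s,a)|\le 2+\max_{s'}|V_g^\pi(s')|\le 6t_{\mathrm{mix}}$, using $t_{\mathrm{mix}}\ge1$.

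I expect the only genuinely delicate point to be the submultiplicativity/uniformization step: the per–state estimate $\|(P^\pi)^{t_{\mathrm{mix}}}(s,\cdot)-d^\pi\|\le\tfrac14$ does not iterate on its own, so the argument must be routed through the contraction coefficient $\bar\delta$ (equivalently, a coupling argument) to obtain decay that is simultaneously geometric and uniform in the initial state. Ergodicity (Assumption \ref{assump:ergodic_mdp}) is what guarantees $t_{\mathrm{mix}}<\infty$, hence finiteness of all the sums; everything else — convergence of the series and the blockwise geometric summation — is routine bookkeeping.
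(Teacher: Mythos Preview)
The paper does not prove this lemma at all; it simply quotes it verbatim from \cite[Lemma~14]{wei2020model} and uses it as a black box, so there is no ``paper's own proof'' to compare against. Your sketch is correct and is, in essence, the standard argument one finds in that source: write $V_g^\pi(s)$ as $\sum_{t\ge0}\langle (P^\pi)^t(s,\cdot)-d^\pi,g^\pi\rangle$, control each summand by a total-variation distance, pass from the one-shot bound at $t_{\mathrm{mix}}$ to uniform geometric decay via the Dobrushin contraction coefficient (this is the only subtle step, and you handle it properly), sum the resulting geometric series, and then read off the $Q$-bound from one Bellman step. One cosmetic remark: since $g(s,a),J_g^\pi\in[0,1]$ you actually have $|g(s,a)-J_g^\pi|\le1$, not $2$; either way, combined with your derived bound $|V_g^\pi|\le 2t_{\mathrm{mix}}$ and $t_{\mathrm{mix}}\ge1$, the stated constant $6t_{\mathrm{mix}}$ holds with room to spare.
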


\begin{lemma}
    \label{lemma_aux_mixing_dist}
    \citep[Corollary 13.1]{wei2020model} For an ergodic MDP with mixing time $t_{\mathrm{mix}}$, we have 
    \begin{align}
        \label{def_error_aux_1}
        \|(P^{\pi})^t(s,\cdot) - d^{\pi}\|_1 \leq 2\cdot 2^{-t/t_{\mathrm{mix}}},
    \end{align}
    
   for all $\pi$, $s \in \cS$ and $t \geq 2t_{\mathrm{mix}}$.
\end{lemma}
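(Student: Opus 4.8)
The plan is to prove this as a standard geometric-ergodicity estimate, extracting the factor $2\cdot 2^{-t/t_{\mathrm{mix}}}$ from submultiplicativity of a coupling distance together with the defining property of $t_{\mathrm{mix}}$. Throughout, fix a policy $\pi$, write $P=P^{\pi}$, $d=d^{\pi}$, and let $d_{\mathrm{TV}}(\mu,\nu)=\tfrac12\|\mu-\nu\|_1$, so the claimed bound is equivalent to $d_{\mathrm{TV}}(P^t(s,\cdot),d)\le 2^{-t/t_{\mathrm{mix}}}$. Introduce the two standard quantities $d(t)=\max_s d_{\mathrm{TV}}(P^t(s,\cdot),d)$ and $\bar d(t)=\max_{s,s'} d_{\mathrm{TV}}(P^t(s,\cdot),P^t(s',\cdot))$, and note both are non-increasing in $t$ since $P$ is a contraction in total variation.

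First I would record two elementary relations. \emph{(i) Comparison:} $d(t)\le \bar d(t)\le 2d(t)$; the upper bound is the triangle inequality through $d$, while the lower bound uses stationarity $d=\sum_{s'}d(s')P^t(s',\cdot)$ to write $P^t(s,\cdot)-d=\sum_{s'}d(s')(P^t(s,\cdot)-P^t(s',\cdot))$ and then convexity of $d_{\mathrm{TV}}$. \emph{(ii) Contraction:} for any distributions $\mu,\nu$, $d_{\mathrm{TV}}(\mu P^t,\nu P^t)\le \bar d(t)\,d_{\mathrm{TV}}(\mu,\nu)$. I would prove (ii) by Jordan decomposition: writing $\mu-\nu=(\mu-\nu)_+-(\mu-\nu)_-$ with both nonnegative parts sharing the common mass $m=d_{\mathrm{TV}}(\mu,\nu)$, normalizing them to probability vectors $\alpha,\beta$, and expanding $\alpha P^t-\beta P^t=\sum_{s,s'}\alpha(s)\beta(s')(P^t(s,\cdot)-P^t(s',\cdot))$ as a convex combination, so that $d_{\mathrm{TV}}(\mu P^t,\nu P^t)=m\,d_{\mathrm{TV}}(\alpha P^t,\beta P^t)\le m\,\bar d(t)$.

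Relation (ii) immediately yields submultiplicativity: taking $\mu=P^s(s_0,\cdot)$ and $\nu=P^s(s_0',\cdot)$ gives $d_{\mathrm{TV}}(P^{s+t}(s_0,\cdot),P^{s+t}(s_0',\cdot))\le \bar d(t)\,d_{\mathrm{TV}}(P^s(s_0,\cdot),P^s(s_0',\cdot))$, and maximizing over $s_0,s_0'$ gives $\bar d(s+t)\le \bar d(s)\bar d(t)$. The definition of $t_{\mathrm{mix}}$ gives $d(t_{\mathrm{mix}})\le 1/4$, hence $\bar d(t_{\mathrm{mix}})\le c$ for a constant $c\le 1/2$ by (i). Writing $k=\lfloor t/t_{\mathrm{mix}}\rfloor$ and combining monotonicity of $\bar d$ with iterated submultiplicativity gives $\bar d(t)\le \bar d(kt_{\mathrm{mix}})\le c^{k}$. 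Finally I would convert back via $d(t)\le\bar d(t)$ and $\|\cdot\|_1=2d_{\mathrm{TV}}$, using $t\ge 2t_{\mathrm{mix}}$ (so $k\ge 2$ and $2^{-t/t_{\mathrm{mix}}}\le 1/4$) to absorb the gap between $2^{-\lfloor t/t_{\mathrm{mix}}\rfloor}$ and $2^{-t/t_{\mathrm{mix}}}$, together with the $d_{\mathrm{TV}}$-to-$\ell_1$ conversion, into the claimed factor $2\cdot 2^{-t/t_{\mathrm{mix}}}$.

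The main obstacle is the contraction inequality (ii); once it is in hand, both the comparison (i) and submultiplicativity are short. The one genuinely delicate point is constant-tracking: the hypothesis $t\ge 2t_{\mathrm{mix}}$ is exactly what is needed so that replacing $\lfloor t/t_{\mathrm{mix}}\rfloor$ by $t/t_{\mathrm{mix}}$ and passing from $d_{\mathrm{TV}}$ to $\|\cdot\|_1$ stay within the stated factor $2$, and I would verify each such constant against the precise norm convention used in the paper's definition of $t_{\mathrm{mix}}$, since the decay rate is pinned down entirely by the value of $\bar d(t_{\mathrm{mix}})$.
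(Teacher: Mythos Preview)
The paper does not prove this lemma at all; it merely cites it from \citet[Corollary 13.1]{wei2020model}. So there is no ``paper's proof'' to compare against, and your proposal supplies the standard argument (essentially the $\bar d$--submultiplicativity route from Levin--Peres--Wilmer), which is the right one.

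That said, your final constant-tracking does not quite close. From $d(t_{\mathrm{mix}})\le 1/4$ you get $\bar d(t_{\mathrm{mix}})\le 1/2$, and your chain $d(t)\le\bar d(t)\le \bar d(t_{\mathrm{mix}})^{k}\le 2^{-k}$ with $k=\lfloor t/t_{\mathrm{mix}}\rfloor$ yields $\|(P^{\pi})^t(s,\cdot)-d^{\pi}\|_1\le 2\cdot 2^{-k}$. Passing from $2^{-k}$ to $2^{-t/t_{\mathrm{mix}}}$ costs another factor of $2$ (since $k\ge t/t_{\mathrm{mix}}-1$), so you end up with $4\cdot 2^{-t/t_{\mathrm{mix}}}$, not $2\cdot 2^{-t/t_{\mathrm{mix}}}$. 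The hypothesis $t\ge 2t_{\mathrm{mix}}$ does not recover the missing factor: it only forces $k\ge 2$, which is irrelevant here.

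The fix is to avoid the crude step $d(t)\le\bar d(t)$ and instead iterate the contraction directly on $d$: from your relation (ii) with $\nu=d$ one gets $d(s+t_{\mathrm{mix}})\le \bar d(t_{\mathrm{mix}})\,d(s)\le \tfrac12\,d(s)$, hence $d(kt_{\mathrm{mix}})\le (1/2)^{k-1}d(t_{\mathrm{mix}})\le (1/2)^{k+1}$. This gives $\|(P^{\pi})^t(s,\cdot)-d^{\pi}\|_1\le 2\cdot(1/2)^{k+1}=2^{-k}\le 2\cdot 2^{-t/t_{\mathrm{mix}}}$, exactly the claimed bound, with no appeal to $t\ge 2t_{\mathrm{mix}}$ needed for the constants (that hypothesis only rules out the regime $k=0,1$ where the trivial bound $\|\cdot\|_1\le 2$ may be looser than the stated inequality).
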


\begin{lemma}
\label{lem:aux-sum_N_dist}
Let $N = 7t_{\mathrm{mix}}\log_2 T$. For an ergodic MDP with mixing time $t_{\mathrm{mix}} < T/4$, define 
    \begin{align}
        \label{def_error_1}
        \delta^{\pi}(s, T) \coloneqq \sum_{t=N}^{\infty}\norm{(P^{\pi})^t({s,\cdot}) - d^{\pi}}_1.
    \end{align}
We have the following bound $\forall \pi \in \Pi, s \in \cS$.
\begin{align}
    \delta^{\pi}(s, T) \leq  \frac{1}{T^6}
\end{align}
\end{lemma}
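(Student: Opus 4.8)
## Proof Proposal for Lemma (the $\delta^{\pi}(s,T) \le 1/T^6$ bound)

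The plan is to bound the tail sum $\delta^{\pi}(s,T) = \sum_{t=N}^{\infty} \norm{(P^{\pi})^t(s,\cdot) - d^{\pi}}_1$ by a geometric series, using the exponential mixing estimate from Lemma~\ref{lemma_aux_mixing_dist}. Since $N = 7t_{\mathrm{mix}}\log_2 T$ and $t_{\mathrm{mix}} < T/4$, every index $t \ge N$ certainly satisfies $t \ge 2t_{\mathrm{mix}}$ (as $N \ge 7 \cdot 2 = 14 > 2$ once $T \ge 4$, and more carefully $N \ge 7 t_{\mathrm{mix}}$ for $T\ge 2$), so Lemma~\ref{lemma_aux_mixing_dist} applies termwise and gives $\norm{(P^{\pi})^t(s,\cdot) - d^{\pi}}_1 \le 2 \cdot 2^{-t/t_{\mathrm{mix}}}$.

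The key steps are: first, write
\begin{align*}
\delta^{\pi}(s,T) \le \sum_{t=N}^{\infty} 2\cdot 2^{-t/t_{\mathrm{mix}}} = 2 \cdot 2^{-N/t_{\mathrm{mix}}} \sum_{j=0}^{\infty} 2^{-j/t_{\mathrm{mix}}} = 2 \cdot 2^{-N/t_{\mathrm{mix}}} \cdot \frac{1}{1 - 2^{-1/t_{\mathrm{mix}}}}.
\end{align*}
Second, control the geometric-sum prefactor: using $1 - 2^{-1/t_{\mathrm{mix}}} \ge \frac{\ln 2}{2 t_{\mathrm{mix}}}$ (or any clean elementary bound of the form $1-2^{-x}\ge cx$ for $x\in(0,1]$), we get $\frac{1}{1-2^{-1/t_{\mathrm{mix}}}} \le \frac{2t_{\mathrm{mix}}}{\ln 2} = \cO(t_{\mathrm{mix}})$. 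Third, evaluate the exponential factor: $2^{-N/t_{\mathrm{mix}}} = 2^{-7\log_2 T} = T^{-7}$. Combining, $\delta^{\pi}(s,T) \le \cO(t_{\mathrm{mix}}) \cdot T^{-7} = \cO(t_{\mathrm{mix}} T^{-7})$. Finally, absorb the $t_{\mathrm{mix}}$ factor using the hypothesis $t_{\mathrm{mix}} < T/4 < T$, which yields $\delta^{\pi}(s,T) \le \cO(T^{-6})$, and one checks the absolute constants are small enough (for $T$ large enough, which is the regime of interest, the constant $\frac{4}{\ln 2}$ from the two bounds is comfortably below $1$, or one pads $N$ slightly) to conclude $\delta^{\pi}(s,T) \le T^{-6}$.

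The only mild subtlety — not really an obstacle — is making the constants genuinely work out to give the clean bound $1/T^6$ rather than $\cO(1/T^6)$. One either notes that the problem is asymptotic (and the paper treats such constants loosely elsewhere), or one observes there is slack: $N = 7 t_{\mathrm{mix}}\log_2 T$ gives $T^{-7}$ against a loss of only $\cO(t_{\mathrm{mix}}) \le \cO(T)$, so even a crude bound $\frac{1}{1-2^{-1/t_{\mathrm{mix}}}} \le 2 t_{\mathrm{mix}}$ combined with $2 \cdot 2 t_{\mathrm{mix}} \le T$ for $t_{\mathrm{mix}} < T/4$ leaves a full factor of $T$ to spare, more than enough to dominate any universal constant. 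The uniformity over $\pi$ and $s$ is automatic since Lemma~\ref{lemma_aux_mixing_dist} is itself uniform in $\pi$ and $s$.
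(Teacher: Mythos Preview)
Your proposal is correct and follows essentially the same approach as the paper: bound each term via Lemma~\ref{lemma_aux_mixing_dist}, sum the resulting geometric series, control $(1-2^{-1/t_{\mathrm{mix}}})^{-1}$ by the elementary inequality $1-2^{-x}\ge \tfrac{\ln 2}{2}x$, and use $2^{-N/t_{\mathrm{mix}}}=T^{-7}$ together with $t_{\mathrm{mix}}<T/4$. Your discussion of the absolute constants is in fact more careful than the paper's, which simply asserts the final inequality $\frac{4t_{\mathrm{mix}}}{\ln 2}\cdot 2^{-N/t_{\mathrm{mix}}}\le T^{-6}$.
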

\begin{proof}
         From Lemma \ref{lemma_aux_mixing_dist}, we have 
    \begin{align}
        \delta^{\pi}(s, T) \leq \sum_{t=N}^{\infty} 2\cdot 2^{-t/t_{\mathrm{mix}}} \leq \frac{2\cdot 2^{-N/t_{\mathrm{mix}}}}{1- 2^{-1/t_{\mathrm{mix}}}} \leq \frac{4t_{\mathrm{mix}}}{\ln 2}\cdot2^{-N/t_{\mathrm{mix}}} \leq \frac{1}{T^6}.
    \end{align}
    The second inequality follows from the fact that $1-2^{-x}\geq \frac{1}{2}x\ln 2 $, $\forall x\in(0, 1)$.
\end{proof}

\begin{lemma}
    \label{lemma_aux_4}
    \citep[Lemma 16]{wei2020model} Let $\mathcal{I}=\{t_1+1,t_1+2,\cdots,t_2\}$ be a certain period of an epoch $k$ of Algorithm \ref{alg:estA} with length $N$. Then for any $s$, the probability that the algorithm never visits $s$ in $\mathcal{I}$ is upper bounded by
    \begin{equation}
        \left(1-\frac{3d^{\pi_{\theta_k}}(s)}{4}\right)^{\left\lfloor\frac{\lfloor \mathcal{I}\rfloor}{N}\right\rfloor}
    \end{equation}
\end{lemma}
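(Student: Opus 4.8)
This is Lemma~16 of \citet{wei2020model}; here is how I would reprove it from the tools already developed above. The plan is a mixing-plus-chaining argument. First I would partition the index window $\mathcal{I}$ into $M \coloneqq \lfloor |\mathcal{I}|/N\rfloor$ consecutive disjoint blocks $\mathcal{B}_1,\dots,\mathcal{B}_M$, each of length $N = 7 t_{\mathrm{mix}}\log_2 T$ (the at most $N-1$ leftover indices of $\mathcal{I}$ are simply dropped). Writing $u_j$ for the last index in $\mathcal{B}_j$ and $E_j$ for the event $\{s_{u_j}\neq s\}$, I observe that if the chain avoids $s$ throughout $\mathcal{I}$ then in particular it avoids $s$ at every $u_j$, so $\{s_t\neq s\text{ for all }t\in\mathcal{I}\}\subseteq E_1\cap\cdots\cap E_M$; it therefore suffices to upper bound $\Pr(E_1\cap\cdots\cap E_M)$.

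The core step is a uniform per-block estimate. I would fix $j$, condition on the history $\mathcal{H}_j$ up to the first index of $\mathcal{B}_j$, and call $s'$ the state there. Since the policy is held fixed at $\pi_{\theta_k}$ throughout epoch $k$, the Markov property gives that, given $\mathcal{H}_j$, the state $s_{u_j}$ has law $(P^{\pi_{\theta_k}})^{N-1}(s',\cdot)$. Because $N-1\ge 2 t_{\mathrm{mix}}$, Lemma~\ref{lemma_aux_mixing_dist} bounds $\norm{(P^{\pi_{\theta_k}})^{N-1}(s',\cdot)-d^{\pi_{\theta_k}}}_1\le 2\cdot 2^{-(N-1)/t_{\mathrm{mix}}} = \mathcal{O}(T^{-7})$. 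Using the lower bound $d^{\pi_{\theta_k}}(s)\ge 1/t_{\mathrm{hit}}$ and taking $T$ large enough that this $\mathcal{O}(T^{-7})$ slack is at most $\frac14 d^{\pi_{\theta_k}}(s)$, I obtain $\Pr(s_{u_j}=s\mid\mathcal{H}_j)\ge\frac34 d^{\pi_{\theta_k}}(s)$, hence $\Pr(E_j\mid\mathcal{H}_j)\le 1-\frac34 d^{\pi_{\theta_k}}(s)$, and this bound is uniform in the conditioning value.

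Finally, since $E_1\cap\cdots\cap E_{j-1}$ is $\mathcal{H}_j$-measurable, the per-block bound gives $\Pr(E_j\mid E_1\cap\cdots\cap E_{j-1})\le 1-\frac34 d^{\pi_{\theta_k}}(s)$ for every $j$, and multiplying telescopically yields $\Pr(E_1\cap\cdots\cap E_M)\le\bigl(1-\frac34 d^{\pi_{\theta_k}}(s)\bigr)^M$; combined with the inclusion above this is exactly the claim. The part needing the most care is the bookkeeping around the exponent and the constant $\frac34$: one must pin down the precise block indexing (so that $u_j$ is separated from the block's first index by the intended number of transitions) and the precise threshold on $T$ at which $\frac14 d^{\pi_{\theta_k}}(s)$ dominates the mixing slack. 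The constant $\frac34$ is not sharp with Lemma~\ref{lemma_aux_mixing_dist} — for large $T$ any constant strictly below $1$ would do — but retaining it is convenient, since this is the exact form invoked downstream in \eqref{eq:violation_prob} and \eqref{eq:m0-bound}; everything else is a routine product estimate.
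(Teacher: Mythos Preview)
The paper does not give its own proof of this lemma; it is quoted directly as \citep[Lemma~16]{wei2020model} and used as a black box in \eqref{eq:violation_prob} and \eqref{eq:m0-bound}. Your reconstruction is correct and is essentially the standard argument one would expect: block the interval into $\lfloor |\mathcal{I}|/N\rfloor$ pieces, use the mixing bound of Lemma~\ref{lemma_aux_mixing_dist} to lower-bound the probability of being at $s$ at the terminal index of each block uniformly over the conditioning history, and telescope. The inclusion $\{s_t\neq s\ \forall t\in\mathcal{I}\}\subseteq\bigcap_j E_j$ and the measurability of $E_1\cap\cdots\cap E_{j-1}$ with respect to $\mathcal{H}_j$ are handled cleanly.

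The one point worth flagging is exactly the one you already flag: with the paper's choice $N=7t_{\mathrm{mix}}\log_2 T$ and its definition of $t_{\mathrm{mix}}$ (total-variation radius $1/4$ rather than a multiplicative guarantee), the constant $\tfrac34$ is obtained only after absorbing an $\mathcal{O}(T^{-7})$ slack into $\tfrac14 d^{\pi_{\theta_k}}(s)\ge \tfrac{1}{4t_{\mathrm{hit}}}$, so a mild largeness condition on $T$ is implicit. This is harmless for every downstream use in the paper, and you are right that any constant strictly below $1$ would serve just as well for \eqref{eq:violation_prob}--\eqref{eq:m0-bound}.
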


\begin{lemma}[Lemma 4, \citep{bai2023regret}]
    \label{lem_performance_diff}
    The difference in the performance for policies $\pi$ and $\pi'$ is 
    \begin{equation}
        J^{\pi}-J^{\pi'}= \E_{s\sim d^{\pi}}\E_{a\sim\pi(\cdot\vert s)}\big[A^{\pi'}(s,a)\big]
    \end{equation}
\end{lemma}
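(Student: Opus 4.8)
The plan is to prove the performance difference lemma (Lemma \ref{lem_performance_diff}) directly from the definitions of the average reward $J$, the value function $V^{\pi'}$, the $Q$-function $Q^{\pi'}$, and the advantage $A^{\pi'} = Q^{\pi'} - V^{\pi'}$, using the stationarity of $d^{\pi}$ under the transition kernel $P^{\pi}$. First I would recall the Bellman-type (Poisson) equation satisfied by the value function in the average-reward setting: for every state $s$,
\begin{align*}
    V^{\pi'}(s) = \sum_{a} \pi'(a|s)\big[r(s,a) - J^{\pi'} + \textstyle\sum_{s'} P(s'|s,a) V^{\pi'}(s')\big],
\end{align*}
which rearranges to $J^{\pi'} = \sum_a \pi'(a|s)[r(s,a) - V^{\pi'}(s) + \sum_{s'}P(s'|s,a)V^{\pi'}(s')]$, equivalently $Q^{\pi'}(s,a) = r(s,a) - J^{\pi'} + \sum_{s'}P(s'|s,a)V^{\pi'}(s')$ so that $V^{\pi'}(s) = \sum_a \pi'(a|s) Q^{\pi'}(s,a)$.

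The core computation then proceeds as follows. I would write $J^{\pi} = \sum_s d^{\pi}(s)\sum_a \pi(a|s) r(s,a)$ using the definition of the average reward under the stationary distribution $d^{\pi}$. The key idea is to insert a telescoping term built from $V^{\pi'}$. Using the Poisson equation for $\pi'$, I substitute $r(s,a) = Q^{\pi'}(s,a) + J^{\pi'} - \sum_{s'}P(s'|s,a)V^{\pi'}(s')$ into the expression for $J^{\pi}$, giving
\begin{align*}
    J^{\pi} = J^{\pi'} + \sum_{s}\sum_a d^{\pi}(s)\pi(a|s)\Big[Q^{\pi'}(s,a) - \textstyle\sum_{s'}P(s'|s,a)V^{\pi'}(s')\Big].
\end{align*}
Next I exploit the stationarity relation $(P^{\pi})^\top d^{\pi} = d^{\pi}$: summing $\sum_{s,a} d^{\pi}(s)\pi(a|s)\sum_{s'}P(s'|s,a)V^{\pi'}(s')$ collapses to $\sum_{s'}d^{\pi}(s')V^{\pi'}(s')$, which exactly cancels against the $V^{\pi'}$ that I can introduce alongside $Q^{\pi'}$. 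Concretely, since $\sum_{s}d^{\pi}(s)V^{\pi'}(s) = \sum_{s'}d^{\pi}(s')V^{\pi'}(s')$, I can rewrite the bracketed term to produce $Q^{\pi'}(s,a) - V^{\pi'}(s) = A^{\pi'}(s,a)$ after the cancellation, yielding $J^{\pi} - J^{\pi'} = \sum_{s,a}d^{\pi}(s)\pi(a|s)A^{\pi'}(s,a) = \E_{s\sim d^{\pi}}\E_{a\sim\pi(\cdot|s)}[A^{\pi'}(s,a)]$, which is the claim.

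The main obstacle, though it is more bookkeeping than conceptual, is handling the transition term cleanly: one must be careful that the substitution of the Poisson equation and the stationarity cancellation are applied consistently so that the $J^{\pi'}$ terms ($\sum_{s,a}d^{\pi}(s)\pi(a|s)J^{\pi'} = J^{\pi'}$ since $d^{\pi}$ and $\pi$ are probability distributions) and the $V^{\pi'}$ telescoping terms align exactly. The ergodicity Assumption \ref{assump:ergodic_mdp} guarantees $d^{\pi}$ exists, is unique, and satisfies $(P^{\pi})^\top d^{\pi} = d^{\pi}$, which is the only structural fact the argument relies on. I would note that this identity is a standard average-reward analogue of the discounted performance difference lemma, and the proof is essentially an exercise in telescoping using the invariance of $d^{\pi}$; no smoothness, boundedness, or parametrization assumptions are needed.
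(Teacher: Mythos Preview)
Your proof is correct and follows the standard telescoping argument via the Poisson/Bellman equation and stationarity of $d^{\pi}$. Note that the paper does not actually prove this lemma: it is stated in the auxiliary lemmas section with attribution to \citep[Lemma 4]{bai2023regret} and no proof is given, so there is nothing to compare against beyond confirming that your argument is the canonical one.
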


\begin{lemma}[Gradient domination lemma]
 \label{lem:gradient_domination}
 Let Assumptions \ref{assump:score_func_bounds}, \ref{assump:function_approx_error}, and \ref{assump:FND_policy} hold. The following inequality holds for any $\theta \in \Theta$.
 \begin{align}
        J^*-J(\theta) &\leq \sqrt{\epsilon_{\mathrm{bias}}}+\frac{G}{\mu}\|\nabla_{\theta} J(\theta)\|
\end{align}
\end{lemma}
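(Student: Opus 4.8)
The plan is to combine the performance-difference lemma with the compatible-function-approximation characterisation of $\omega^*_\theta$. First I would apply Lemma~\ref{lem_performance_diff} with $(\pi,\pi')=(\pi^*,\pi_\theta)$, obtaining
\begin{align*}
    J^*-J(\theta)=\E_{s\sim d^{\pi^*}}\E_{a\sim\pi^*(\cdot|s)}\left[A^{\pi_\theta}(s,a)\right].
\end{align*}
Then, inside this expectation, I would add and subtract the linear predictor $\nabla_{\theta}\log\pi_{\theta}(a|s)\cdot\omega^*_\theta$, where $\omega^*_\theta$ is the regression vector of \eqref{eq:NPG_direction}. This decomposes the right-hand side into an ``approximation-error'' part $\E_{s\sim d^{\pi^*},a\sim\pi^*}[A^{\pi_\theta}(s,a)-\nabla_{\theta}\log\pi_{\theta}(a|s)\cdot\omega^*_\theta]$ and a ``gradient'' part $\E_{s\sim d^{\pi^*},a\sim\pi^*}[\nabla_{\theta}\log\pi_{\theta}(a|s)\cdot\omega^*_\theta]$.

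For the approximation-error part, Jensen's inequality (equivalently Cauchy--Schwarz against the constant function $1$) bounds it by $\big(\E_{s\sim d^{\pi^*},a\sim\pi^*}[(A^{\pi_\theta}(s,a)-\nabla_{\theta}\log\pi_{\theta}(a|s)\cdot\omega^*_\theta)^2]\big)^{1/2}$, which is precisely $\sqrt{L_{d^{\pi^*},\pi^*}(\omega^*_\theta,\theta)}$ and hence at most $\sqrt{\epsilon_{\mathrm{bias}}}$ by Assumption~\ref{assump:function_approx_error}. For the gradient part, since $\omega^*_\theta$ is independent of $(s,a)$ I would move it outside the expectation to obtain $\big\langle \E_{s\sim d^{\pi^*},a\sim\pi^*}[\nabla_{\theta}\log\pi_{\theta}(a|s)],\,\omega^*_\theta\big\rangle$, then apply Cauchy--Schwarz; Assumption~\ref{assump:score_func_bounds}(a) together with Jensen gives $\|\E_{s\sim d^{\pi^*},a\sim\pi^*}[\nabla_{\theta}\log\pi_{\theta}(a|s)]\|\le G$, so this part is at most $G\,\|\omega^*_\theta\|$.

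It remains to bound $\|\omega^*_\theta\|$. Here I would use the alternative form $\omega^*_\theta=F(\theta)^{\dagger}\E_{s\sim d^{\pi_\theta}}\E_{a\sim\pi_\theta(\cdot|s)}[\nabla_{\theta}\log\pi_{\theta}(a|s)A^{\pi_\theta}(s,a)]$ stated in the excerpt, and identify the expectation on the right with $\nabla_{\theta}J(\theta)$ via the policy-gradient formula \eqref{eq:grad-expression}, so that $\omega^*_\theta=F(\theta)^{\dagger}\nabla_{\theta}J(\theta)$. Assumption~\ref{assump:FND_policy} gives $F(\theta)\succeq\mu I_d$, so $F(\theta)$ is invertible with $\|F(\theta)^{-1}\|_{\mathrm{op}}\le 1/\mu$; hence $\|\omega^*_\theta\|\le\frac{1}{\mu}\|\nabla_{\theta}J(\theta)\|$. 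Adding the two parts yields $J^*-J(\theta)\le\sqrt{\epsilon_{\mathrm{bias}}}+\frac{G}{\mu}\|\nabla_{\theta}J(\theta)\|$, as claimed.

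The argument is essentially mechanical once the add-and-subtract decomposition is in place; the only place I would slow down is the passage from $\omega^*_\theta$ (a least-squares minimiser) to $F(\theta)^{\dagger}\nabla_{\theta}J(\theta)$, and the observation that Fisher non-degeneracy promotes the pseudo-inverse to a genuine inverse of operator norm at most $1/\mu$ — this is exactly what legitimises the step $\|\omega^*_\theta\|\le\|\nabla_{\theta}J(\theta)\|/\mu$. Everything else is Jensen and Cauchy--Schwarz applied to the stated assumptions.
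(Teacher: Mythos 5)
Your proposal is correct and follows essentially the same route as the paper's proof: performance-difference lemma, add-and-subtract the compatible linear predictor, Jensen to extract $\sqrt{\epsilon_{\mathrm{bias}}}$, Cauchy--Schwarz with Assumption \ref{assump:score_func_bounds} to get $G\|\omega^*_\theta\|$, and the representation $\omega^*_\theta = F(\theta)^{\dagger}\nabla_\theta J(\theta)$ with Assumption \ref{assump:FND_policy} to conclude. The only cosmetic difference is that you pull $\omega^*_\theta$ outside the expectation before applying Cauchy--Schwarz while the paper applies it pointwise; both yield the same bound.
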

\begin{proof}
From Lemma \ref{lem_performance_diff}, we arrive at
\begin{align}
\label{eq:gd_avg_1}
    J^*-J(\theta)= \E_{s\sim d^{\pi^*}}\E_{a\sim\pi^*(\cdot\vert s)}\big[A^{\pi_{\theta}}(s,a)\big]
\end{align}
Moreover, we obtain the following from Assumption \ref{assump:function_approx_error}.
\begin{align}
\label{eq:gd_avg_2}
    \epsilon_{\mathrm{bias}} &\geq \E_{s\sim d^{\pi^*}}\E_{a\sim\pi^*(\cdot\vert s)}\bigg[\bigg(A^{\pi_\theta}(s,a)-\nabla_\theta\log\pi_{\theta}(a\vert s)\cdot\omega^*_{\theta}\bigg)^2\bigg] \nonumber\\
    &\geq \bigg(\E_{s\sim d^{\pi^*}}\E_{a\sim\pi^*(\cdot\vert s)}\bigg[A^{\pi_\theta}(s,a)-\nabla_\theta\log\pi_{\theta}(a\vert s)\cdot\omega^*_{\theta}\bigg]\bigg)^2
\end{align}
From \eqref{eq:gd_avg_1} and \eqref{eq:gd_avg_2}, one can deduce
\begin{align}
        \sqrt{\epsilon_{\mathrm{bias}}} &\geq \E_{s\sim d^{\pi^*}}\E_{a\sim\pi^*(\cdot\vert s)}\bigg[A^{\pi_\theta}(s,a)-\nabla_\theta\log\pi_{\theta}(a\vert s)\cdot\omega^*_{\theta}\bigg] \nonumber\\
        &= (J^*-J(\theta))-\E_{s\sim d^{\pi^*}}\E_{a\sim\pi^*(\cdot\vert s)}\bigg[\nabla_\theta\log\pi_{\theta}(a\vert s)\cdot\omega^*_{\theta}\bigg]
\end{align}

Rearranging the above inequality yields
\begin{align}
        J^*-J(\theta) &\leq \sqrt{\epsilon_{\mathrm{bias}}}+\E_{s\sim d^{\pi^*}}\E_{a\sim\pi^*(\cdot\vert s)}\bigg[\nabla_\theta\log\pi_{\theta}(a\vert s)\cdot\omega^*_{\theta}\bigg]\nonumber \\
        &\leq \sqrt{\epsilon_{\mathrm{bias}}}+\E_{s\sim d^{\pi^*}}\E_{a\sim\pi^*(\cdot\vert s)}\bigg[\|\nabla_\theta\log\pi_{\theta}(a\vert s)\| \|\omega^*_{\theta}\|\bigg] \nonumber\\
        &\leq \sqrt{\epsilon_{\mathrm{bias}}}+G\|\omega^*_{\theta}\|
\end{align}
where the last inequality follows from Assumption \ref{assump:score_func_bounds}. Note that
\begin{align}
    \|\omega^*_{\theta}\| &= \|F(\theta)^{\dagger} \nabla_{\theta} J(\theta)\| \overset{(a)}{\leq} \mu^{-1}\|\nabla_{\theta} J(\theta)\|\nonumber
\end{align}
where $(a)$ is a consequence of Assumption \ref{assump:FND_policy}. It follows that
\begin{align}
        J^*-J(\theta) &\leq \sqrt{\epsilon_{\mathrm{bias}}}+\frac{G}{\mu}\|\nabla_{\theta} J(\theta)\|
\end{align}   
This concludes the proof of Lemma \ref{lem:gradient_domination}.
\end{proof}

\end{document}